\newif\ifanon
\crefname{equation}{}{} 
\definecolor{attentionblue}{RGB}{0,102,204}
\definecolor{attentiongray}{RGB}{240,240,240}
\newcommand{\attentionhead}[2]{%
  \tikz[baseline=(X.base)]{%
    \node[
      draw=attentionblue,
      fill=attentiongray,
      rounded corners=2pt,
      inner sep=1pt,
      text=attentionblue,
      font=\small\sffamily
    ] (X) {#1:#2};
  }%
}
\newtcbox{\tokenbox}{%
  fontupper=\ttfamily,
  colback=gray!10,
  boxrule=0pt,             
  arc=2pt,
  boxsep=0pt,
  frame empty,
  left=2pt,
  right=2pt,
  top=2pt,                 
  bottom=2pt,              
  nobeforeafter,
  valign=center,
  baseline,
  tcbox raise base,
  verbatim,                
  before upper={\vphantom{Äg}},
}
\newtcbox{\tokenboxline}{%
  fontupper=\ttfamily,
  colback=gray!10,
  boxrule=0.5pt,           
  arc=2pt,
  boxsep=0pt,
  left=2pt,
  right=2pt,
  top=2pt,                 
  bottom=2pt,              
  nobeforeafter,
  valign=center,
  baseline,
  tcbox raise base,
  verbatim,
  before upper={\vphantom{Äg}},
}
\newcommand{\wstar}{{w^*}}
\NewDocumentCommand{\hatlambda}{O{}}{\hat{\lambda}(#1)}
\NewDocumentCommand{\Ln}{O{}}{L_n(#1)}
\newcommand{\E}{\mathbb{E}}
\newcommand{\Elocaltempered}{\E_{w|\wstar, \gamma}^\beta}
\newcommand{\pilesub}[1]{\textsc{#1}}
\NewDocumentCommand{\task}{O{}}{\mathbf{t}_{#1}}
\NewDocumentCommand{\residactvec}{O{(l)} O{i}}{\mathbf{z}^{#1}_{#2}}
\NewDocumentCommand{\residactcomp}{O{(l))} O{i} O{j}}{z^{#1}_{#2,#3}}
\newcommand{\batchsize}{m}
\NewDocumentCommand{\batchloss}{O{}}{L_{\batchsize}^{#1}}
\NewDocumentCommand{\lrt}{O{}}{\operatorname{RT}{#1}}
\renewcommand{\paragraph}[1]{\textbf{#1\ \ \ }}
\theoremstyle{plain}
\newtheorem{theorem}{Theorem}[section]
\newtheorem{lemma}[theorem]{Lemma}
\theoremstyle{definition}
\newtheorem{definition}[theorem]{Definition}
\theoremstyle{remark}
\newtheorem{remark}[theorem]{Remark}
\title{%
    Structural Inference: Interpreting Small Language Models with Susceptibilities
}
\renewcommand{\thefootnote}{\fnsymbol{footnote}}
\author{Garrett Baker\footnotemark[1]\textsuperscript{\hspace{5pt}=} \And
        George Wang\footnotemark[1]\;\textsuperscript{\hspace{2pt}=} \And
        Jesse Hoogland\footnotemark[1] \And
        Daniel Murfet\footnotemark[1]
}
\begin{document}
\maketitle
\footnotetext[1]{Timaeus}
{\renewcommand{\thefootnote}{=}\footnotetext{Equal contribution}}

\begin{abstract}
We develop a linear response framework for interpretability that treats a neural network as a Bayesian statistical mechanical system. A small perturbation of the data distribution, for example shifting the Pile toward GitHub or legal text, induces a first-order change in the posterior expectation of an observable localized on a chosen component of the network. The resulting \emph{susceptibility} can be estimated efficiently with local SGLD samples and factorizes into signed, per-token contributions that serve as attribution scores. We combine these susceptibilities into a response matrix whose low-rank structure separates functional modules such as multigram and induction heads in a 3M-parameter transformer. 
\end{abstract}


\section{Introduction}

The microscopic organization enabling the complex behaviors of neural networks remains poorly understood. This paper introduces \emph{susceptibilities}, a novel interpretability framework rooted in statistical physics, to probe this internal structure. We treat a neural network as a Bayesian statistical mechanical system \citep{balasubramanian1997statistical,lamont2019correspondence} where an infinitesimal, controlled perturbation to the data distribution induces a first-order linear response in the expected behavior of a chosen network component, such as an attention head. This response, the susceptibility, quantifies the component's sensitivity to the specific data shift (\cref{section:theory}) and is further related to geometry and generalization within singular learning theory \cite{watanabe2007almost}.


\paragraph{Contributions.} Our main contribution is the development of a new interpretability paradigm derived from Bayesian learning theory and statistical physics. In particular:
\begin{itemize}
    \item \textbf{We derive the theoretical framework of \emph{susceptibilities}} for quantifying how model components respond to changes in the data distribution. This provides a principled link between structure in data and model internals. 
    \item \textbf{We introduce the methodology of \emph{structural inference}} for discovering internal structure in neural networks and attributing that structure to patterns in data. This yields new insight into how models balance expression and suppression.
\end{itemize}


Applying this methodology to a 3M-parameter transformer trained on the Pile (\cref{section:trajectory_pca}) we demonstrate that attention heads exhibit meaningfully differentiated susceptibilities to various data shifts. Our structural inference approach successfully distinguishes and separates known functional circuits from \citet{hoogland2024developmental},\citet{wang2024differentiationspecializationattentionheads} such as the induction circuit. This work thus provides a principled and empirically validated tool for dissecting the functional organization of neural networks, offering insights that align with and extend prior mechanistic studies.


\section{Theory}\label{section:theory}


\subsection{Setup}\label{sec:setup}

We consider the model-truth-prior triplet $(p(y|x,w), q(x,y), \varphi(w) )$ where $q(x,y) = q(y|x)q(x)$ is the true data-generating mechanism, $p(y|x,w)$ is the posited model of the conditional distribution with parameter $w \in W \subset \mathbb{R}^d$ representing the neural network weights, and $\varphi$ is a prior on $w$. We assume that $q(x,y) > 0$ and $p(y|x,w) > 0$ for all $(x,y) \in X \times Y$ and $w \in W$. We assume $W$ is compact. Let the sample spaces $X, Y$ be measure spaces so that $q(x,y)$ is a probability density with respect to the measure $\mu$ on $X \times Y$. In our intended application $X$ is the set of sequences of lengths $1 \le k < K$ in some alphabet $\Sigma$ of tokens and $Y = \Sigma$, with the counting measure.

Given a dataset $D_n = \{(x_i,y_i)\}_{i=1}^n$, drawn i.i.d. from $q(x,y)$, we define the sample negative log-likelihood function as $L_n(w) = -\frac{1}{n}\sum_{i=1}^n \log p(y_i|x_i,w)$ and its theoretical counterpart, the average negative log-likelihood or \emph{population loss}, as $L(w) = -\mathbb{E}_{q(x,y)} \log p(y|x,w)$. The \emph{annealed posterior} at inverse temperature $\beta > 0$ and sample size $n$ is
\begin{equation}
p_n^\beta(w) = \frac{1}{Z^\beta_n} \exp\{ -n\beta L(w) \} \varphi(w)\,\quad \text{where} \quad
Z_n^\beta = \int \exp\{-n\beta L(w)\}\varphi(w)\,dw.
\end{equation}

\subsection{Variation in the truth}\label{section:vary_truth}

We consider a one-parameter variation of the data distribution of the following form. Let $H \subseteq \mathbb{R}$ be some open interval containing $0$ and for $h \in H$ let $q_h(x,y)$ be a probability density with respect to the measure $\mu$ on $X \times Y$. We assume that $h \mapsto q_h$ is differentiable as a map from $H$ to $L^1(X \times Y, \mu)$.
We write $dx dy$ for $d\mu(x,y)$. The average negative log-likelihood for $q_h$ is $L^h(w) = -\mathbb{E}_{q_h(x,y)} \log p(y|x,w)$ and the annealed posterior is
\begin{equation}
p_n^{\beta}(w|h) = \frac{1}{Z^{\beta,h}_n} \exp\{ -n\beta L^h(w) \} \varphi(w)\,\quad \text{where} \quad
Z_n^{\beta,h} = \int \exp\{-n\beta L^h(w)\}\varphi(w)\,dw.
\end{equation}
The simplest kind of one-parameter variation in the data distribution is a mixture. Let $q'$ be another probability density on $X \times Y$. Given $h \in [0,1]$ we define
\begin{equation}\label{eq:data_variation}
q_h \;=\; (1-h)\,q \;+\; h \,q'\,.
\end{equation}
Then the (one-sided) derivative exists and is the function $g_h(x,y) = q'(x,y) - q(x,y)$.

\subsection{Susceptibilities}\label{section:suscep}

A \emph{response function} measures how some observable changes under a small external perturbation or variation of a controlling parameter. In our Bayesian setting, suppose we pick an observable (a function or generalized function \citep{gelfandshilov} on parameter space) $\phi(w)$. We then consider its (annealed) posterior expectation, where we denote by $h$ some hyperparameter
\begin{equation}\label{eq:expectation_phi}
\langle \phi \rangle_{\beta,h} 
= \int \phi(w) p_n^\beta(w|h) dw.
\end{equation}
If we drop $h$ from the notation it means we are computing expectations with respect to the \emph{unperturbed} (annealed) posterior (i.e. $h = 0$). A small change in $h$ induces a shift in the posterior as a distribution, and some aspect of this shift is captured by the shift in the expectation value $\langle \phi \rangle_{\beta,h}$.

\begin{definition} The \emph{susceptibility} of $\phi$ to the perturbation $q_h$ at inverse temperature $\beta$ is
\begin{equation}
\chi
=
\frac{1}{n\beta} \left.\frac{\partial}{\partial h}\,\langle \phi \rangle_{\beta, h}\right|_{h=0}\,.
\end{equation}
\end{definition}

\begin{lemma}\label{lemma:persamp_suscep}
The susceptibility for an observable $\phi$ is computed by
\begin{equation}
\chi = - \operatorname{Cov}_\beta\big[ \phi, \Delta L \big]
\end{equation}
where $\operatorname{Cov}_\beta\big[ \phi, \Delta L \big] = \big\langle \phi \, \Delta L \big\rangle_\beta - \big\langle \phi \big\rangle_\beta \big\langle \Delta L \big\rangle_\beta$ and $\Delta L = \frac{\partial L^h}{\partial h} \Bigr|_{h=0}$.
\end{lemma}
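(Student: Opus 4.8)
The plan is to reduce the statement to the classical linear-response identity for Gibbs measures: differentiating a Boltzmann-type expectation in an external parameter pulls down the derivative of the exponent and produces a covariance, with the partition-function term supplying the mean subtraction. Concretely, I would start from $\langle \phi \rangle_{\beta,h} = \int \phi(w)\, p_n^\beta(w|h)\, dw$ and differentiate in $h$ under the integral sign. Since $Z_n^{\beta,h} = \int e^{-n\beta L^h(w)}\varphi(w)\,dw$, one has $\partial_h \log Z_n^{\beta,h} = -n\beta\, \langle \partial_h L^h \rangle_{\beta,h}$, and writing $\log p_n^\beta(w|h) = -n\beta L^h(w) + \log \varphi(w) - \log Z_n^{\beta,h}$ gives
\begin{equation*}
\partial_h\, p_n^\beta(w|h) = -n\beta\big( \partial_h L^h(w) - \langle \partial_h L^h \rangle_{\beta,h} \big)\, p_n^\beta(w|h).
\end{equation*}
Multiplying by $\phi(w)$ and integrating, the mean-subtraction turns the integral into a covariance: $\partial_h \langle \phi \rangle_{\beta,h} = -n\beta\, \operatorname{Cov}_{\beta,h}[\phi, \partial_h L^h] + \langle \partial_h \phi \rangle_{\beta,h}$, where the final term is present only when the observable is allowed to depend on $h$. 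Setting $h = 0$ and multiplying by $\tfrac{1}{n\beta}$ yields $\chi = -\operatorname{Cov}_\beta[\phi, \Delta L] + \tfrac{1}{n\beta}\langle \partial_h \phi \rangle_\beta$; since the residual carries an explicit factor $1/n$ it is $O(1/n)$ — and it vanishes identically when $\phi$ is $h$-independent. This is exactly the claimed formula.

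A secondary step is to justify that $\Delta L$ is a well-defined function on $W$ and to pin down what it is. Since $L^h(w) = -\mathbb{E}_{q_h(x,y)} \log p(y|x,w)$, the $L^1$-differentiability hypothesis \eqref{eq:derivative_qh} lets me differentiate through the $q_h$-expectation, giving $\partial_h L^h(w) = -\int g_h(x,y) \log p(y|x,w)\, dx\, dy$, so $\Delta L(w) = -\int g_0(x,y)\log p(y|x,w)\,dx\,dy$; for the mixture \eqref{eq:data_variation} this is $\Delta L = L' - L$ with $L'$ the population loss under $q'$. By compactness of $W$ and positivity and continuity of $p$, $\Delta L$ is bounded, which is what makes $\operatorname{Cov}_\beta[\phi, \Delta L]$ finite for bounded observables.

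The main obstacle is purely the analytic bookkeeping around the two interchanges of differentiation and integration: $\partial_h$ with $\int dx\, dy$ in the definition of $\partial_h L^h$, and $\partial_h$ with $\int dw$ in differentiating both $\langle \phi \rangle_{\beta,h}$ and $Z_n^{\beta,h}$. The first follows from \eqref{eq:derivative_qh} together with a uniform-in-$h$ bound on $|\log p(y|x,w)|$, which holds in the intended token setting where $X \times Y$ is effectively finite; the second follows from compactness of $W$, continuity of $w \mapsto L^h(w)$ and $w \mapsto \phi(w)$, and integrability of $\varphi$, which supply a dominating function on a neighbourhood of $h = 0$ so that dominated convergence applies and the derivatives may be taken under the integral. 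Once this is in place the identity is exact, and the $O(1/n)$ in the lemma is simply the $\tfrac{1}{n\beta}\langle \partial_h \phi \rangle_\beta$ remainder — there is no hidden asymptotic expansion here, in contrast to the free-energy asymptotics used elsewhere in singular learning theory.
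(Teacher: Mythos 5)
Your proposal is correct and follows essentially the same route as the paper's proof: differentiating the quenched posterior expectation in $h$, using the derivative of $1/Z_n^{\beta,h}$ to supply the mean-subtraction, and identifying the $O(1/n)$ remainder with the $\tfrac{1}{n\beta}\langle \partial_h\phi\rangle_\beta$ term arising when the observable depends on $h$. The extra remarks on identifying $\Delta L$ (the paper's Lemma \ref{lemma:deltaLisdiff}) and on justifying differentiation under the integral are consistent additions rather than a different argument.
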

\begin{proof}
See \cref{section:proofs}.
\end{proof}

\begin{lemma}\label{lemma:deltaLisdiff} For the \textit{(data-mixture) susceptibility}, $q_h$ is the variation in \eqref{eq:data_variation}, and we have $\Delta L(w) = L^1(w) - L(w)$ where $L^1(w)$ and $L(w)$ are $L^h(w)$ with $h = 1$ and $h = 0$, respectively.
\end{lemma}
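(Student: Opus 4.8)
The plan is to reduce the statement to the elementary fact that, for the mixture variation \eqref{eq:data_variation}, the map $h \mapsto L^h(w)$ is \emph{affine} in $h$, so its derivative is constant and trivially identifiable. First I would substitute $q_h = (1-h)q + h q'$ into the definition $L^h(w) = -\mathbb{E}_{q_h(x,y)}\log p(y|x,w) = -\int_{X\times Y} q_h(x,y)\log p(y|x,w)\,dx\,dy$ and use linearity of the integral to obtain $L^h(w) = (1-h)\,L^0(w) + h\,L^1(w)$, where $L^0(w) = -\int q\log p\,dx\,dy$ and $L^1(w) = -\int q'\log p\,dx\,dy$. It then remains only to match notation: since $q_0 = (1-0)q + 0\cdot q' = q$ we have $L^0(w) = L(w)$, and since $q_1 = (1-1)q + 1\cdot q' = q'$ we have $L^1(w) = L^h(w)\big|_{h=1}$, as the lemma asserts. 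Rewriting the affine identity as $L^h(w) = L(w) + h\big(L^1(w) - L(w)\big)$ and differentiating in $h$ gives $\Delta L(w) = \tfrac{\partial L^h}{\partial h}\big|_{h=0} = L^1(w) - L(w)$, where at the endpoint $h=0$ the derivative is read as the right-sided derivative, consistent with the one-sided conventions fixed in \cref{section:vary_truth}.

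To make the exchange of the $h$-derivative with the $(x,y)$-integral rigorous rather than lean on linearity of integration alone, I would alternatively argue directly from the $L^1$-differentiability hypothesis \eqref{eq:derivative_qh}. As recorded immediately before the lemma, for the mixture the derivative $\tfrac{\partial}{\partial h}q_h$ exists in $L^1(X\times Y,\mu)$ and equals $g_h(x,y) = q'(x,y) - q(x,y)$. I would then write the difference quotient $\tfrac{1}{\delta}\big(L^{h+\delta}(w) - L^h(w)\big) = -\int \tfrac{1}{\delta}\big(q_{h+\delta}-q_h\big)\log p(y|x,w)\,dx\,dy$ and bound its distance from $-\int (q'-q)\log p\,dx\,dy$ by $\big\|\log p(\,\cdot\,,\,\cdot\,,w)\big\|_\infty \cdot \big\|\tfrac{1}{\delta}(q_{h+\delta}-q_h) - (q'-q)\big\|_{L^1(\mu)}$, which tends to $0$ as $\delta\to 0$ by \eqref{eq:derivative_qh}. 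Evaluating at $h=0$ recovers $\Delta L(w) = -\int (q'-q)\log p\,dx\,dy = L^1(w)-L(w)$.

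The computation itself is routine; the one point that needs care is the boundedness used to justify the interchange (equivalently, the finiteness of $L(w)$ and $L^1(w)$ that makes the affine identity meaningful). In the discrete token setting of \cref{sec:setup}, where $X\times Y$ is finite, $\log p(y|x,w)$ is automatically bounded in $(x,y)$ for each fixed $w$, so I expect no genuine obstacle. If one wanted the statement in full generality, the hard part would be upgrading this to a domination/uniform-integrability condition on $\log p$, invoking the standing assumptions $q,q'>0$, $p>0$, compactness of $W$, and continuity of $p$; but for the intended application this is unnecessary.
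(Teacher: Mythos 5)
Your proposal is correct and follows essentially the same route as the paper: the paper's proof also differentiates $L^h(w) = -\int q_h \log p\,dx\,dy$ in $h$, interchanges the derivative with the integral, and substitutes $\tfrac{\partial}{\partial h}q_h = q' - q$ from the $L^1$-derivative computed in \cref{section:vary_truth}. Your observation that $L^h$ is affine in $h$, and your explicit bound $\|\log p\|_\infty\,\|\tfrac{1}{\delta}(q_{h+\delta}-q_h)-(q'-q)\|_{L^1}$ justifying the interchange (which the paper leaves implicit), are sound refinements of the same argument rather than a different one.
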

\begin{proof}
See \cref{section:proofs}.
\end{proof}

From the lemma we learn that the susceptibility can be defined in terms of the unperturbed posterior. Next we define the primary class of observables to be considered in this paper. We consider models $p(y|x,w)$ parametrized by neural networks with weights $w \in W$. By a \emph{component}, we mean a subset $C$ of these weights (e.g., an attention head in a transformer). Formally, a component is a product decomposition $W = U \times C$ with $U$ the parameters not in the component. 

\begin{definition}\label{defn:phi_C} Given a component $C$ with $W = U \times C$ and a chosen parameter $w^* = (u^*, v^*)$ write $w = (u,v)$ and define
\begin{equation}\label{eq:centered_deltaloss}
\phi_C(w) = \delta(u - u^*) \big[ L(w) - L(w^*) \big]\,.
\end{equation}
\end{definition}

The motivation for using the observable $\phi_C$ in the setting of susceptibilities is that its expectation value is related to refined learning coefficients introduced in \citet{wang2024differentiationspecializationattentionheads}. This means that we can think of the susceptibility associated to this observable as a kind of ``rate of change'' of the local learning coefficient, a geometric measure of model complexity (\cref{appendix:suscep_as_derivative}). This provides a principled link between susceptibilities, generalization error in Bayesian statistics and changes in the geometry of the loss landscape with changes in the data distribution \citep{watanabe2009algebraic}.


\begin{definition}\label{defn:per_token_susceptibility}
The \emph{per-sample susceptibility} of $C$ for $(x,y) \in X \times Y$ is 
\begin{equation}\label{eq:per_sample_suscep}
\chi^C_{(x,y)} := - \mathrm{Cov}_\beta\big( \phi_C, \ell_{(x, y)} - L \big)\,, \quad \ell_{(x,y)}(w) = - \log p(y|x,w)\,.
\end{equation}
\end{definition}

The notation is justified by the fact that the expectation of $\chi^C_{(x,y)}$ under the probe distribution $q'(x,y)$ is the susceptibility $\chi$ of $\phi_C$ (see \cref{section:proofs}). In our applications $x$ is a sequence of tokens, $y$ is a single token and so we also refer to $\chi^C_{(x,y)}$ as the \emph{per-token susceptibility}. 

\section{Methodology}\label{section:methodology}

In condensed matter physics, one way to define \textit{internal structure} is that it is what causes the patterns that manifest in the responses of a system to external perturbations, as measured by natural observables \cite[\S 7]{altland2010condensed}. In the previous section, we adapted this perspective to neural networks: changes in data are the external fields or perturbations we use to probe the system, changes in component-wise loss $\phi_C$ are the observables, and the susceptibility is the response. 

In \cref{section:local_suscep}, we
develop the \textit{local susceptibility}, which enables us to practically estimate susceptibility matrices for individual neural network checkpoints. We then discuss how to interpret positive and negative susceptibility values as \textit{suppression} and \textit{expression}, respectively (\cref{section:express_suppress}). Finally, in \cref{section:structural_inference}, we introduce \textit{structural inference}, a framework for identifying the patterns that manifest across a set of susceptibilities or, equivalently, for discovering internal structure. 

\subsection{Estimating the local susceptibility}\label{section:estimating_chi}
\label{section:local_suscep}
In practice, we are concerned with the properties of specific neural network checkpoints trained via stochastic optimizers, rather than ensembles drawn from the full Bayesian posterior. Moreover, sampling from global posterior is computationally infeasible. To circumvent these issues, we introduce \textit{local} susceptibilities, which make it possible to estimate susceptibilities for individual models with samplers like Stochastic Gradient Langevin Dynamics (SGLD; \citealt{wellingBayesianLearningStochastic2011}).


We ``localize'' the posterior by replacing the prior $\varphi$ with a Gaussian prior centered at \( w^* \), a local minimizer of \( L(w) \). This ensures that sampling remains in a small neighborhood of \(w^*\). We define a \textit{local Gibbs posterior} \citep{bissiri2016general} with inverse temperature \(\beta > 0\),
\begin{equation}\label{eq:tempered_posterior_defn_llc}
    p(w; \wstar, \beta, \gamma)
    \propto
    \exp \left \{
         -n \beta L_n(w) - \frac{\gamma}{2} ||w-\wstar||^2_2
    \right \}\,,
\end{equation}
as well as a \textit{local annealed posterior},
\begin{equation}\label{eq:tempered_posterior}
    p(w; w^*, \beta, \gamma, h)
    \propto
    \exp \left \{
         -n \beta L^h(w) - \frac{\gamma}{2} ||w - w^*||^2_2
    \right \}\,.
\end{equation}
Given an observable \(\phi\), we write the associated \textit{local (annealed) expectation} as $\langle \phi; w^* \rangle_{\beta,h}$ which is the integral against this distribution $\int \phi(w) p(w; \wstar, \beta, \gamma, h) dw$.
    
\begin{definition}\label{defn:local_suscep}
We define the \emph{local susceptibility} at inverse temperature $\beta$ to be
\begin{equation}\label{eq:local_suscep_defn}
\chi(w^*) = 
\frac{1}{n\beta} \left.\frac{\partial}{\partial h}\,\langle \phi ; w^*\rangle_{\beta, h}\right|_{h=0}\,.
\end{equation}
\end{definition}


By Lemma \ref{lemma:persamp_suscep} we can write
\begin{equation}\label{eq:suscep_formula_est}
\chi(w^*) = - \big\langle \phi \Delta L ; w^* \big\rangle_\beta + \big\langle \phi ; w^* \big\rangle_\beta \big\langle \Delta L ; w^* \big\rangle_\beta.
\end{equation}
We fix a mixture percentage $\delta h \in (0,1)$ and, as in Remark \ref{remark:finite_diff_smalldh}, approximate $\Delta L$ by
\begin{equation}\label{eq:missing_dh_wonder}
\Delta L_n (w) := L^{\delta h}_n(w) - L_n(w)
\end{equation}
for a dataset size $n$. We also replace the expectations in \eqref{eq:suscep_formula_est} over the annealed posterior \eqref{eq:tempered_posterior} with expectations over the ordinary localized posterior \eqref{eq:tempered_posterior_defn_llc} involving $L_n$. We obtain an estimator for the per-token susceptibility $\hat\chi_{(x,y)}$ by replacing $L^{\delta h}_n$ with $\ell_{(x,y)}$ (see \cref{section:method_per_token_susceptibilities}).

We compute $L_n(w)$ by drawing a sample from the original data distribution $q$, and $L^{\delta h}_n(w)$ by drawing a set of samples of size $n$, consisting of a mixture of samples from the unperturbed data distribution and samples from the perturbed data distribution (the mixture being controlled by $\delta h$). Given approximate samples $\{ w_t \}_{t=1}^r$ from the (unperturbed and un-annealed) localized posterior with parameter $\beta$ we compute our estimate of the susceptibility to be
\begin{equation}\label{eq:chiwstar}
\hat{\chi}(w^*) := - \frac{1}{r} \sum_{t=1}^r \Big[ \phi(w_t) \Delta L_n(w_t) \Big] + \frac{1}{r^2} \Big[ \sum_{t=1}^r \phi(w_t) \Big] \Big[ \sum_{t=1}^r \Delta L_n(w_t) \Big]\,.
\end{equation}
We further average this quantity over multiple chains $\{ w_t \}_{t=1}^r$. For the precise expression that we implement in code in our experiments, see \cref{section:detailed_local_sus}. 

\subsection{Interpreting susceptibilities as \textit{expression} and \textit{suppression}}\label{section:express_suppress}

\begin{figure}[tpb]
    \centering
    \includegraphics[width=\linewidth]{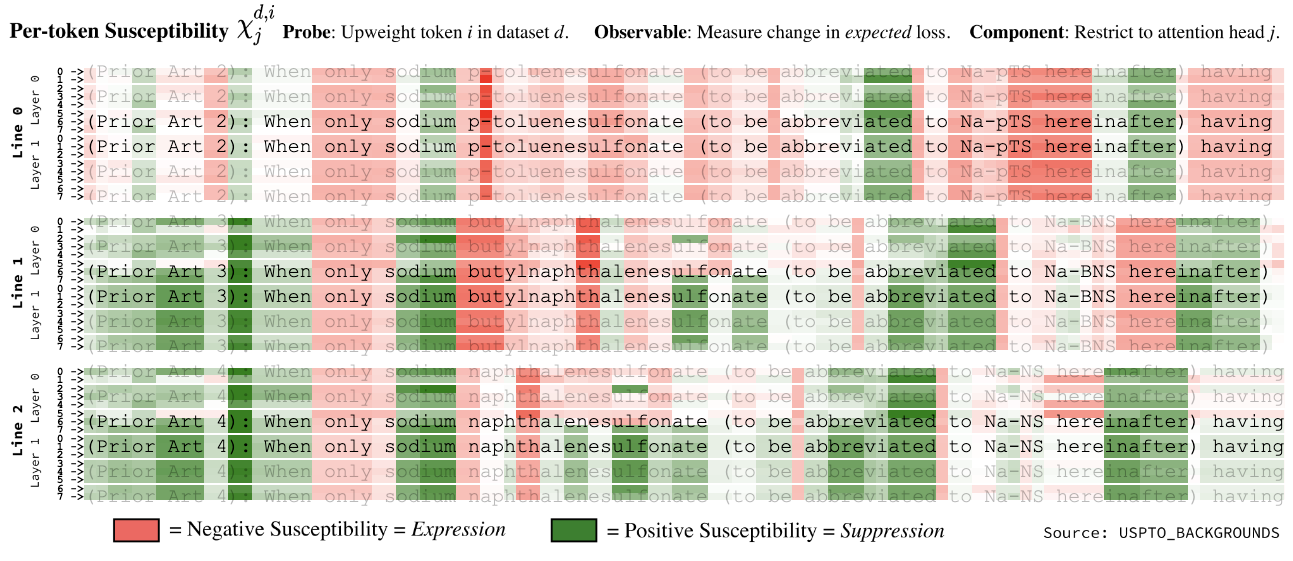}
    \caption{\textbf{Per-token susceptibilities reveal patterns of expression and suppression.} In this context, the beginning of three lines are shown. Each line is repeated six times, and each repeat is divided into three color bars. These bars correspond to individual heads \protect\attentionhead{0}{0}-\protect\attentionhead{0}{7} in layer $0$ and \protect\attentionhead{1}{0}-\protect\attentionhead{1}{7} in layer $1$, as shown on the left. The susceptibility values are standardized for each head, with solid red representing a z-score of -3 and solid green representing a z-score of +3. We see that the susceptibility increases for some heads on repeated token sequences like \tokenbox{Prior}\tokenbox{~Art}. These positive susceptibilities are examples of the suppression of induction patterns.}
    \label{fig:bruise_1}
\end{figure}



The per-token susceptibility is defined as the first-order response of an expectation value to a change in the data distribution and as such, it is intrinsically meaningful. However there is a more concrete interpretation which is helpful in interpreting the empirical results in this paper.

Note that by linearity of the covariance
\begin{align*}
\chi^C_{(x,y)} &= - \mathrm{Cov}_\beta\Bigl[ \delta(u - u^*) \big[ L(w) - L(w^*) \big], \ell_{(x, y)}(w) - L(w) \Bigr]\\
&= - \underset{\psi^C_{(x,y)}}{\underbrace{\mathrm{Cov}_\beta\Bigl[ \delta(u - u^*) L(w), \ell_{(x,y)}(w) \Bigr]}} + \underset{\text{Does not depend on $x,y$}}{\underbrace{\mathrm{Cov}_\beta\Bigl[ \delta(u-u^*) L(w), L(w) \Bigr]}}
\end{align*}
we get a decomposition of the per-token susceptibility into a part $\psi^C_{(x,y)}$ which depends on both $C$ and the token sequence $xy$ and a part that depends only on $C$. In our analysis we will form matrices of per-token susceptibilities and then standardize them by subtracting the mean across rows which are indexed by pairs $(x,y)$. This process of standardization cancels off the second term above and this means that our analysis ultimately depends only on the first summand $\psi^C_{(x,y)}$.

We call this the \emph{standardized susceptibility} and now offer an interpretation of the sign of this term.


The standardized susceptibility measures how $\delta(u-u^*) L(w)$ and $\ell_{(x,y)}(w)$ covary when we perturb $w$ away from $w^*$, with perturbations being more likely according to their probability in the annealed posterior (that is, perturbations which increase the population loss $L(w)$ are exponentially suppressed). A variation in $w$ which only changes $L(w)$ by a small amount may nonetheless increase $\ell_{(x,y)}(w)$ for some tokens and lower it for others, and this correlation is what we care about.

\textbf{Negative standardized susceptibility} means that variations $w^* \rightarrow w$ which increase $\ell_{(x,y)}$ (that is, make $y$ less probable in context $x$) tend to be perturbations in the weights of $C$ which increase the loss overall. This makes sense if $(x,y)$ follows a pattern that $C$ is involved in predicting, mechanistically. Thus we associate negative susceptibility with \emph{the component $C$ expressing that $y$ should follow $x$.}

\textbf{Positive standardized susceptibility} means that variations $w^* \rightarrow w$ which lower $\ell_{(x,y)}$ (that is, make $y$ more probable in context $x$) tend to be perturbations in the weights of $C$ which increase the loss overall. This makes sense if $(x,y)$ follows a pattern that $C$ is involved in ``opposing'', mechanistically. It could be predicting an alternative completion, or just decreasing the probability of this one. Thus we associate positive susceptibility with \emph{the component $C$ suppressing the continuation of $x$ by $y$.} 

\begin{center}
\begin{tabular}{@{}lcp{8.5cm}@{}}
\toprule
Sign of $\psi$ & & Interpretation \\
\midrule
$\psi^C_{xy} < 0$ & Expression & Variations in $C$ which decrease loss, also raise $p(y|x,w)$.\\
$\psi^C_{xy}> 0$ & Suppression  & Variations in $C$ which decrease loss, also lower $p(y|x,w)$.\\
\bottomrule
\end{tabular}
\end{center}

A visualization of the pattern of expression and suppression in natural language is given in \cref{fig:bruise_1}. For an analogy with magnetic susceptibility see  \cref{section:analogy_magnet}.

A natural question is how this susceptibility-centered notion of expression and suppression relates to existing work on prediction and suppression neurons in mechanistic interpretability, characterized through direct effects on logits \citep{gurnee2024universalneuronsgpt2language} \citep{lad2025remarkablerobustnessllmsstages}. Our preliminary investigation does not reveal any simple relation (\cref{section:comparison_direct_logit}).

\subsection{Susceptibilities for structural inference}\label{section:structural_inference}

Our main application of susceptibilities is to discovering internal structure in neural networks. In our approach we start with two inputs: a finite set of components $\{ C_j \}_{j \in \mathcal{H}}$ with associated observables $\phi_j := \phi_{C_j}$ (e.g. attention heads) and a finite set of data distributions $\{ q^d \}_{d \in \mathcal{D}}$ with associated variations from $q$ by taking mixtures (e.g. Pile subsets like \pilesub{github}). We refer to this as a \emph{probe set} and the individual data distributions as \emph{probe distributions}. The responses are by definition the entries of the $|\mathcal{D}| \times |\mathcal{H}|$ \emph{data matrix} or \emph{response matrix}
\begin{equation}\label{eq:response_matrix_X}
X = \big( \hat\chi^{C_j}_d \big)_{d \in \mathcal{D}, j \in \mathcal{H}}
\end{equation}
where $\hat\chi^{C_j}_d$ is the estimated susceptibility of observable $\phi_{C_j}$ with respect to the variation of $q$ in the direction of $q^d$. Alternatively we can perform the analysis at the token level using variations in the data distribution which upweight continuations $y$ in contexts $x$, where some number of samples $(x,y)$ are taken from each $q^d$ and we use a data matrix containing per-token susceptibilities for these samples (see \cref{section:trajectory_pca} below).

In this view internal structure in the network means the linear algebraic structure of $X$ (to first-order). In \cref{section:modes_probes}, we explain how for a particular formal definition of ``patterns'' in the data distribution (which we call modes, following \citet{modes2}) we can factor $X$ as a product $X = CP$ where $C$ consists of coupling constants between the probe distributions and modes, and $P$ couples observables and modes. 

This motivates applying PCA to the data matrix and identifying the principal components (left singular vectors multiplied by singular values) with patterns in the data and the loadings (right singular vectors) with structures in the model. We call this approach, where we infer the internal structure in models via data analysis of matrices of susceptibilities, \emph{structural inference}.

\section{Results}\label{section:results}

In this section, we apply our framework to study structure in a small language model. Our analysis automatically identifies attention heads' different functional roles, specializations, and higher-level structure like the ``induction circuit.'' 
Following \citet{elhage2021mathematical}, \citet{olsson2022context}, \citet{hoogland2024developmental} and \citet{wang2024differentiationspecializationattentionheads} we study two-layer attention-only (without MLP layers) transformers trained on next-token prediction on a subset of the Pile \citep{gao2020pile, xie2023dsir}. For architecture and training details see \cite{hoogland2024developmental}. Throughout, we refer to attention head $h \in \{0,\ldots,7\}$ in layer $l \in \{0,1\}$ by the notation \attentionhead{l}{h}. The empirical loss for the transformer is defined in the usual way (see \cref{section:defn_loss}). 

We denote by $\Sigma$ the set of tokens. For tokenization, we used a truncated variant 
of the GPT-2 tokenizer that cut the original vocabulary of 50,000 tokens down to 5,000 \citep{eldan2023tinystories}. We denote token sequences as follows: \tokenbox{~wa}\tokenbox{vel}\tokenbox{ength} is a sequence of three tokens. We set $X$ to be the disjoint union $X = \bigsqcup_{k=1}^{K-1} \Sigma^k$ and $Y = \Sigma$, both with the counting measure. We assume given a probability distribution $q_k(x,y)$ on $\Sigma^k \times \Sigma$ for $1 \le k < K$ and define $q(x,y) = \tfrac{1}{K-1} q_k(x,y)$ for $x \in X, y \in Y$ with $x$ of length $k$. The conditional distribution $p(y|x,w)$ is parametrized by the transformer in the usual way \citep{phuong2022formal}.

\subsection{Sanity checks}

We performed a range of checks to validate that our estimates of susceptibilities are non-trivial. While some of these are of independent interest, we have relegated them to appendices which we now summarize: (\cref{section:app_top_suscep}) Across heads and datasets there is a significant variation in the susceptibilities (so there is signal); (\cref{appendix:per-token-metric-comparisons}) The correlation of per-token susceptibilities with per-token losses and loss difference after ablation is very small (so the signal is not redundant), (\cref{section:per_token_suscep_variation_context}) Per-token susceptibilities tend to increase with context length especially for some layer $1$ heads (so the signal depends also on $x$, not just $y$); (\cref{section:per_token_explain}) Per-token susceptibilities explain variation in overall susceptibilities (which justifies their use); (\cref{section:bimodal_tokens}) The same token $y$ can appear with positive or negative susceptibility depending on context.

\subsection{Finding internal structure with susceptibilities}\label{section:trajectory_pca}

We adopt the hypothesis from \citet{wang2024differentiationspecializationattentionheads} that internal structure in models reflects structure in the data distribution \citep{harris1954distributional,rogers2004semantic,saxe2019mathematical}. We therefore organize our results around six kinds of patterns, by which we mean a property of a token sequence $xy$ (full definitions in \cref{section:defn_induction_pattern}; examples in \cref{fig:pattern_dist_heatmap}):
\begin{itemize}
\item \textbf{Word start:} a token that decodes to a space followed by lower or upper case letters.
\item \textbf{Word part:} a non-word-end token that decodes to a sequence of upper or lower case letters.
\item \textbf{Word end:} a token that decodes to a sequence of upper or lower case letters followed by a formatting token (see \cref{section:defn_induction_pattern} for an exhaustive list) delimiter or space.
\item \textbf{Induction pattern:} a sequence of tokens $uvUuv$ where $U$ is a sequence of any length, $u$, $v$ are individual tokens, and $uv$ is not a common bigram, $q(v|u) \leq 0.05$.
\item \textbf{Spacing:} a token made up of one or more characters from \tokenbox{~}, \tokenbox{\textbackslash n}, \tokenbox{\textbackslash t}, \tokenbox{\textbackslash r}, \tokenbox{\textbackslash f}.
\item \textbf{Right delimiter:} brackets and composite tokens, e.g. \tokenbox{)}, \tokenbox{~)}, \tokenbox{]}, \tokenbox{);}.
\end{itemize}

\begin{figure}[tp]
    \centering
    \includegraphics[width=\textwidth]{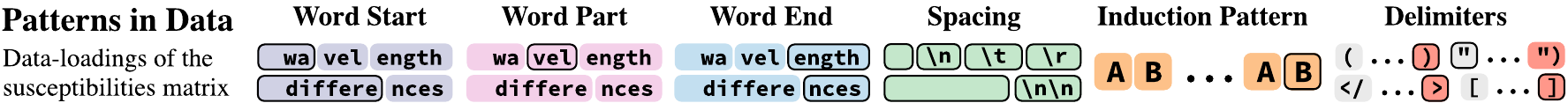}
    \includegraphics[width=\textwidth]{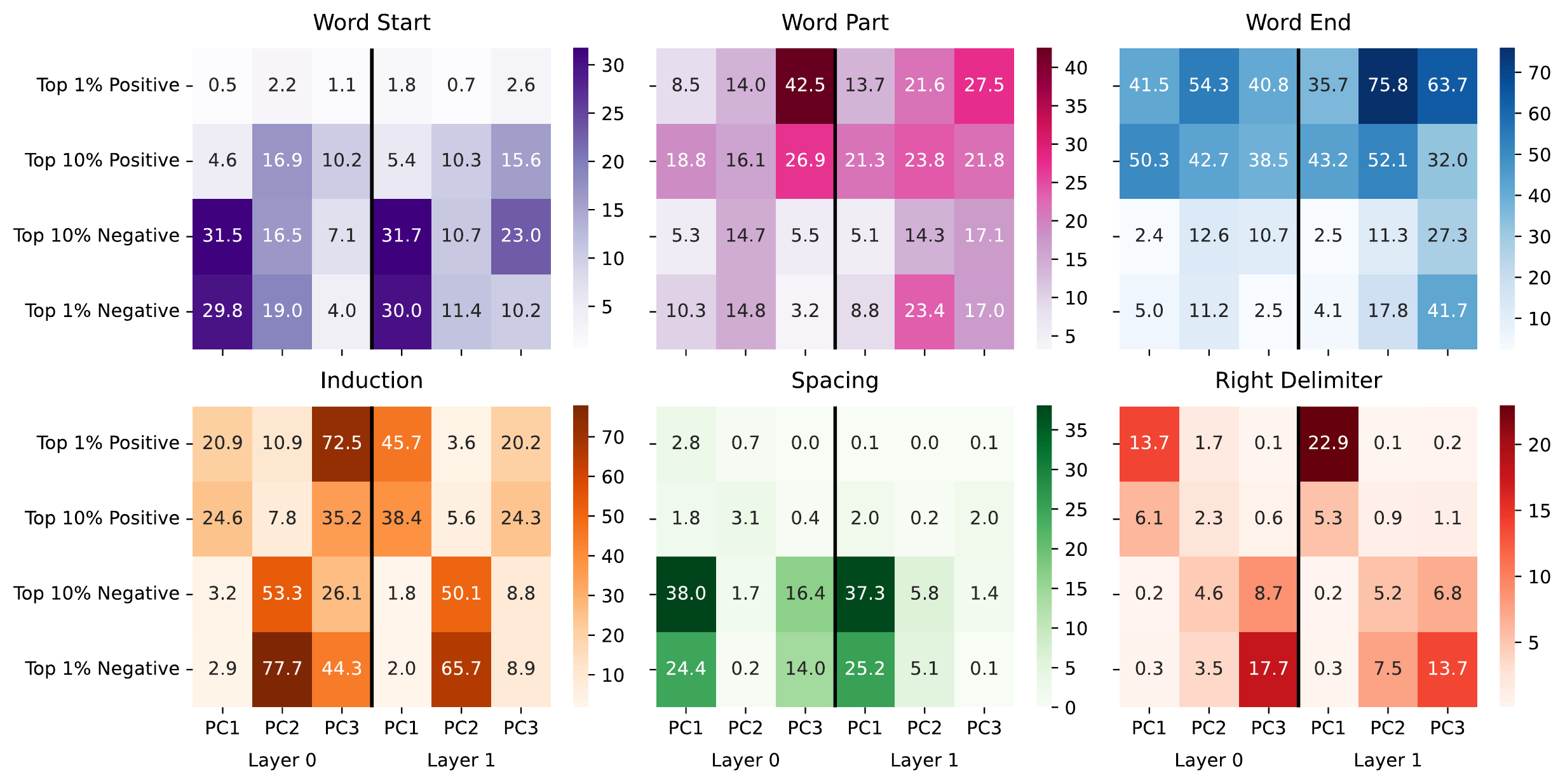}
    \caption{\textbf{Susceptibilities decompose into interpretable loadings over data.} Among the tokens with the coefficients of the largest magnitude in each principal component for the per-token susceptibility PCA, the percentage following each of the six patterns. The components are computed separately for each layer.}
    \label{fig:pattern_dist_heatmap}
\end{figure}

We randomly sample $N = 20000$ tokens from each dataset and form the data matrix
\[
X(l) = (\hat\chi^{C_j}_{d,i})_{d \in \mathcal{D}, 1 \le i \le N, j \in \mathcal{H}(l)}
\]
where $\mathcal{H}(l)$ indexes the heads in layer $l \in \{0,1\}$, $\mathcal{D}$ all datasets, and $\hat\chi^{C_j}_{d,i} = \hat\chi^{C_j}_{(x,y)}$ is the estimated per-token susceptibility for the observable $\phi_{C_j}$ where $(x,y)$ is the $i$th sampled token $y$ in context $x$ in dataset $d$. The data matrix has size $N|\mathcal{D}| \times |\mathcal{H}(l)|$. We perform PCA on this data matrix (including standardizing the columns) and find the top three PCs explaining resp. $95.34\%,1.83\%,0.73\%$ of the variance for layer $0$ and $99.14\%, 0.39\%, 0.11\%$ for layer $1$. We examine the principal components to find the top positive and negative per-token susceptibilities and find the frequency of our six patterns among these top tokens. The resulting ``data loadings'' are shown in \cref{fig:pattern_dist_heatmap}, and in \cref{section:defn_induction_pattern} we include the background frequencies. The ``component loadings'' are shown in \cref{fig:pertoken_pca_pc_loadings}. In the following we provide interpretations for each principal component.


\paragraph{PC1: Word segmentation} The first PC is uniform across the heads in layer $0$ and layer $1$ and explains the majority of the variance. The interpretation is visually apparent in \cref{fig:figure_1}: on most tokens the heads have similar susceptibilities which are broadly \emph{positive} for word endings, induction patterns and right delimiters and \emph{negative} for word starts and spaces. Note that, as with human infants, one of the first problems a language model must solve in acquiring language is word segmentation: identifying boundaries in a stream of tokens \citep{goldwater2009bayesian}. It is therefore notable that the strongest pattern in the per-token susceptibilities seems to be associated with the model having learned to segment the token stream into words. 

\begin{figure}[tp]
    \centering
     \includegraphics[width=\textwidth]{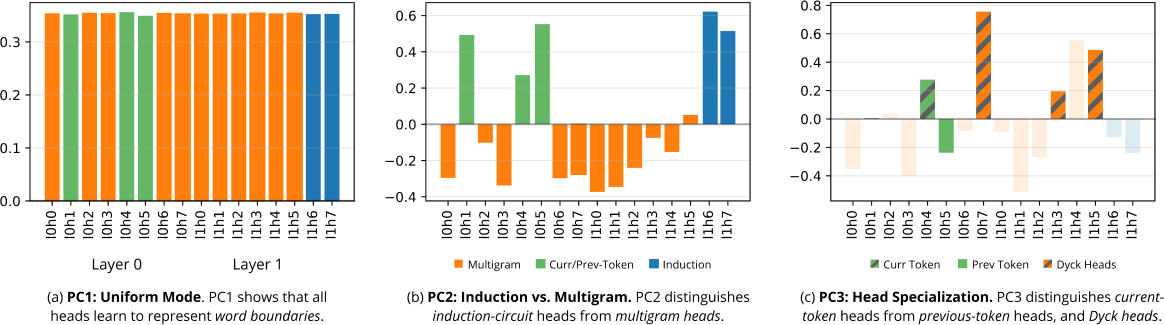}
    \caption{\textbf{Susceptibilities decompose into interpretable loadings over components.} The loadings of the top two principal components for per-token susceptibility PCA on attention heads.}
    \label{fig:pertoken_pca_pc_loadings}
\end{figure}

\paragraph{PC2: The induction circuit} In data, PC2 is dominated by the dichotomy between word endings (positive) and induction patterns (negative). In components, PC2 has positive loadings on all heads identified in \citet{wang2024differentiationspecializationattentionheads} as being part of the induction circuit for this model, and negative loadings on the remaining heads. The \emph{induction circuit} consists of the induction heads \attentionhead{1}{6}, \attentionhead{1}{7} composing with the previous-token heads \attentionhead{0}{1}, \attentionhead{0}{4} and current-token head \attentionhead{0}{5}. 

In \citet{wang2024differentiationspecializationattentionheads} it was found that the negative effect of ablating the heads \attentionhead{1}{0} - \attentionhead{1}{5} was mainly on prediction of $n$-grams and skip $n$-grams and were thus termed layer-$1$ (L1) \emph{multigram heads}. The interpretation of PC2 is therefore that the induction circuit tends to express induction patterns which the remaining heads (including the layer $1$ multigram heads) suppress, with the role of expression and suppression reversed for word endings. 

We note that the appearance of the induction circuit as the heads with positive loadings in PC2 holds across all four independently trained seeds that we considered (\cref{section:additional_seeds}). In this sense \emph{structural inference finds the induction circuit} and associates it with a pattern in the data distribution.




\paragraph{PC3: Bracket matching}. In data, PC3 has more variation between the two layers than the earlier PCs, but in both layers there is a significant negative loading on right delimiters. This is interesting because among the five positively loaded components in \cref{fig:pertoken_pca_pc_loadings} are \attentionhead{0}{7}, \attentionhead{1}{3}, \attentionhead{1}{5} which were identified using ablations in \citet[Appendix B.3]{wang2024differentiationspecializationattentionheads} as \emph{Dyck heads} meaning that they are involved in predicting matching closing brackets of various kinds.

The role of \attentionhead{1}{4} here is unclear, but it was noted in \cite[Appendix B.4.2]{wang2024differentiationspecializationattentionheads} that this head seems to specialize in verb-particle phrases and other patterns including \tokenbox{://} \ldots \tokenbox{/} which could be viewed as generalized bracketings. The presence of this head in PC3 could also be explained by the data loadings on other patterns besides delimiters. Overall this component seems somewhat related to structure in the model for predicting brackets previously identified in \citet{wang2024differentiationspecializationattentionheads}, although not as clearly as PC2 is related to the induction circuit.



To check the robustness of the claim to have distinguished the layer $1$ multigram heads from the induction heads by per-token susceptibility PCA, we run an independent analysis in \cref{section:joint_trajectory_pca} using PCA of the full susceptibilities including the values over training.

\section{Related work}\label{section:related_work}

\paragraph{Ablations and mechanistic interpretability.} Susceptibilities measure correlations of various changes in losses that result from perturbing the weights. A somewhat analogous set of techniques are ablations, which involve perturbing activations. 
These are widely used in mechanistic interpretability to test hypotheses about the computation being performed by parts of neural networks \citep{chan2022causal, wang2023interpretability, räuker2023transparent, bereska2024mechanistic}. By performing these interventions and observing the change in losses, researchers aim to infer facts about internal structure. In this paper we compare susceptibilities to zero ablation \citep{meyes2020hood, hamblin2023pruning, nanda2023progress, morcos2018importance, zhou2018revisiting}.

\paragraph{Influence functions.}
Classical influence functions measure the effect of up-weighting one training example on an estimator \citep{cook1980characterizations,koh2017understanding}. Susceptibilities are closely related to the Bayesian form of influence functions~\citep{giordano2023bayesian,iba2023w}. Suppose we have a dataset \(D_n = \{(x_i,y_i)\}_{i=1}^n\). We introduce a single new sample \((x_{\mathrm{new}}, y_{\mathrm{new}})\) with negative log likelihood \(\ell(w; x_{\mathrm{new}},y_{\mathrm{new}})\) into the likelihood with a (small) inverse-temperature parameter \(h\) controlling how ``strongly'' this new sample is weighted. Concretely, we define
\[
p(w|D_n, h) \propto
\exp\{-n L_n(w)\}\,\exp\{-h\,\ell(w; x_{\mathrm{new}},y_{\mathrm{new}})\}\,\varphi(w).
\]
Next we fix a test point \((x_{\mathrm{test}}, y_{\mathrm{test}})\) and define an observable $\phi(w) = \ell\bigl(w; x_{\mathrm{test}}, y_{\mathrm{test}}\bigr)$. Then it is easy to check that $\chi$ in this case is $-\mathrm{Cov}_{p(w|D_n)}\!\bigl[\ell(w; x_{\mathrm{test}},y_{\mathrm{test}}),\,
\ell(w; x_{\mathrm{new}},y_{\mathrm{new}})\bigr]$. Thus influence functions, in this sense, are a special case of susceptibilities. However, in this paper we focus on a different special case, where the observables are tied to parts of the model. 

\paragraph{Suppression behavior in neural networks.} Mechanistic interpretability has discovered many examples of suppression in language models, such as \textit{anti-induction} heads that suppress induction patterns
\citep{olsson2022context,mcdougall2024copy} and \textit{negative name-mover} heads that decrease the probability of an earlier name being copied in the Indirect Object Identification (IOI) circuit \citep{wang2023interpretability}.
\citet{mcgrath2023hydraeffectemergentselfrepair} explore the phenomenon of \textit{self-repair}, where later layers counteract the effects of ablation in earlier layers.  \citet{mcdougall2024copy} introduce the idea of \textit{copy suppression} more generally, identifying a specific head in GPT2-small that actively inhibits copying patterns. Further evidence of copy suppression and self-repair across models in the Pythia family~\citep{biderman2023pythia} is presented in \citet{tigges2024llm} and \citet{rushing2024explorations}. %
\citet{rushing2024explorations} also speculate that self-repair might be an incorrect framing, since ablations are inherently evaluating the model off-distribution. These negative components are a challenge for scaling mechanistic interpretability \citep{mcdougall2024copy} as self-repair obscures the effect of ablations; see \citet{mcgrath2023hydraeffectemergentselfrepair} and \citet[\S 5.1]{rushing2024explorations}. 

\citet{lad2024remarkablerobustnessllmsstages} suggest that predictions in neural networks with residual connections (such as transformers) result from an ensemble of many components which vote \citep{veit2016residual}, and some of those votes may be ``against''. \citet{gurnee2024universal} study universal neurons across training seeds of GPT2 models and find consistently that models learn neurons in later layers that suppress particular sets of tokens. Building on a perspective in \citet{geva-etal-2022-transformer} they endorse a view of networks building predictions by both
``promoting and suppressing concepts in the vocabulary space''. In this vein \citet{yan2025promotesuppressiteratelanguage} show that language models answer factual queries by promoting many answers and then suppressing some of them. More generally, suppression or inhibition has been a key part of artificial neural networks since \citet{mccullochlogical,wang2023networks}.

\section{Limitations and future work}\label{section:limitations}

The main limitation of our methodology is the quality of the approximate posterior sampling method. The effect of hyperparameter selection in SGLD on 
susceptibility estimation remains poorly understood. Furthermore, SGLD generates correlated samples. Averaging over multiple chains only partially mitigates this issue. In this paper we examine a small 3M parameter model, but as SGLD is inherently scalable, we do not expect major obstacles in applying the methods to larger models. For example, \citet{wang2024differentiationspecializationattentionheads} use SGLD to estimate similar observables in Pythia-70M. The main limitation for scalability is the need for separate posterior samples for each component; for details on costs up to the scale of 1.4B parameters see \cref{section:scalability}. 

\section{Conclusion}

We have introduced susceptibilities, a novel interpretability framework inspired by statistical physics. In this analogy we view a neural network as a \emph{complex material} whose internal structure is reflected by the differential response of parts of that structure to a range of ``external fields'' in the form of variations in the data distribution. Applying this methodology to a 3M-parameter transformer demonstrates that this analogy is fruitful: attention heads exhibit meaningfully differentiated susceptibilities, and structural inference via response matrix analysis successfully separates known functional modules.


This work builds on the literature establishing singular learning theory \citep{watanabe2009algebraic} and local learning coefficient estimation as principled tools for understanding neural networks and their development \citep{lau2023quantifying, hoogland2024developmental,wang2024differentiationspecializationattentionheads,urdshals2025,carroll2025dynamicstransientstructureincontext}. The study of the balance of expression and suppression using susceptibilities fits naturally into existing literature within mechanistic interpretability (\cref{section:related_work}). We hope these techniques will lead to scalable and theoretically principled tools for interpretability in large neural networks, parallel to existing approaches within mechanistic interpretability such as ablations or sparse auto-encoders, but with deeper foundations in the mathematical theory of generalization.



\section{Reproducibility statement}\label{section:reproducibility}

\cref{section:extra_methods} outlines experimental details of the model and of hyperparameters used in sampling and \cref{section:data} lists hyperlinks to the datasets used.

\section*{Acknowledgments}

This work was supported by the Survival and Flourishing Fund, Open Philanthropy (now Coefficient Giving), and the Manifund Regranting Program.

\bibliographystyle{abbrvnat}
\bibliography{references}

\newpage
\appendix

\section*{Appendix}

The appendix provides supplementary material to support and expand upon the main text. It is organized as follows:
\begin{itemize}
    \item \textbf{\cref{section:proofs}:} Contains mathematical proofs for lemmas presented in the main text.
    \item \textbf{\cref{section:data}:} Describes the datasets used in our experiments.
    \item \textbf{\cref{section:extra_methods}:} This section details the experimental and computational methodologies, covering loss definitions (\cref{section:defn_loss}), susceptibility estimation (\cref{section:detailed_local_sus,section:method_susceptibilities,section:method_per_token_susceptibilities}), scalability to larger models (\cref{section:scalability}), SGLD parameters (\cref{section:sgld}), data mixing (\cref{section:dataset_mixing}), and pattern definitions (\cref{section:defn_induction_pattern}).
    \item \textbf{\cref{section:extra_theory}:} This section offers further theoretical context for susceptibilities, including analogies to physics (\cref{section:analogy_magnet}), discussions on perturbation scale (\cref{section:scale_factor_h}), relation to Local Learning Coefficients (LLCs) (\cref{appendix:suscep_as_derivative}), connection to data modes (\cref{section:modes_probes}), and a geometric interpretation (\cref{section:frechet}).
    \item \textbf{\cref{section:extra_results}:} This section presents additional empirical results, including metric comparisons (\cref{appendix:per-token-metric-comparisons}), context length effects (\cref{section:per_token_suscep_variation_context}), explanations of susceptibility differences (\cref{section:per_token_explain}), bimodal token analysis (\cref{section:bimodal_tokens}), and further PCA details (\cref{section:pertoken_pca_appendix}).
    \item \textbf{\cref{section:app_top_suscep}:} Presents a detailed breakdown of top per-token susceptibilities across various attention heads and datasets.
    \item \textbf{\cref{section:pca_appendix}:} Details an alternative structural inference approach using PCA on susceptibility trajectories over training time.
    \item \textbf{\cref{section:additional_seeds}:} Analysis of the per-token susceptibility PCA for three additional training runs.
    \item \textbf{\cref{section:variance_pcs}:} Information on the sample variance of the principal components and loadings on heads.
\end{itemize}

\newpage

\section{Proofs}\label{section:proofs}

\begin{proof}[Proof of Lemma \ref{lemma:persamp_suscep}]
Let $Z^{\beta, h}_n$ be as in \cref{section:vary_truth}. Then
\begin{align}
n\beta \,\chi &= \frac{\partial}{\partial h} \langle \phi_h \rangle_{\beta, h}\Bigr|_{h = 0}\\
&= \frac{\partial}{\partial h}\Bigg[ \frac{1}{Z^{\beta,h}_n} \int \phi_h(w) \exp(-n\beta L^h(w)) \varphi(w) dw \Bigg] \Bigr|_{h = 0}\\
&= \frac{\partial}{\partial h} \frac{1}{Z^{\beta,h}_n} \Bigr|_{h = 0} Z^{\beta}_n \, \big\langle \phi \big\rangle_{\beta} - n \beta \big\langle \phi \Delta L \big\rangle_\beta\,.\label{eq:persamp_suscep}
\end{align}
Since
\begin{equation}
\frac{\partial}{\partial h}\frac{1}{Z^{\beta,h}_n} = - \frac{1}{(Z^{\beta,h}_n)^2} \int \Big[ -n\beta \frac{\partial}{\partial h} L^h(w) \Big] \exp(-n\beta L^h(w)) \varphi(w) dw 
\end{equation}
we have
\begin{equation}\label{eq:persamp_suscep_1}
\frac{\partial}{\partial h}\frac{1}{Z^{\beta,h}_n}\Bigr|_{h = 0} = \frac{n\beta}{Z^{\beta}_n} \big\langle \Delta L \big\rangle_\beta\,.
\end{equation}
Combining \eqref{eq:persamp_suscep} and \eqref{eq:persamp_suscep_1} we obtain
\begin{align*}
n\beta\,\chi = n \beta \big\langle \phi \big\rangle_\beta \big\langle \Delta L \big\rangle_\beta - n \beta \big\langle \phi \, \Delta L \big\rangle_\beta.
\end{align*}
Dividing through by $n\beta$ gives the result.
\end{proof}

\begin{proof}[Proof of Lemma \ref{lemma:deltaLisdiff}]
We compute
\begin{align*}
\Delta L &= \frac{\partial}{\partial h} L^h \Bigr|_{h = 0}\\
&= - \frac{\partial}{\partial h}\int q_h(x,y) \log p(y|x,w) dx dy\\
&= - \int \frac{\partial}{\partial h} q_h(x,y) \log p(y|x,w) dx dy\\
&= -\int ( q'(x,y) - q(x,y) ) \log p(y|x,w) dx dy\\
&= L^1(w) - L(w)
\end{align*}
using the definition of $\frac{\partial}{\partial h} q_h$ from \cref{section:vary_truth}.
\end{proof}

\begin{lemma}
\label{lem:tokenwise-suscep}
We have
\[
\chi =
\int_{X\times Y}
\chi_{(x,y)}\,q'(x,y)\,dx\,dy\,.
\]
\end{lemma}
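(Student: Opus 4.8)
The plan is to unpack the two covariances that appear on either side and show they agree term by term. Starting from Lemma~\ref{lemma:persamp_suscep} with the data-mixture variation, we have $\chi = -\operatorname{Cov}_\beta[\phi,\Delta L] + O(\tfrac1n)$, and by Lemma~\ref{lemma:deltaLisdiff} we may take $\Delta L(w) = L^1(w) - L(w) = -\int (q'(x,y) - q(x,y))\log p(y|x,w)\,dx\,dy$. Meanwhile, by Definition~\ref{defn:per_token_susceptibility}, $\chi_{(x,y)} = -\operatorname{Cov}_\beta[\phi, \ell_{(x,y)}(w) - L(w)]$ with $\ell_{(x,y)}(w) = -\log p(y|x,w)$.

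The key step is to interchange the integral over $X\times Y$ with the posterior expectation defining $\operatorname{Cov}_\beta$. Concretely, I would write
\begin{align*}
\int_{X\times Y} \chi_{(x,y)}\, q'(x,y)\,dx\,dy
&= -\int_{X\times Y} q'(x,y) \operatorname{Cov}_\beta\bigl[\phi, \ell_{(x,y)} - L\bigr]\,dx\,dy\\
&= -\operatorname{Cov}_\beta\Bigl[\phi, \int_{X\times Y} q'(x,y)\bigl(\ell_{(x,y)} - L\bigr)\,dx\,dy\Bigr]
\end{align*}
using bilinearity of the covariance in its second argument together with Fubini to move $\int q'(x,y)\,dx\,dy$ inside the two posterior expectations that make up the covariance. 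Now $\int q'(x,y)\,\ell_{(x,y)}(w)\,dx\,dy = -\int q'(x,y)\log p(y|x,w)\,dx\,dy = L^1(w)$, and $\int q'(x,y)(-L(w))\,dx\,dy = -L(w)$ since $q'$ integrates to $1$. Hence the inner argument equals $L^1(w) - L(w) = \Delta L(w)$, giving $-\operatorname{Cov}_\beta[\phi,\Delta L]$, which is $\chi$ up to the $O(\tfrac1n)$ term (or exactly $\chi$ if one takes the covariance formula as the definition in this context, matching how $\chi_{(x,y)}$ itself is defined without an error term).

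The main obstacle is justifying the interchange of integration and posterior expectation, i.e.\ the Fubini step. This requires an integrability bound: one needs $\int_{X\times Y} q'(x,y)\,\E_\beta\bigl[|\phi(w)|\,|{\log p(y|x,w)}|\bigr]\,dx\,dy < \infty$ and similarly for the product-of-expectations term. This follows from the standing assumptions in \cref{sec:setup} — $W$ compact, $p(y|x,w) > 0$ jointly continuous so $\log p$ is bounded on the relevant compact sets, and $\phi$ a fixed bounded observable (in the intended application $\phi = \phi_C$, and on the compact neighborhood the loss difference is bounded) — so that the integrand is dominated by a $q'$-integrable function and Fubini–Tonelli applies. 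I would state this as a brief remark rather than belabor it. A secondary bookkeeping point is to be consistent about whether we carry the $O(\tfrac1n)$ term through; since Definition~\ref{defn:per_token_susceptibility} defines $\chi_{(x,y)}$ directly as the covariance, the cleanest statement integrates these exact covariances and recovers $\chi$ exactly in covariance form, with the $O(\tfrac1n)$ only entering when one further identifies $\chi$ with the derivative of the expectation.
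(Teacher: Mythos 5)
Your proposal is correct and follows essentially the same route as the paper's proof: expand the covariance, exchange the order of integration, and use $\int q'(x,y)\,\ell_{(x,y)}(w)\,dx\,dy = L^1(w)$ together with $\int q' = 1$ to recover $-\operatorname{Cov}_\beta[\phi,\Delta L]=\chi$. Your added remarks on the Fubini justification and on the $O(\tfrac1n)$ bookkeeping are sensible refinements of steps the paper passes over silently, not a different argument.
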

\begin{proof}
Exchanging the order of integration gives
\begin{align*}
\int q'(x,y) \chi_{(x,y)} dx \, dy &= - \int q'(x,y) \operatorname{Cov}_\beta \big[ \phi(w), \ell_{(x,y)}(w) - L(w) \big] dx \, dy\\
&= - \int q'(x,y) \Big\{ \Big\langle \phi(w) \big\{ \ell_{(x,y)}(w) - L(w) \big\} ; w^* \Big\rangle_\beta\\
&\qquad - \big\langle \phi(w) ; w^* \big\rangle_\beta \Big\langle \ell_{(x,y)}(w) - L(w) ; w^* \Big\rangle_\beta \Big\} dx \, dy\\
&= - \Big\langle \phi(w) \big\{ L^1(w) - L(w) \big\} ; w^* \Big\rangle_\beta\\
&\qquad + \big\langle \phi(w) ; w^* \big\rangle_\beta \Big\langle L^1(w) - L(w) ; w^* \Big\rangle_\beta\\
&= -\operatorname{Cov}_\beta\big[ \phi, \Delta L \big] = \chi
\end{align*}
where $L^1(w) = \int q'(x,y) \ell_{(x,y)}(w) dx \, dy$.
\end{proof}

Note that in the main text we actually consider a mixture between $q$ and $(1-\delta h)q + \delta h\, q'$ and so $\Delta L = L^{\delta h}(w) - L(w)$. Substituting in the above we obtain not $\chi = \int q'(x,y) \chi_{(x,y)} dx\,dy$ but rather
\begin{equation}\label{eq:chi_deltah}
\chi = \delta h \int q'(x,y) \chi_{(x,y)} dx \, dy
\end{equation}
since $L^{\delta h}(w) = (1- \delta h)L(w) + \delta h\, L^1(w)$.

\section{Data}\label{section:data}

The datasets we used to evaluate model performance are described in \cref{table:pile_datasets}. These are generated by filtering for the first 100k rows from subsets of the uncopyrighted Pile~\citep{devingulliver2025monology}, which was generated by dropping copyrighted subsets from the Pile~\citep{gao2020pile}. Several subsets were omitted because there were too few rows to easily retrieve 100k samples. 

We use two github datasets \pilesub{pile-github} and \pilesub{github-all}. The former is a pile subset, while the latter is \pilesub{codeparrot/github-code} and is included for consistency with \citep{wang2024differentiationspecializationattentionheads}.
\begin{table}[tbp]
\centering
\caption{Details of the Pile~\citep{gao2020pile} subset datasets used in our analysis.}
\label{table:pile_datasets}
\begin{tabular}{llr}
\toprule
Dataset & Description & Size (Rows) \\
\midrule
\href{https://huggingface.co/datasets/timaeus/pile-github}{\pilesub{pile-github}} & Code and documentation from GitHub & 100k \\
\href{https://huggingface.co/datasets/timaeus/pile-pile-cc}{\pilesub{pile-pile-cc}} & Web crawl data from Common Crawl & 100k \\
\href{https://huggingface.co/datasets/timaeus/pile-pubmed_abstracts}{\pilesub{pile-pubmed\_abstracts}} & Scientific abstracts from PubMed & 100k \\
\href{https://huggingface.co/datasets/timaeus/pile-uspto_backgrounds}{\pilesub{pile-uspto\_backgrounds}} & Patent background sections & 100k \\
\href{https://huggingface.co/datasets/timaeus/pile-pubmed_central}{\pilesub{pile-pubmed\_central}} & Full-text scientific articles & 100k \\
\href{https://huggingface.co/datasets/timaeus/pile-stackexchange}{\pilesub{pile-stackexchange}} & Questions and answers from tech forums & 100k \\
\href{https://huggingface.co/datasets/timaeus/pile-wikipedia_en}{\pilesub{pile-wikipedia\_en}} & English Wikipedia articles & 100k \\
\href{https://huggingface.co/datasets/timaeus/pile-freelaw}{\pilesub{pile-freelaw}} & Legal opinions and case law & 100k \\
\href{https://huggingface.co/datasets/timaeus/pile-arxiv}{\pilesub{pile-arxiv}} & Scientific papers from arXiv & 100k \\
\href{https://huggingface.co/datasets/timaeus/pile-dm_mathematics}{\pilesub{pile-dm\_mathematics}} & Mathematics problems and solutions & 100k \\
\href{https://huggingface.co/datasets/timaeus/pile-enron_emails}{\pilesub{pile-enron\_emails}} & Corporate emails from Enron & 100k \\
\href{https://huggingface.co/datasets/timaeus/pile-hackernews}{\pilesub{pile-hackernews}} & Tech discussions from Hacker News & 100k \\
\href{https://huggingface.co/datasets/timaeus/pile-nih_exporter}{\pilesub{pile-nih\_exporter}} & NIH grant applications & 100k \\
\href{https://huggingface.co/datasets/timaeus/pile_subsets_mini}{\pilesub{pile\_subsets\_mini}} & Combined samples from all subsets & 6.66k \\
\href{https://huggingface.co/datasets/timaeus/dsir-pile-1m-2}{\pilesub{pile1m}} & Combined samples from all subsets & 1M \\
\bottomrule
\end{tabular}
\end{table}

\footnotetext{All datasets are available on HuggingFace at \url{https://huggingface.co/collections/timaeus/datasets-pile-subsets-673a6a6c7ffc522a34ebfb0b}. These datasets are derived from The Pile \citep{gao2020pile} and specifically from the uncopyrighted version \citep{devingulliver2025monology}, with cleaning performed by removing all content from the Books3, BookCorpus2, OpenSubtitles, YTSubtitles, and OWT2 subsets.}

\section{Methods}\label{section:extra_methods}

\subsection{Empirical loss}\label{section:defn_loss}

Recall that $\Sigma$ denotes the set of tokens, and our sample space is $X \times Y$ where $X = \bigsqcup_{k=1}^{K-1} \Sigma^k$ and $Y = \Sigma$. For a probability distribution $P$ over tokens we denote by $P[t]$ the probability of token $t$, and $f_w$ is the function from contexts to probability distributions of next tokens computed by the transformer then for $x \in \Sigma^k$ and $y \in \Sigma$ we set
\[
p(y|x,w) = \mathrm{softmax}(f_w(x))[y]\,.
\]
See \citet{phuong2022formal} for a formal definition of transformers. In practice for transformer training pairs $(x,y)$ are not sampled i.i.d, but instead are generating from \emph{full contexts} $S_K = (t_1, \ldots, t_K) \in \Sigma^K$. We denote by $S_{\le k}$ the sub-sequence $(t_1,\ldots,t_k)$ of $S_K$.
Our dataset $D_n$ is a collection of full contexts, $\{S^i_K\}_{i=1}^n$. The \textit{empirical loss} with respect to $D_n$ is 
\begin{equation}
    L_n(w) = -\frac{1}{n} \sum_{i=1}^n \frac{1}{K-1}\sum_{k=1}^{K-1} \log \left(\mathrm{softmax}(f_w(S_{\leq k}^i))[t^i_{k+1}]\right)\,.
    \label{eq:lnkw_language}
\end{equation}
We can think of this as the negative log-likelihood for biased samples $\{ (S^i_{\le k}, t^i_{k+1}) \}_{1 \le k < K, 1 \le i \le n}$ from $q(x,y)$ over $X \times Y$ or as the negative log-likelihood for the unbiased samples $\{ S^i_K \}_{i=1}^n$ over full contexts with probability model
\[
p(S_K|w) = \prod_{k=1}^{K-1} p(t_{k+1} | t_1, \ldots, t_k, w)\,.
\]
In this paper $K = 1024$ and all full contexts begin with a BOS token \tokenbox{<|endoftext|>} (which has index $4999$ in our tokenizer) so $p$ does not model the first token. In the case where we think of $p$ as modelling full contexts the factor of $\tfrac{1}{K-1}$ is unnatural (although standard in transformer training). If we set $l_n(w) = (K-1) L_n(w)$ then
\[
n\beta \, L_n = n \tfrac{\beta}{K-1} \, l_n
\]
so the effective inverse temperature in tempered posteriors is $\tfrac{\beta}{K-1}$. Similarly if we want to treat $L_n(w)$ as the empirical negative log-likelihood for the biased set of $n(K-1)$ samples then
\[
n\beta \, L_n = n(K-1) \tfrac{\beta}{K-1} \, L_n
\]
so the effective inverse temperature is again $\tfrac{\beta}{K-1}$.

\subsection{Local susceptibility}\label{section:detailed_local_sus}

Let us consider the observable $\phi$ from \eqref{eq:centered_deltaloss} for some component $C$ and a variation in the data distribution as in \eqref{eq:data_variation} for some $q'$. Substituting this observable into \eqref{eq:suscep_formula_est}, we denote the result by
\begin{equation}\label{eq:persamp_suscep_weight}
\chi(w^*; C, q' ) = - \big\langle \phi \Delta L ; w^*,C \big\rangle_\beta + \big\langle \phi ; w^*,C \big\rangle_\beta \big\langle \Delta L ; w^* \big\rangle_\beta
\end{equation}
where $\langle - ; w^*, C \rangle_\beta$ denotes an expectation defined as in \citet{wang2024differentiationspecializationattentionheads} where we fix the parameters $u = u^*$. To explain: the generalized function $\delta(u-u^*)$ in $\phi$ means that the expectations involving $\phi$ are computing an integral over the \emph{weight-refined} posterior, where only parameters in the circuit $C$ (represented by the variables $v$) are allowed to vary. However the expectation $\big\langle \Delta L ; w^* \big\rangle_\beta$ is defined with respect to the ``full'' posterior, where all parameters are allowed to vary.

If we let $\{ w_t \}_{t=1}^r$ denote approximate samples from the weight-refined posterior for $C$ (that is, the same kind of samples we would use to define the \emph{weight-refined} but \emph{not} also data-refined LLC) and let $\{ w'_t \}_{t=1}^r$ denote approximate samples from the full posterior (the kind of samples we would use to define the ordinary LLC) then the susceptibility is
\begin{align}
\hat{\chi}(w^*; C, q' ) &= - \frac{1}{r} \sum_{t=1}^r \Big[ \big\{ L_n(w_t) - L_n(w^*) \big\} \Delta L_n(w_t) \Big]\nonumber\\
&\qquad + \frac{1}{r^2} \Big[ \sum_{t=1}^r \big\{ L_n(w_t) - L_n(w^*) \big\} \Big] \Big[ \sum_{t=1}^r \Delta L_n(w'_t) \Big] \nonumber\\
&= -\frac{1}{r} \sum_{t=1}^r \Big[ \big\{ L_n(w_t) - L_n(w^*) \big\} \big\{ L_n^{\delta h}(w_t) - L_n(w_t) \big\} \Big] \nonumber\\
&\qquad + \frac{1}{r^2} \Big[ \sum_{t=1}^r \big\{ L_n(w_t) - L_n(w^*) \big\} \Big] \Big[ \sum_{t=1}^r \big\{ L_n^{\delta h}(w'_t) - L_n(w'_t) \big\} \Big]\,.\label{eq:chihat_defn}
\end{align}

\subsection{Implementing susceptibilities}\label{section:method_susceptibilities}

Let $\hat\chi$ denote the susceptibility estimate \eqref{eq:chihat_defn} for the probe data distribution $q'$ and weight restriction $C$ where $w_t$ are SGLD weight samples from the weight restricted posterior, $w_t'$ are SGLD weight samples from the full not weight restricted posterior, $w^\star$ are the initial model weights, $L_n$ is the loss function on the pretraining dataset $q$ on a random batch of size $n$, in this case $q = \texttt{timaeus/dsir-pile-1m-2}$, and $L_n^{\delta h}$ is the loss function on a random batch of size $n$ on $q_{\delta h} = (1-\delta h)q + \delta h q'$. That is, a mixture between the pretraining dataset $q$ and probe dataset $q'$ using the process as specified in \cref{section:dataset_mixing}.

These susceptibilities were found for each of the datasets listed in \cref{section:data}, for each checkpoint step in 
\begin{align*}
    S &= \{0, 100, 200, 300, 400, 500, 600, 700, 800, 900, 1000, 1500, 2000, 3000, 4000, 5000, 6000, \\
    &\qquad7000, 8000, 9000, 10000, 12500, 15000, 17500, 20000, 25000, 30000, 40000, 49900\}
\end{align*}
and each attention head weight restriction, as well as no weight restriction at all. For these experiments we had a batch size of $n=64$, and $\delta h = 0.1$. SGLD hyperprameters can be found in \cref{section:sgld}.

The hardware used for the computation consisted of 10 A100 GPUs run in parallel, each computing all weight-refined susceptibilites on all datasets for an evenly distributed subset of $S$, each GPU took about 20 hours to complete its task.

\subsection{Per-token susceptibilities}\label{section:method_per_token_susceptibilities}

With the same notation as \cref{section:detailed_local_sus} we estimate per-token susceptibilities using the equation 
\begin{align*}
    \hat\chi_{(x,y)}&= -\frac{1}{r} \sum_{t=1}^r \Big[ \big\{ L_n(w_t) - L_n(w^*) \big\} \big\{ \ell_{(x,y)}(w_t) - L_n(w_t) \big\} \Big]\\
        &\qquad + \frac{1}{r^2} \Big[ \sum_{t=1}^r \big\{ L_n(w_t) - L_n(w^*) \big\} \Big] \Big[ \sum_{t=1}^r \big\{ \ell_{(x,y)}(w'_t) - L_n(w'_t) \big\} \Big]\,.
\end{align*}
For a total of approximately $M=160 \times 1023 = 163680$ samples, across $160$ different contexts, where each context $c^r = (c_1^r, \dots, c_k^r)$ is a list of $k \leq1023$ tokens, with the property that for all $j \in [2, k], (c^r_{1:j-1}, c_j^r) \in D^{1}_M$.

We choose these contexts using the \texttt{datasets.Dataset.shuffle} method with seed 0 after tokenizing the dataset under consideration, and taking the $160$ contexts.

To visualize these per-sample susceptibilities, let
\begin{align*}
    \hat\chi(c_j^r)&= -\frac{1}{r} \sum_{t=1}^r \Big[ \big\{ L_n(w_t) - L_n(w^*) \big\} \big\{  \ell_{(c_{1:j-1}^r, c_j^r) - L_n(w_t)}(w_t) \big\} \Big]\\
        &\qquad + \frac{1}{r^2} \Big[ \sum_{t=1}^r \big\{ L_n(w_t) - L_n(w^*) \big\} \Big] \Big[ \sum_{t=1}^r \big\{  \ell_{(c_{1:j-1}^r, c_j^r) - L_n(w'_t)}(w'_t) \big\} \Big]\,.
\end{align*}
Then this is a real number, and we can map it to a color value via either
\begin{algorithm}
\begin{algorithmic}[1]
    \Require $\hat\chi(c_j^r) \in \mathbb R$
    \State $\max(\hat\chi) \gets \max_{r,i}|\hat\chi(c_j^r)|$
    \If{$\hat\chi(c_j^r) > 0$}
        \State $\text{color} \gets \texttt{rgba}\left(0, 255, 0, \frac{|\hat\chi(c_j^r)|}{\max(\hat\chi)}^2\right)$
    \Else
        \State $\text{color} \gets \texttt{rgba}\left(255, 0, 0, \frac{|\hat\chi(c_j^r)|}{\max(\hat\chi)}^2\right)$
    \EndIf
\end{algorithmic}
\end{algorithm}
or 
\begin{algorithm}
\begin{algorithmic}[1]
    \Require $\hat\chi(c_j^r) \in \mathbb R$
    \State $\max(\hat\chi) \gets \max_{r,i}|\hat\chi(c_j^r)|$
    \If{$\hat\chi(c_j^r) > 0$}
        \State $\text{color} \gets \texttt{rgba}\left(0, 128, 0, \frac{|\hat\chi(c_j^r)|}{\max(\hat\chi)}\right)$
    \Else
        \State $\text{color} \gets \texttt{rgba}\left(255, 0, 0, \frac{|\hat\chi(c_j^r)|}{\max(\hat\chi)}\right)$
    \EndIf
\end{algorithmic}
\end{algorithm}
where the only differences are the shade of green and the quadratic versus linear opacity scaling.

Each $c_j^r$ can also be decoded into a string of characters using our model's tokenizer. Therefore we can highlight each such string the corresponding color computed above. Let $s(c_j^r)$ be this highlighted string. Then this produces \cref{fig:per-token-induction-pattern} using the first coloring algorithm while the latter is used to produce \cref{fig:figure_1} and \cref{fig:bruise_1}. 

For \cref{fig:per-token-induction-pattern}, we sort each per-sample susceptibility in a descending manner, giving us the list $\hat\chi(c_{j_1}^{r_1}) \geq \cdots \geq \hat\chi(c_{j_M}^{r_M})$. We then choose the first 100 of these susceptibilities, and visualize their surrounding context by concatenating each of their highlighted strings. With $+$ denoting concatenation,
\begin{align*}
\begin{array}{ccccc}
    s(c_{j_1-200}^{r_1}) &+ &\cdots& +& s(c_{j_1+200}^{r_1})\\
    &&\vdots&&\\
    s(c_{j_1-200}^{r_{100}}) &+ &\cdots& +& s(c_{j_1+200}^{r_{100}})
\end{array}
\end{align*}
And we also choose the last 100 of these susceptibilities, and visualize the surrounding context in the same way
\begin{align*}
\begin{array}{ccccc}
    s(c_{j_1-200}^{r_M}) &+ &\cdots& +& s(c_{j_1+200}^{r_M})\\
    &&\vdots&&\\
    s(c_{j_1-200}^{r_{M-100}}) &+ &\cdots& +& s(c_{j_1+200}^{r_{M-100}})
\end{array}
\end{align*}

For \cref{fig:figure_1} and \cref{fig:bruise_1}, we use the same context samples from each dataset but do no sorting of the data and instead search manually for representative examples.

Here we use all the same definitions and parameters as in \cref{section:method_susceptibilities}, and SGLD sampling with hyperparameters as in \cref{section:sgld}, modified to only use $100$ draws. We run these per-token susceptibilities for all datasets in \cref{section:data} and on checkpoint step $49900$.

The hardware used for this computation consisted of 16 A100 GPUs, each computing a quarter of the weight restrictions for each seed. Each GPU took 50 hours to complete its task

\begin{figure}[t]
    \centering
    \includegraphics[width=\textwidth]{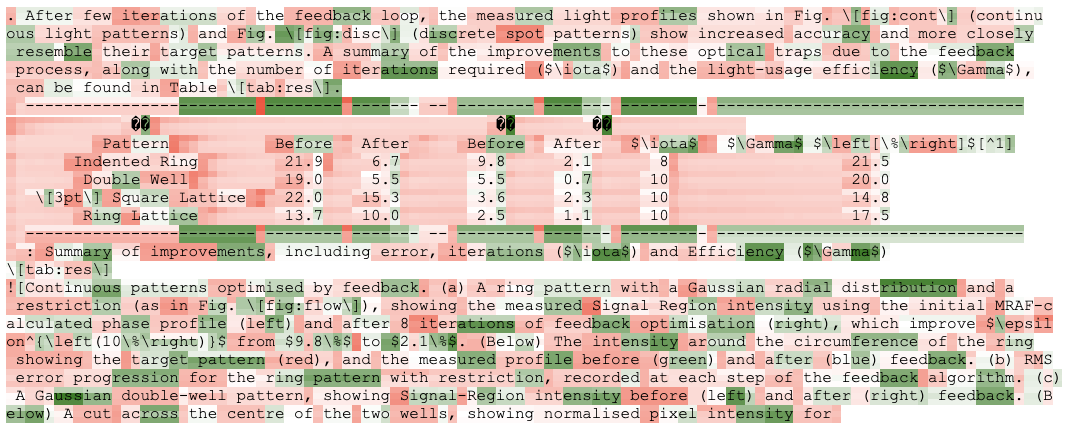}
    \caption{\textbf{Visualizing per-token susceptibilities for three heads on a sample from \pilesub{arxiv}}. Each token is highlighted in three segments (top, middle, buttom) which correspond to the per-token susceptibilities for three heads (\protect\attentionhead{0}{0},\protect\attentionhead{1}{2},\protect\attentionhead{1}{7}). The susceptibility values are standardized for each head, with solid red representing a z-score of -3 and solid green representing a z-score of +3. Green means positive susceptibility and red means negative. }
    \label{fig:figure_1}
\end{figure}

\subsection{Scalability to larger models}\label{section:scalability}

There are two main obstacles to scaling the results in the paper:
\begin{enumerate}
    \item The computational cost of SGLD sampling and evaluating losses to compute susceptibilities.
    \item The complexity of larger models.
\end{enumerate}

Regarding the first point, we estimate the cost of compute as follows: for a given model, let $T$ be the time cost of a training step, $F$ be the cost of a forward pass, $k$ the number of probe datasets, $m$ the number of model components being analyzed, and $C$ some roughly constant O(100) number of SGLD samples. Then we expect that the cost of estimating susceptibilities on a given checkpoint of that model scales like
\begin{align*}
    mC(T+kF)\,.
\end{align*}
In practice, we take approximately log-spaced checkpoints of models, so the total cost of analysis scales logarithmically with the total number of training steps and typically ends up comparable to the cost of training. Some preliminary work suggests that we may be able to significantly reduce the computational cost in larger models without loss of signal, such as by using much less than the entire set of attention heads. Not accounting for these cost improvements, we were for example able to run susceptibilities on the full set of attention heads (384 in total) on Pythia 410m and Pythia 1.4b \citep{biderman2023pythia} for around \$2,000 and \$5,000 respectively, for a single checkpoint each. For sampling more generally, \citet{urdshals2025smdl} establishes LLC estimation at scale by studying Pythia models up to 6.9B (see e.g. Figure 3). 

We also note that, apart from SGLD sampling, the scaling behavior of the compute cost is similar to e.g. circuit discovery via ablations, which also requires evaluating forward passes for a set of model components being ablated.

As model size grows, these computational costs require paralellization across multiple GPUs, but this is done in the standard way and is well-supported. Overall, there are no theoretical or practical opstables to scaling to 10B parameters, and the costs are not prohibitive.

Regarding (2), the difficulties are more uncertain. The small transformers studied in the present paper are simple, and so the patterns in the data and corresponding structures in the model are remarkably visible in the susceptibility matrix. We do not expect PCA on its own to be sufficient to discover interesting structure in larger models, which ``understand'' more complex patterns. In these settings we also make use of non-linear dimension reduction techniques like UMAP, and more sophisticated data analysis to study the internal structure in models. The present paper should be viewed as validation that simple data analysis, in simple models, reveals simple patterns; we agree that it remains an open question (to be addressed in future work) whether more complex data analysis reveals complex structures in larger models.

Intuitively, our method should be thought of as somewhat comparable to EK-FAC for data attribution: while it is computationally nontrivial to estimate Hessian inverses in large models and the results are not straightforward to interpret, this is routinely done (see e.g. R. Grosse et al “Studying large language model generalization with influence functions”) because the information gained is believed to be worth the cost.

\subsection{SGLD hyperparameters}\label{section:sgld}
Following \cite{wang2024differentiationspecializationattentionheads} we used SGLD with hyperparameters $\gamma=300$, $n\beta = 30$, $\varepsilon=0.001$, batch size $64$, $4$ chains, and $200$ draws for the regular susceptibilities as in \cref{section:method_susceptibilities}, but only $100$ draws and batch size 16 for the per-token susceptibilities as in \cref{section:method_per_token_susceptibilities}.

Note that for these experiments we did not use RMSProp SGLD.

\subsection{Dataset mixing}\label{section:dataset_mixing}

To create the mixed dataset $D^{\delta h}$ consisting of samples from $q_{\delta h} = (1-\delta h)q + \delta hq'$, we interleaved the pretraining dataset with each probe dataset using Algorithm \ref{alg:data_mixing} in order to ensure even mixing, determinacy, and the property that if $q' = q$, then $D^{\delta h} = D^0$. 

Let $(x_i, y_i)$ be samples from $q$ and $(x_i', y_i')$ be samples from $q'$, with $N=1,000,000$ the size of the pretraining dataset, and $M=100,000$ the size of the probe dataset.
\pilesub{pile1m}
\begin{algorithm}
\caption{Data Mixing Algorithm}\label{alg:data_mixing}
\begin{algorithmic}[1] 
\Require $D_N^0=\{(x_i, y_i)\}_{i=1}^N$, $D_M^{1}=\{(x_i', y_i')\}_{i=1}^M, \delta h \in [0,1]$
\State $D^{\delta h}_{\min(N, M)}\gets\emptyset$
\State $\text{probe count} \gets 0$
\State $j \gets 1$
\While{$j \leq N, M$}
    \State $\text{target probe count} \gets\lfloor(j+1)\delta h\rfloor$
    \If{$\text{probe count} < \text{target probe count}$}
        \State $D^{\delta h}_{\min(N,M)} \gets D^{\delta h}_{\min(N,M)} \cup \{(x_j', y_j')\}$
        \State $\text{probe count} \gets \text{probe count} + 1$
    \Else
        \State $D^{\delta h}_{\min(N,M)} \gets D^{\delta h}_{\min(N,M)} \cup \{(x_j, y_j)\}$
    \EndIf
    \State $j \gets j + 1$
\EndWhile
\end{algorithmic}
\end{algorithm}

This new dataset is then shuffled, and during SGLD minibatches of size 64 are drawn from this shuffled pre-computed data-distribution for loss estimates.

\subsection{Token patterns}\label{section:defn_induction_pattern}

\begin{figure}[t]
    \centering
    \includegraphics[width=\textwidth]{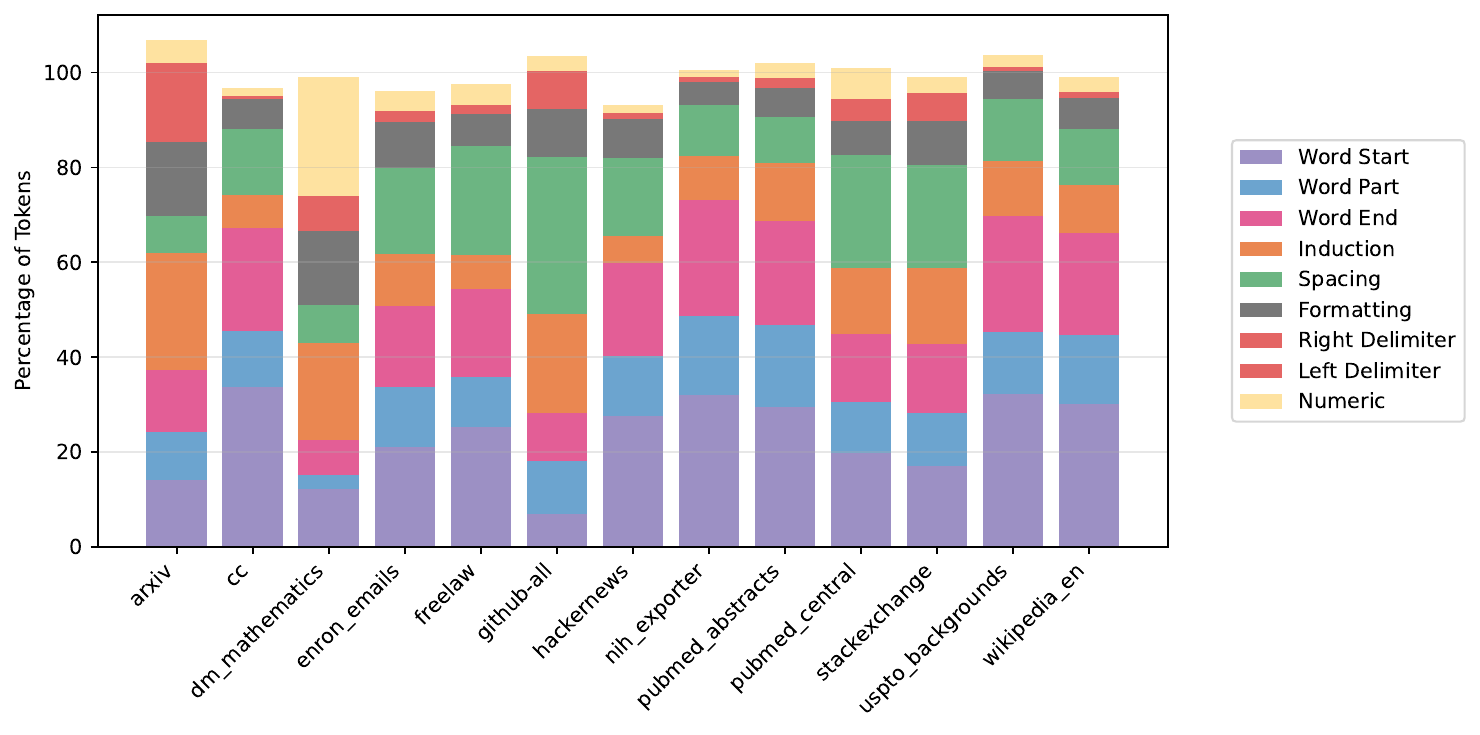}
    \caption{Percentages of tokens in each dataset which follow a given pattern. Note that not all patterns are mutually exclusive, which is why some totals exceed $100$.}
    \label{fig:distribution_pattern}
\end{figure}

Let $\Sigma$ denote the set of tokens in our tokenizer.

\begin{definition} A \emph{left delimiter token} is an element of the set of tokens
\[
\Big\{ \tokenbox{<}, \tokenbox{~<}, \tokenbox{\{}, \tokenbox{~\{}, \tokenbox{(}, \tokenbox{~(}, \tokenbox{[}, \tokenbox{~[}, \tokenbox{</}, \tokenbox{\{"}, \tokenbox{~\$} \Big\}\,.
\]
A \emph{right delimiter token} is an element of the set of tokens
\[
\Big\{ \tokenbox{>}, \tokenbox{~>}, \tokenbox{\}}, \tokenbox{~\}}, \tokenbox{)}, \tokenbox{~)}, \tokenbox{]}, \tokenbox{~]}, \tokenbox{),}, \tokenbox{],}, \tokenbox{):}, \tokenbox{).}, \tokenbox{))}, \tokenbox{);}, \tokenbox{\%)}, \tokenbox{\$} \Big\}\,.
\]
We call a token a \emph{delimiter token} if it is either a left or right delimiter token.
\end{definition}

The asymmetry between left and right delimiters is due to the tokenizer. For our model right delimiters seem much more important than left delimiters. 

\begin{definition} A \emph{formatting token} is an element of the set of tokens
\begin{gather*}
\Big\{ \tokenbox{$\sim$}, \tokenbox{\textbackslash\textbackslash}, \tokenbox{~\textbackslash\textbackslash}, \tokenbox{/}, \tokenbox{//}, \tokenbox{~//}, \tokenbox{://}, \tokenbox{-}, \tokenbox{~-}, \tokenbox{\textemdash}, \tokenbox{~\textemdash}, \tokenbox{\_},\\
\tokenbox{========}, \tokenbox{{-}{-}}, \tokenbox{{-}{-}{-}{-}}, \tokenbox{{-}{-}{-}{-}{-}{-}{-}{-}}, \tokenbox{{-}{-}{-}{-}{-}{-}{-}{-}{-}{-}{-}{-}{-}{-}{-}{-}}, \tokenbox{**}, \tokenbox{****}, \tokenbox{********},\\ \tokenbox{\#\#\#\#}, \tokenbox{.}, \tokenbox{,}, \tokenbox{:}, \tokenbox{::}, \tokenbox{~:}, \tokenbox{;}, \tokenbox{~;}, \tokenbox{",}, \tokenbox{<|endoftext|>}, \\
\tokenbox{="}, \tokenbox{":"}, \tokenbox{|}, \tokenbox{'}, \tokenbox{"}, \tokenbox{->}, \tokenbox{ ->}, \tokenbox{\^}, \tokenbox{ \%} \Big\}\,.
\end{gather*}
\end{definition}

\begin{definition}
A \emph{word start} is a single token that begins with a space and is followed by lower or upper case letters. That is, it is a token which when de-tokenized matches the regular expression \texttt{" [A-Za-z]+\$"}.
\end{definition}

\begin{definition}\label{defn:spacing} A \emph{spacing token} is a token which when de-tokenized is a sequence of characters from the set
\[
\Big\{\tokenbox{~}, \tokenbox{\textbackslash n}, \tokenbox{\textbackslash t}, \tokenbox{\textbackslash r}, \tokenbox{\textbackslash f} \Big\}\,.
\]
\end{definition}


\begin{definition} A \emph{numeric token} is a token which when de-tokenized and with spaces removed, consists of one or more digits.
\end{definition}

The patterns defined above are independent of the context in which a token appears. By contrast, the subsequent definitions apply to a token in a given context.

\begin{definition}\label{defn:word_end} A \emph{word end token} is a token which when de-tokenized is made up of upper or lower case letters and which is followed in its context by a single formatting token, delimiter or space.
\end{definition}

\begin{definition}\label{defn:word_part} A \emph{word part token} is a token which is not a word ending in its context and which when de-tokenized consists of upper or lower case letters.
\end{definition}


\begin{definition}
An \emph{induction pattern} is a sequence $xyUxy$ where $U \in \Sigma^*$ and $x,y \in \Sigma$, satisfying the following conditions:
\begin{itemize}
\item The conditional probability of $y$ following $x$ satisfies $q(y|x) \le 0.05$.
\item $x, y \notin \{ \tokenbox{~}, \tokenbox{\textbackslash n}, \tokenbox{,}, \tokenbox{.}, \tokenbox{the},\tokenbox{to},\tokenbox{:},\tokenbox{and},\tokenbox{by},\tokenbox{in},\tokenbox{a},\tokenbox{be}\}$.
\end{itemize}
Note that $U$ can be the empty sequence and may contain occurrences of $x, y$. In a given context we classify a token as an \emph{induction pattern token} if it is $y$ for an induction pattern $xyUxy$ within the context.
\end{definition}

We use estimated conditional probabilities based on samples from the Pile. Note that the sets of left delimiters, right delimiters, formatting tokens, word start tokens and word part tokens are pairwise disjoint. The set of induction pattern tokens and word part tokens are disjoint. The percentage of $N = 20000$ tokens sampled from each dataset which fit each of these patterns are given in \cref{fig:distribution_pattern}.

\section{Theoretical background}\label{section:extra_theory}

\subsection{Analogy with magnetic susceptibility}\label{section:analogy_magnet}

The Bayesian susceptibility introduced above is the direct analogue of susceptibilities in physics, such as the magnetic susceptibility. We recall briefly the linear response of a spin system placed in an
external magnetic field $H$. With energy $E(\sigma)$ (which may also depend on $H$) and magnetization $M(\sigma)=\sum_i\sigma_i$, the field-perturbed Boltzmann weight is
\[
  p_{H}(\sigma)=\frac{1}{Z_H}
  \exp\!\bigl[-\beta\bigl(E(\sigma)-H\,M(\sigma)\bigr)\bigr],
\quad
  Z_H=\!\!\int\!\exp[-\beta(E-HM)]\,d\sigma .
\]
The \emph{magnetic susceptibility} is the first derivative of the equilibrium magnetisation,
\[
  \chi_{\mathrm{mag}}
  =\frac{\partial}{\partial H}\bigl\langle M\bigr\rangle_H\Big|_{H=0},
  \quad
  \langle M\rangle_H=\!\int\!M(\sigma)\,p_H(\sigma)\,d\sigma\,.
\]
Its sign carries immediate physical meaning:
\begin{center}
\begin{tabular}{@{}lcp{8cm}@{}}
\toprule
Sign of $\chi_{\mathrm{mag}}$ & & Interpretation \\
\midrule
$\chi_{\mathrm{mag}}>0$ & (paramagnetism)  & spins align with $H$; response \emph{reinforces} the field \\
$\chi_{\mathrm{mag}}=0$ & (insensitive)          & system insensitive or already saturated \\
$\chi_{\mathrm{mag}}<0$ & (diamagnetism)   & induced currents oppose $H$; response \emph{cancels} the field \\
\bottomrule
\end{tabular}
\end{center}
The Bayesian susceptibility $\chi$ parallels the magnetic susceptibility, with the probe parameter $h$ replacing the magnetic field strength $H$ and $\phi(w)$ replacing the magnetization $M(\sigma)$. 

\subsection{The scale of $h$}\label{section:scale_factor_h}

Since the scale of the parameter $h$ is arbitrary, there is a sense in which susceptibilities are only well-defined up to a rescaling. This is easy to understand if we consider multiple variations of the form \eqref{eq:data_variation}, say $q \rightarrow q'$ and $q \rightarrow q''$ where the KL divergence between $q,q''$ is much larger than between $q,q'$. By default therefore we should not read too much into comparisons of \emph{magnitudes} of susceptibilities across different variations of the data distribution. 

For example if we set $\bar{h} = \frac{h}{\kappa}$ for some $\kappa > 0$ then
\begin{equation}
\frac{1}{n \beta} \frac{\partial}{\partial \bar{h}} \langle \phi \rangle_{\beta,\bar{h}} \Bigr |_{\bar{h}=0} = \frac{\kappa}{n \beta} \left.\frac{\partial}{\partial h}\,\langle \phi \rangle_{\beta,h}\right|_{h=0} = \kappa \, \chi\,.
\end{equation}

It would also be natural to include a factor $1/\delta h$ in \eqref{eq:missing_dh_wonder} so that $\Delta L_n(w)$ is a finite difference quotient. This could be motivated by the principle given above that we should rescale the deformation parameter $h$ since our interval is ``really'' of length $\delta h$.

\begin{remark}\label{remark:finite_diff_smalldh} Given a target distribution $q'(x,y)$ and the mixture \eqref{eq:data_variation} we obtain $\Delta L = L^1(w) - L(w)$ from Lemma \ref{lemma:deltaLisdiff}. However, if the difference between $q'$ and $q$ is large, $w^*$ may not be a local minima of $L^1(w)$. The likelihood of this being a reasonable assumption increases as we move $q'$ closer to $q$.

Since susceptibilities are in any case infinitesimal, this suggests that we replace the original variation from $q$ to $q'$ with a variation from $q$ to $(1-\delta h)q + \delta h q'$ for some small $\delta h$. In this case the mixture is
\begin{align*}
h \in [0,1] &\longmapsto (1-h) q + h(1-\delta h)q + h \delta h q'\\
&= (1 - h \delta h)q + h \delta h q'\,.
\end{align*}
In this case, again by Lemma \ref{lemma:deltaLisdiff}, we will have that $\Delta L$ is a finite difference
\[
\Delta L(w) := L^{\delta h}(w) - L(w)\,.
\]
For local susceptibilities we always choose a variation which is sufficiently small, in this sense.
\end{remark}

\subsection{Susceptibilities as a derivative}\label{appendix:suscep_as_derivative}

Let $w^*$ be a local minima of $L(w)$. Recall from \citet{lau2023quantifying,watanabe2009algebraic} that the estimated local learning coefficient (LLC) is defined by
\begin{equation}\label{eq:lambda_hat}
     \hat \lambda(\wstar)
     =
     n \beta \left[ \Elocaltempered [L_n(w)] - L_n(\wstar) \right],
\end{equation}
where $\Elocaltempered$ is the expectation with respect to the Gibbs
posterior \eqref{eq:tempered_posterior_defn_llc}.

Suppose we approximate the derivative in the local susceptibility by a difference quotient
\[
\chi_{\text{finite}}(w^*) := \frac{1}{n \beta} \frac{1}{\delta h}\Big[ \langle \phi ; w^*\rangle_{\beta, \delta h} - \langle \phi ; w^*\rangle_{\beta} \Big]\,.
\]
When the observable $\phi$ is associated to a component $C$ as in \cref{defn:phi_C} but with a dependence on $h$
\[
\phi(w) = \delta(u - u^*) \Big[ L^h(w) - L^h(w^*) \Big]
\]
the expectations in question are, up to scalars, estimates of the weight- and data-refined LLCs as introduced in \citet{wang2024differentiationspecializationattentionheads} once we replace the annealed posteriors by the ordinary ones, since
\[
\Elocaltempered\Big[ \delta(u - u^*)\big\{ L_n^{h}(w) - L^h_n(w^*) \big\} \Big] = \frac{1}{n\beta} \hat\lambda(w^*; q_h, C)\,.
\]
Hence $\chi_{\text{finite}}(w^*)$ is related to the difference quotient
\[
\frac{1}{(n \beta)^2} \frac{1}{\delta h}\Big[ \hat\lambda(w^*; q_{\delta h}, C) - \hat\lambda(w^*; q, C) \Big]
\]
which measures a rate of change of the refined LLC as a function of the shift in the data distribution.

\subsection{Modes and matrix factorizations}\label{section:modes_probes}


In the main text we have focused on a decomposition of susceptibilities $\chi$ as an integral of \emph{per-token} susceptibilities $\chi_{(x,y)}$ (\cref{lem:tokenwise-suscep}). Such a presentation implicitly relates to a choice of token sequences as a basis of a function space containing the conditional distributions $q(y|x)$, and it is conceptually useful to consider an alternative basis of ``patterns'' or more precisely \emph{modes}. In this appendix we explain how to make this precise in the setting of \citep{modes2}.

That paper explains how to think about the data distribution over sequences as a tensor, and use tensor decomposition methods to produce natural orthonormal bases of function space. Similar techniques are commonly used in the field of natural language processing \citep{anandkumar2012method}. 

We fix a finite alphabet of tokens $\Sigma$ and let
\[
\mathscr{H} = \mathscr{H}_{k,l} = L^2(\Sigma^k, q_k; \mathbb{R}^{\Sigma^l})
\]
be the space of functions $\Sigma^k \rightarrow \mathbb{R}^{\Sigma^l}$ where $k, l > 0$. In the main text $1 \le k < K - 1$ and $l = 1$. Here $\mathbb{R}^{\Sigma^l}$ denotes the vector space of formal linear combinations of sequences of tokens of length $l$. The inner product is defined by
\[
\langle f, g \rangle = \int \big\langle f(x), g(x) \big\rangle_{\mathbb{R}^{\Sigma^l}} \, q(x) dx\,.
\]
We set $X = \Sigma^k, Y = \Sigma^l$ with the counting measure, $q(x,y)$ is the distribution for $x \in \Sigma^k, y \in \Sigma^l$ and $q(x) = q_k(x)$ denotes the distribution over sequences of length $k$. Let $\mathcal{C} \in \mathscr{H}$ be the function corresponding to the true conditional probabilities
\begin{equation}\label{eq:ckl}
\mathcal{C}(x) = \int q(y|x) \, y dy\,, \qquad \forall x \in \Sigma^k\,.
\end{equation}
As explained in \cite{modes2} by applying tensor decomposition methods such as SVD to $\mathcal{C}$ we obtain a natural orthonormal basis $\{e_{\alpha\beta}\}_{\alpha \in \Lambda,\beta \in \Lambda^{++}}$ of $\mathscr{H}$. Here $\Lambda$ indexes right singular vectors, $\Lambda^+$ indexes left singular vectors for nonzero singular values, and $\Lambda^{++} \supseteq \Lambda^{+}$ is an extended set of indices associated to an arbitrary extension of the left singular vectors to an orthonormal basis of $\mathbb{R}^{\Sigma^l}$.\footnote{For consistency with \citet{modes2} we continue to use $\beta$ as an index, note this is distinct from the inverse temperature.} The examples given in \citet{modes2} justify relating such modes to \emph{atomic patterns} in the data distribution. In this section when we sum over $\alpha, \beta$ we always mean $\alpha \in \Lambda$ and $\beta \in \Lambda^{++}$ respectively. Sometimes we use $\gamma$ in place of $\beta$.

Let us consider a variation of the data distribution $q(x,y)$ which is a ``concept shift'', that is, we vary the conditional distribution $q(y|x)$ but not $q(x)$. Let us write $q'(y|x)$ for some other conditional distribution and set $q'(x,y) = q'(y|x) q(x)$. Let $\mathcal{C}'$ denote the equivalent of \eqref{eq:ckl} but for $q'$.

Recall that $\ell_{(x,y)}(w) = - \log p(y|x,w)$. Let $\chi$ be a susceptibility as defined in \cref{section:suscep}.

\begin{definition}\label{defn:mode_aligned_shift} We call
\begin{equation}\label{eq:mode_aligned_shift}
\mathcal{C}' - \mathcal{C} = \sum_{\alpha,\beta} c^{\alpha\beta} e_{\alpha \beta}
\end{equation}
the \emph{mode decomposition} of the data distribution shift.
\end{definition}

\begin{definition} Given $w \in W$ we define $\Phi(w) \in \mathscr{H}$ by
\[
\Phi(w)(x) = \int \ell_{(x,y)}(w) \, y dy\,.
\]
Given $\alpha \in \Lambda$ we define $\Phi_{\alpha\beta}: W \longrightarrow \mathbb{R}$ by $\Phi_{\alpha\beta}(w) = \big\langle \Phi(w), e_{\alpha \beta} \big\rangle_{\mathscr{H}}$.
\end{definition}

\begin{lemma}\label{lemma:mode_probe_suscep} We have
\[
\chi = - \sum_{\alpha,\gamma} c^{\alpha\gamma} \operatorname{Cov}_\beta\big[ \phi, \Phi_{\alpha\gamma} \big]\,.
\] 
\end{lemma}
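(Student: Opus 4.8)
The plan is to combine Lemma \ref{lemma:persamp_suscep} with the definition of a mode aligned concept shift and the bilinearity of the covariance. First I would invoke Lemma \ref{lemma:persamp_suscep} to write $\chi = -\operatorname{Cov}_\beta[\phi, \Delta L] + O(\tfrac{1}{n})$, where $\Delta L = \frac{\partial L^h}{\partial h}\big|_{h=0}$ for the concept shift $q_h = (1-h)q + h q'$ with $q'(x,y) = q'(y|x)q(x)$. By (the one-sided analogue of) Lemma \ref{lemma:deltaLisdiff} this gives $\Delta L(w) = L^1(w) - L(w) = -\int (q'(x,y) - q(x,y)) \log p(y|x,w)\, dx\, dy = \int (q'(y|x) - q(y|x))\, \ell_{(x,y)}(w)\, q(x)\, dx\, dy$.

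The key step is to recognize this integral as the $\mathscr{H}$-inner product of $\Phi(w)$ with $\mathcal{C}' - \mathcal{C}$. Indeed, unpacking the definitions, $\langle \Phi(w), \mathcal{C}' - \mathcal{C}\rangle_{\mathscr{H}} = \int \big\langle \Phi(w)(x), \mathcal{C}'(x) - \mathcal{C}(x)\big\rangle_{\mathbb{R}^{\Sigma^l}}\, q(x)\, dx$, and since $\Phi(w)(x) = \int \ell_{(x,y)}(w)\, y\, dy$ while $\mathcal{C}'(x) - \mathcal{C}(x) = \int (q'(y|x) - q(y|x))\, y\, dy$, the pairing in $\mathbb{R}^{\Sigma^l}$ picks out exactly $\int (q'(y|x) - q(y|x))\, \ell_{(x,y)}(w)\, dy$ (using that $\{y\}_{y \in \Sigma^l}$ is the standard orthonormal basis of $\mathbb{R}^{\Sigma^l}$). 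Hence $\Delta L(w) = \langle \Phi(w), \mathcal{C}' - \mathcal{C}\rangle_{\mathscr{H}}$. Now substitute the mode aligned hypothesis $\mathcal{C}' - \mathcal{C} = \sum_{\alpha \in \Lambda} c^\alpha e_{\alpha\alpha}$: since the sum is finite (or the series converges in $\mathscr{H}$ and $\Phi(w) \in \mathscr{H}$), we may exchange sum and inner product to get $\Delta L(w) = \sum_{\alpha \in \Lambda} c^\alpha \langle \Phi(w), e_{\alpha\alpha}\rangle_{\mathscr{H}} = \sum_{\alpha \in \Lambda} c^\alpha \Phi_\alpha(w)$.

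Finally I would use bilinearity of $\operatorname{Cov}_\beta$ in its second argument: $\operatorname{Cov}_\beta[\phi, \Delta L] = \operatorname{Cov}_\beta\big[\phi, \sum_\alpha c^\alpha \Phi_\alpha\big] = \sum_\alpha c^\alpha \operatorname{Cov}_\beta[\phi, \Phi_\alpha]$, which combined with the expression for $\chi$ yields $\chi = -\sum_{\alpha \in \Lambda} c^\alpha \operatorname{Cov}_\beta[\phi, \Phi_\alpha] + O(\tfrac{1}{n})$, as claimed.

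\textbf{Main obstacle.} The routine parts (Lemma \ref{lemma:persamp_suscep}, bilinearity) are immediate; the step needing the most care is justifying the interchange of the (possibly infinite) sum over $\alpha$ with the inner product and with $\operatorname{Cov}_\beta$, and more basically checking that $\Phi(w) \in \mathscr{H}$ for $p(y|x,w)$ bounded away from $0$ on a compact $W$ — i.e. that $\ell_{(x,y)}(w)$ is square-integrable against $q$ so the pairings are well-defined. If $\Lambda$ is finite (as it is when $\Sigma$ is finite and $k, l$ are fixed, since $\mathscr{H}$ is finite-dimensional) this is automatic and the argument is a short computation; the only genuine subtlety is the one-sided-derivative bookkeeping inherited from Remark \ref{remark:finite_diff_smalldh} and Lemma \ref{lemma:deltaLisdiff}, which I would handle exactly as in the main text.
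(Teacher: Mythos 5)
Your proposal is correct and follows essentially the same route as the paper: compute $\Delta L$ for the concept shift, expand $\mathcal{C}'-\mathcal{C}$ in the mode basis to identify $\Delta L = \sum_{\alpha} c^{\alpha}\Phi_{\alpha}$ via the $\mathscr{H}$-inner product, and conclude by Lemma \ref{lemma:persamp_suscep} and bilinearity of the covariance. Your extra remarks on $\Phi(w)\in\mathscr{H}$ and interchanging the sum with the pairing are sensible bookkeeping (automatic when $\Sigma$ is finite) but do not change the argument.
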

\begin{proof}
We compute that
\begin{align*}
\Delta L &= - \int( q'(y|x) - q(y|x) ) q(x) \log p(y|x,w) dx dy\\
&= - \sum_{\alpha,\gamma} c^{\alpha\gamma} \int e_{\alpha \gamma}(x)(y) q(x) \log p(y|x,w) dx dy\\
&= \sum_{\alpha,\gamma} c^{\alpha\gamma} \big\langle \Phi(w), e_{\alpha \gamma} \big\rangle_{\mathscr{H}}
\end{align*}
so the conclusion follows from Lemma \ref{lemma:persamp_suscep} and bilinearity of the covariance. 
\end{proof}


\begin{definition} For any $\alpha \in \Lambda, \gamma \in \Lambda^{++}$ we call 
\[
\chi_{\alpha\gamma} = -\operatorname{Cov}_\beta\big[ \phi, \Phi_{\alpha\gamma} \big]
\]
the \emph{susceptibility} of $\phi$ to the mode pair $\alpha, \gamma$. 
\end{definition}

This depends on the observable $\phi$ but not on the variation in the data distribution. From Lemma \ref{lemma:mode_probe_suscep} we obtain that the expression of the susceptibility in the mode basis is
\begin{equation}\label{eq:decomp_chi_purechi}
\chi = \sum_{\alpha,\gamma} c^{\alpha\gamma} \chi_{\alpha\gamma}
\end{equation}
for some coefficients $c^{\alpha\gamma}$. 

\subsection{Structural inference and modes}

In this section we provide motivation for the approach introduced in \cref{section:structural_inference}, by explaining how the theory of modes provides a factorization of the data matrix. We use the notation introduced in the main text and the previous section. We denote by $\chi^d_j$ the susceptibility for $q^d$ and observable $\phi_j$. 

We denote by $\chi_{\alpha\gamma, j}$ the susceptibility for a mode pair $\alpha,\gamma$ and the observable $\phi_j$, as introduced in \cref{section:modes_probes}. Ideally we would like to measure $\chi_{\alpha\gamma,j}$ but in practice we do not have the ability to sample from variations of the data distribution which only affect single modes. However, by Lemma \ref{lemma:mode_probe_suscep} (for mode-aligned concept shifts)
\begin{equation}
\chi^d_j = \sum_{\alpha \in \Lambda} c^{d, \alpha\gamma} \chi_{\alpha\gamma, j}
\end{equation}
where we note that $c^{d,\alpha\gamma}$ is from Definition \ref{defn:mode_aligned_shift} and does not depend on $j$. This is a matrix equation
\begin{equation}\label{eq:XCP}
X = C P
\end{equation}
where $C$ is the $|\mathcal{D}| \times (|\Lambda| |\Lambda^{++}|)$ matrix with entries $c^{d, \alpha\gamma}$ and $P$ is the $(|\Lambda||\Lambda^{++}|) \times |\mathcal{H}|$ matrix with entries $\chi_{\alpha\gamma, j}$. Then $C$ gives the coefficients of each variation of the data distribution in each mode and $P$ is the matrix of mode susceptibilities. More informally, we think of $C$ as the \emph{coupling coefficients} between variations in the data and modes, and $P$ as the coupling coefficients between observables and modes.

The quantities $\chi^d_j$ may be estimated from data, that is, we can obtain an empirical estimate of the data matrix $X$. On the other hand $C, P$ are not directly observable because it is intractable to determine the complete set of modes of a complex distribution. However \eqref{eq:XCP} suggests a way to apply methods of data analysis to the matrix $X$ in order to \emph{infer} the modes and their couplings to the observables.

Applying matrix factorization methods like SVD to the data matrix $X$ is therefore one way to attempt to recover (linear combinations of) the underlying modes of the data distributions, or at least a coarse-graining of them. In this interpretation, the right singular vectors tell us about the relationship between attention heads and modes, and the principal components tell us about the coupling between modes and datasets.

\subsection{Susceptibilities as a tangent map}\label{section:frechet}

\begin{figure}[t]
    \centering
    \includegraphics[width=\linewidth]{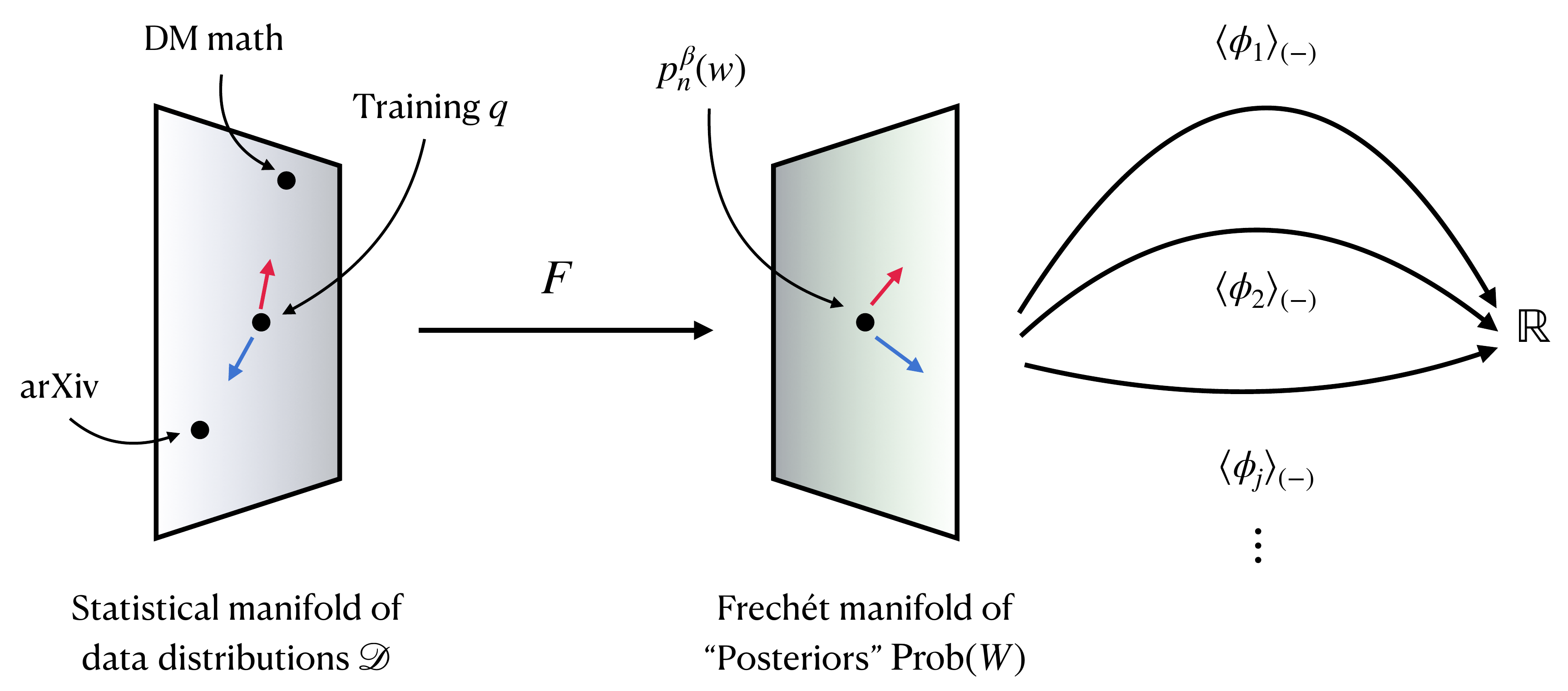}
    \caption{\textbf{Susceptibilities as components of a tangent map.} We can think of susceptibilities as components of the tangent map from data distributions to the expectations of a family of observables with respect to the posterior formed for that data distribution, with a fixed model. In this diagram we indicate $q \in \mathscr{D}$, the training distribution, and tangent vectors in this space corresponding to the mixtures with \pilesub{arxiv} and \pilesub{dm\_mathematics}. The tangent map of $F$ sends these tangent vectors to tangent vectors in the space of ``posteriors''.}
    \label{fig:suscep_concept}
\end{figure}

This is all summarized neatly by \cref{fig:suscep_concept}. We can define a smooth map $F$ which sends a distribution $q' \in \mathscr{D}$ over sequences of tokens to (for fixed $n, \beta$) the annealed and tempered posterior distribution $F(q') = p^\beta_n(w|q')$ as a point in the Frechét manifold $\operatorname{Prob}(W)$ of positive densities of integral $1$ on $W$ \citep{bauer2016uniqueness}. Suitably well-behaved observables $\phi_j: W \longrightarrow \mathbb{R}$ define, by taking expectations, functions on this manifold. Given a set $\{ \phi_j \}_{j \in \mathcal{H}}$ we therefore obtain a map
\[
G = \begin{pmatrix} \langle \phi_1 \rangle_{(-)} \\ \vdots \\
\langle \phi_j \rangle_{(-)} \\
\vdots\end{pmatrix}: \operatorname{Prob}(W) \longrightarrow \mathbb{R}^{\mathcal{H}}\,.
\]
Composing with $F$ gives a map $\mathscr{D} \longrightarrow \mathbb{R}^{\mathcal{H}}$ whose tangent map at $q$ is
\begin{equation}\label{eq:data_matrix}
T_q\big( G \circ F \big) = T_{p^\beta_n(w)}(G) \circ T_q(F): T_q \mathscr{D} \longrightarrow T_{\langle \phi \rangle} \mathbb{R}^{\mathcal{H}}
\end{equation}
where $\langle \phi \rangle$ denotes the vector of expectation values of all observables with respect to the unperturbed (annealed, tempered) posterior $p^\beta_n(w)$. By definition this linear map, or rather its restriction to the subspace of the tangent space spanned by the tangents to the mixtures with the $q^d$, has as its matrix the (transposed) response matrix $X^T$ (up to a factor of $n \beta$). The relationship between the matrix factorizations in \eqref{eq:XCP} and \eqref{eq:data_matrix} seems important.

\section{Supporting results}\label{section:extra_results}

In this section we collect data and analysis that was performed to validate the basic methodology of susceptibilities, and which supports the headline results of the main text.




\subsection{Per-token metric comparisons}
\label{appendix:per-token-metric-comparisons}

\begin{figure}[tpb]
    \centering
    \includegraphics[width=0.8\textwidth]{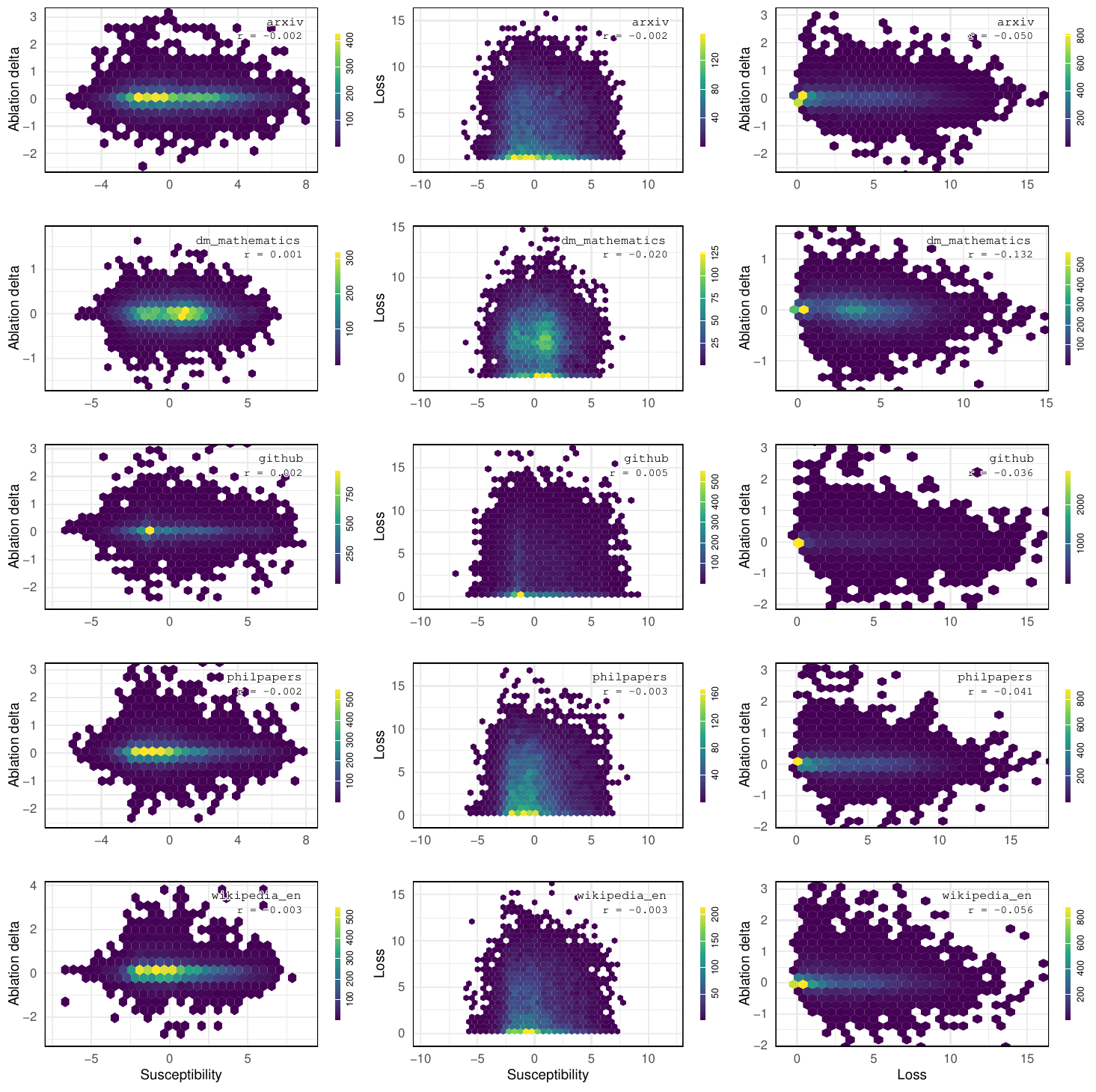}
    \caption{Pairwise comparisons of per-token susceptibilities, ablation, and loss as density maps for attention head \protect\attentionhead{0}{0}. The color bar legend on the right of each subplot indicates the number of points represented by each colored hex.}
    \label{fig:per-token-metrics-density}
\end{figure}

\begin{figure}[t]
    \centering
    \includegraphics[width=\linewidth]{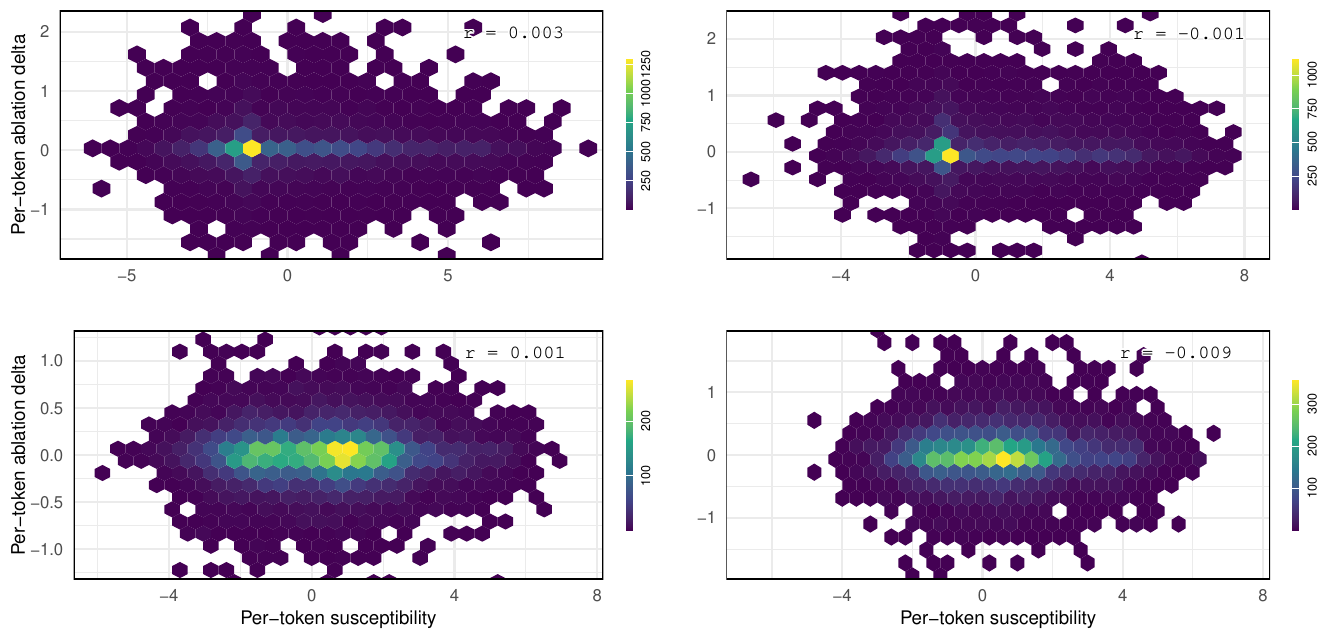}
    \caption{\textbf{Susceptibilities vs ablations.} Each plot shows distributions of per-token susceptibilities versus zero ablation loss differences as density maps. Each subplot corresponds to a pair of attention head and Pile subset, with datasets fixed across rows (top: \pilesub{github}, bottom: \pilesub{dm\_mathematics}) and heads fixed across columns (left: \protect\attentionhead{0}{0}, right: \protect\attentionhead{1}{2}). Hexagon colors indicate a count of tokens with susceptibility and ablation delta values in a given region. Shown inset are the Pearson correlation coefficients, indicating negligible correlation.}
    \label{fig:per-token-susc-density}
\end{figure}

In \cref{fig:per-token-susc-density} we select a pair of attention heads \attentionhead{0}{0}, \attentionhead{1}{2} and a pair of datasets \pilesub{github}, \pilesub{dm\_mathematics} and for data sampled from each dataset we compute the pair $(s, a)$ where $s$ is the per-token susceptibility and $a$ is the loss after ablation minus the pre-ablation loss (we call this the \emph{ablation delta}). The correlation between these two metrics is very small.



\begin{figure}[tpb]
    \centering
    \includegraphics[width=\textwidth]{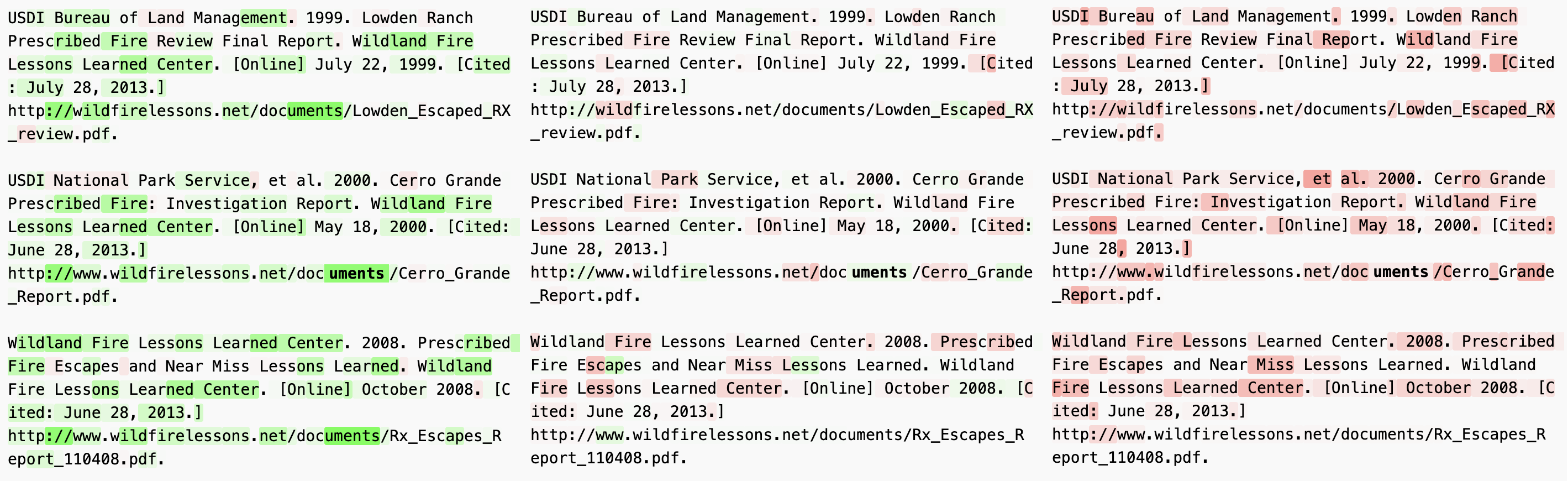}
    \caption{The same sample from \texttt{freelaw} is shown three times, from left to right with per-token susceptibilities, per-token ablation loss differences, and original token loss highlighted, with susceptibilities and ablations calculated for \protect\attentionhead{1}{4}. On the left, green indicates a positive susceptibility and red a negative one. In the middle, green indicates a negative ablation loss difference (ablating the head improves performance) and red a positive ablation loss difference. On the right, a deeper red highlight indicates higher loss on that token.}
    \label{fig:per-token-induction-pattern}
\end{figure}

In \cref{fig:per-token-induction-pattern}, we see a specific example where the induction pattern is particularly visible in the positive susceptibilities, while there is no discernible induction pattern in the zero-ablation loss differences or original per-token loss.

\subsection{Per-token susceptibility variation with context}\label{section:per_token_suscep_variation_context}

\begin{figure}[tpb]
    \centering
    \includegraphics[width=0.8\textwidth]{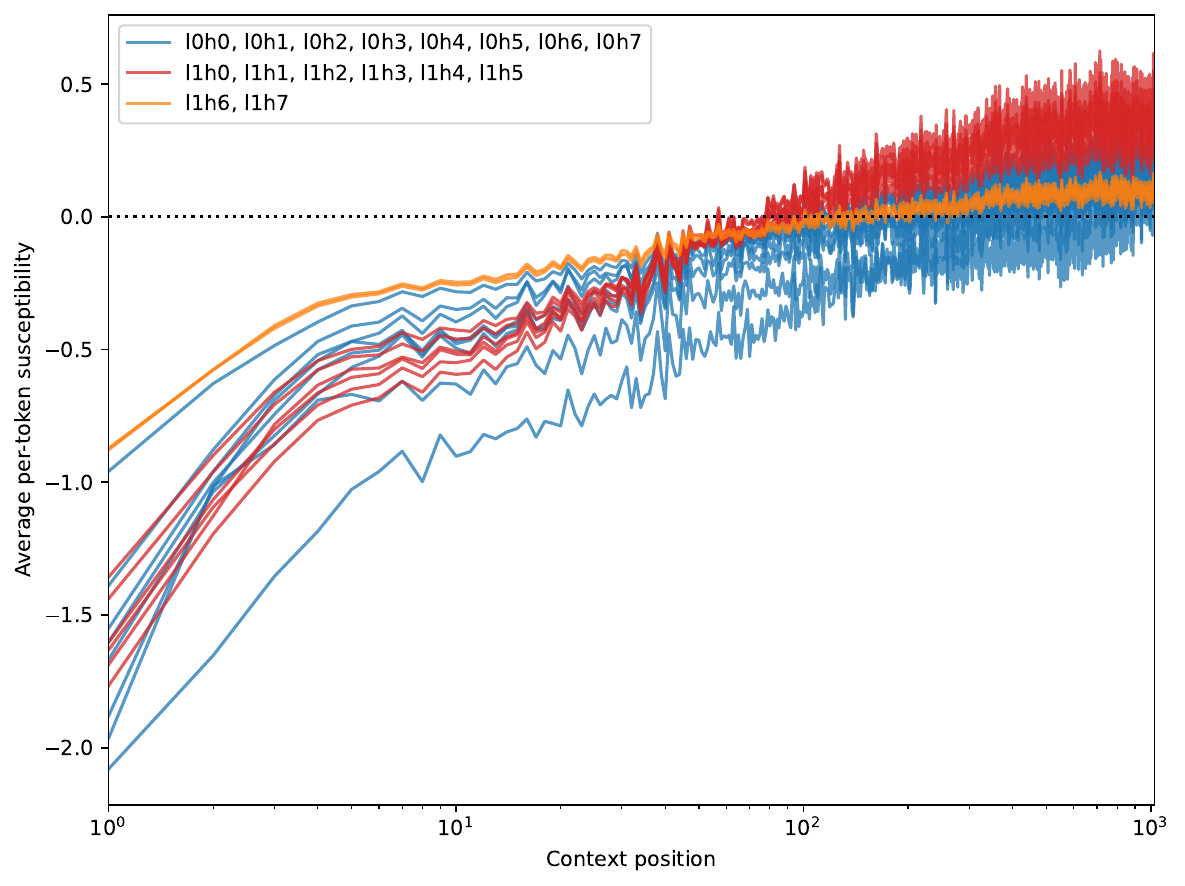}
    \caption{\textbf{Per-token susceptibilities as a function of context length.} The per-token susceptibilities for each attention head averaged over pairs $x,y$ where $x$ has a given length. Layer $0$ heads are shown in blue, layer $1$ induction heads in orange and the other layer $1$ heads in red. We note a strong dependence on context length for these heads \protect\attentionhead{1}{0}-\protect\attentionhead{1}{5}.}
    \label{fig:susceptibilities-over-context}
\end{figure}


In \cref{fig:susceptibilities-over-context} we plot the change in average per-token susceptibility for each attention head using 160 contexts from each of the following datasets: \texttt{pile1m}, \texttt{arxiv}, \texttt{dm\_mathematics}, \texttt{enron\_emails}, \texttt{freelaw}, \texttt{github}, \texttt{nih\_exporter}, \texttt{philpapers}, \texttt{pubmed\_abstracts}, \texttt{pubmed\_central}, \texttt{stackexchange}, 
\texttt{uspto\_backgrounds}, \texttt{wikipedia\_en}.

We note that in general the per-token susceptibilities increase with context length, with the strongest rate of increase being in the layer $1$ multigram heads \attentionhead{1}{0}-\attentionhead{1}{5} from length $10$ onwards. It is not clear what the full explanation for this phenomenon is, but it is consistent with the observation that these heads tend to suppress induction patterns (\cref{section:trajectory_pca}) and the longer the context, the more token pairs $xy$ are recognizable as induction patterns.

\subsection{Per-token susceptibilities explain susceptibility differences}\label{section:per_token_explain}

It is natural to ask \emph{why} some heads are outliers for particular datasets but not others. For instance \attentionhead{0}{1} is a negative outlier (by which we mean that it lies below the other heads) for \pilesub{github} but not \pilesub{freelaw} or \pilesub{hackernews} (see \cref{section:app_top_suscep}). Based on \cref{section:modes_probes} our model for susceptibilities is that a head is sensitive to some patterns in the data distribution more than others (we say it is \emph{susceptible} to the pattern) and as we vary the data distribution this will drive different responses from different heads.

On this basis our hypothesis is that \attentionhead{0}{1} is negatively susceptible to a pattern that is common in \pilesub{github} but not in the other two datasets. To explore this we turn to the per-token susceptibilities (Definition \ref{defn:per_token_susceptibility}) which we use to explain differences in overall susceptibilities. For this to be a sound methodology we need the averaged per-token susceptibilities to co-vary correctly with the susceptibility. In \cref{fig:susceptibility-vs-per-token-susceptibility} we plot one against the other. The Pearson's correlation coefficient is 0.958 and the line of best fit has slope $9.794$. Since $\delta h = 0.1$ we would predict that the slope should be $10$. We view this as sufficiently well-correlated that we can use per-token susceptibilities to study differences between susceptibilities.

\begin{figure}[tp]
    \centering
    \includegraphics[width=0.5\textwidth]{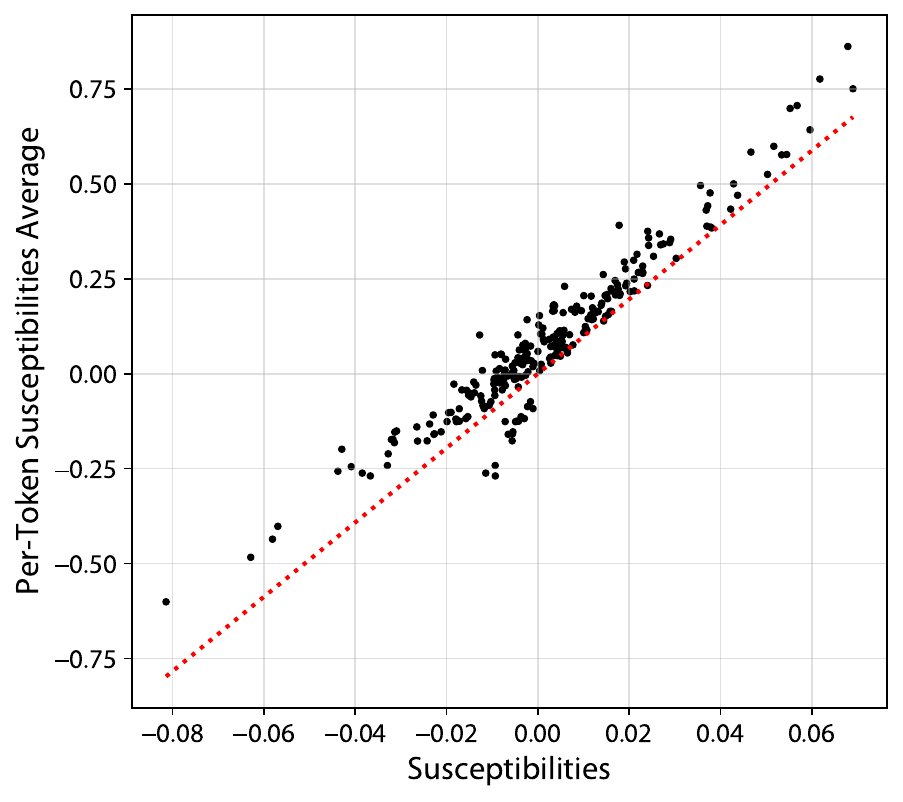}
    \caption{Comparison of average per-token susceptibilities and susceptibilities for every attention head and dataset. The dotted red line is the line of best fit.}
    \label{fig:susceptibility-vs-per-token-susceptibility}
\end{figure}

\begin{figure}[htbp]
    \centering
    \begin{minipage}{0.48\textwidth}
        \centering
        \includegraphics[width=\textwidth]{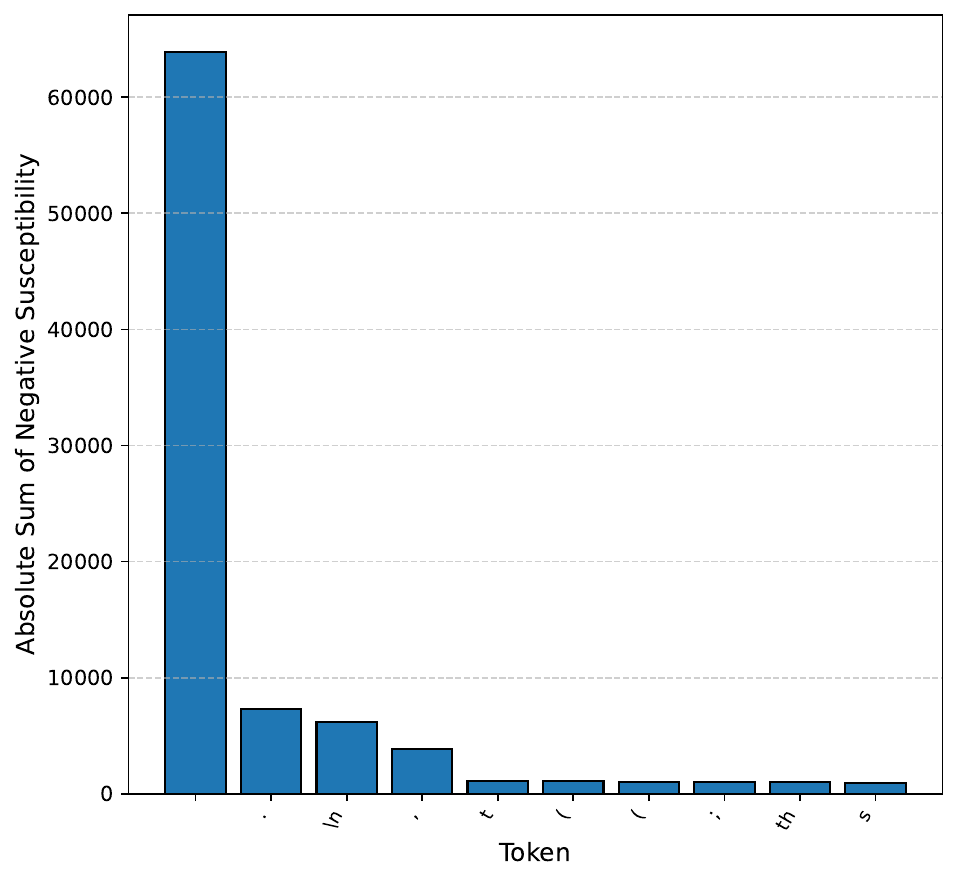}
    \end{minipage}
    \hfill
    \begin{minipage}{0.48\textwidth}
        \centering
        \includegraphics[width=\textwidth]{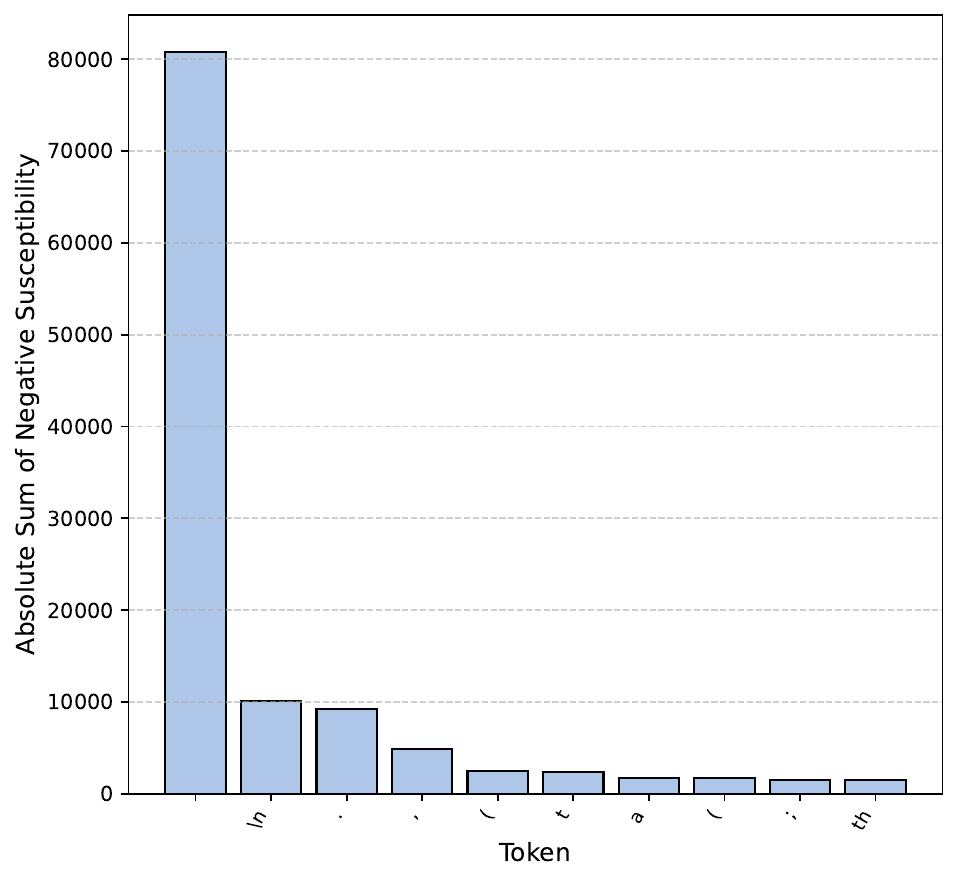}
    \end{minipage}
    \caption{\textbf{Top negative susceptibility tokens.} Shown is the overall sum of negative susceptibility magnitude from 163k tokens sampled from \pilesub{github-all}. \emph{Left}: \protect\attentionhead{0}{0}, \emph{Right:} \protect\attentionhead{0}{1}. In both cases the top contributing token is a space \tokenbox{~}.}
    \label{fig:l0h0_vs_l0h1}
\end{figure}

Let us now turn to a detailed study of \attentionhead{0}{0} versus \attentionhead{0}{1} on \pilesub{github} (at the end of training). For 163k tokens sampled from \pilesub{github-all} we form the data in \cref{tab:l0h0_vs_l0h1}. Note that to obtain the scaled sum we add up the total positive and total negative susceptibility, compute the average over tokens by dividing by the total number of tokens ($163k$), and then multiply by $\delta h = 0.1$.

\begin{table}[htbp]
    \centering
    \begin{tabular}{|c|c|c|c|c|c|c|}
        \hline
        \textbf{Head} & \textbf{Total pos.} & \textbf{Total neg.} & \textbf{Sum} & \textbf{Scaled sum} & \tokenbox{~} \textbf{Mean} & \tokenbox{~} \textbf{Std} \\
        \hline
        \protect\attentionhead{0}{0} & 171,561 & -145,342 & 26,219 & 0.016 & -1.49 & 0.52 \\
        \hline
        \protect\attentionhead{0}{1} & 139,458 & -205,255 & -65,797 & -0.040 & -1.89 & 0.73 \\
        \hline
    \end{tabular}
    \vspace{0.5cm}
    \caption{Total negative and positive per-token susceptibility for two attention heads on \pilesub{github-all} and statistics for per-token susceptibilities $\chi_{(x,y)}$ with $y = \tokenbox{~}$ the space token.}
    \label{tab:l0h0_vs_l0h1}
\end{table}

Comparing to the susceptibilities in \cref{fig:susceptibility-github-all} we see that \attentionhead{0}{0} lies in the range $[0.01,0.02]$ and \attentionhead{0}{1} is in the range $[-0.05,-0.06]$. The gap in total positive per-token susceptibility is 32k vs a gap of 60k in the negative susceptibility. Since we want to understand why \attentionhead{0}{1} is a negative outlier, we focus on the latter gap. In \cref{fig:l0h0_vs_l0h1} we show the total contributions of individual tokens $y$ (over all contexts $x$) to the total negative susceptibility for the two heads. In both cases we see that the bulk comes from the space token \tokenbox{~}, which contributes roughly an order of magnitude more than the next most important token. In fact \tokenbox{~} accounts for 44\% of the total negative susceptibility for \attentionhead{0}{0} and 39\% for \attentionhead{0}{1}. The space token appears 42,764 times in this dataset, and the statistics of the per-token susceptibilities for \tokenbox{~} (in any context) are shown in the rightmost two columsn of \cref{tab:l0h0_vs_l0h1}. The full distributions are shown in \cref{fig:l0h0_vs_l0h1_histogram}.

The difference in the means is enough to explain about 17k of the total 60k gap in negative susceptibility, so about a third. This is evidence that a primary reason that \attentionhead{0}{1} is a negative outlier on \pilesub{github-all} is that this head is substantially more susceptible to the space token than other heads (e.g. it is 27\% larger in magnitude than the mean susceptibility of \attentionhead{0}{0}) \emph{and} spaces are more frequent in \pilesub{github} than \pilesub{freelaw} or \pilesub{hackernews} (compare 43k occurrences for \pilesub{github} with 32k for \pilesub{freelaw} and 16k for \pilesub{hackernews}). This is not surprising since code is often structured using space tokens.

\begin{figure}[tbp]
    \centering
    \begin{minipage}{0.48\textwidth}
        \centering
        \includegraphics[width=\textwidth]{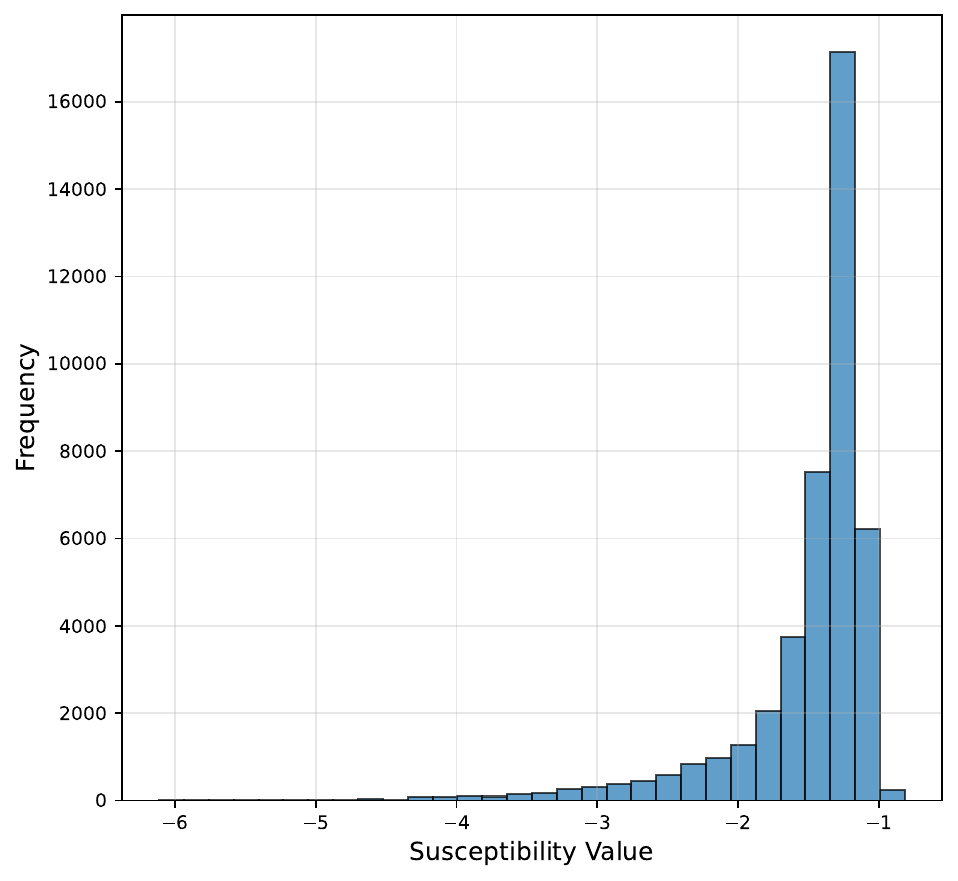}
    \end{minipage}
    \hfill
    \begin{minipage}{0.48\textwidth}
        \centering
        \includegraphics[width=\textwidth]{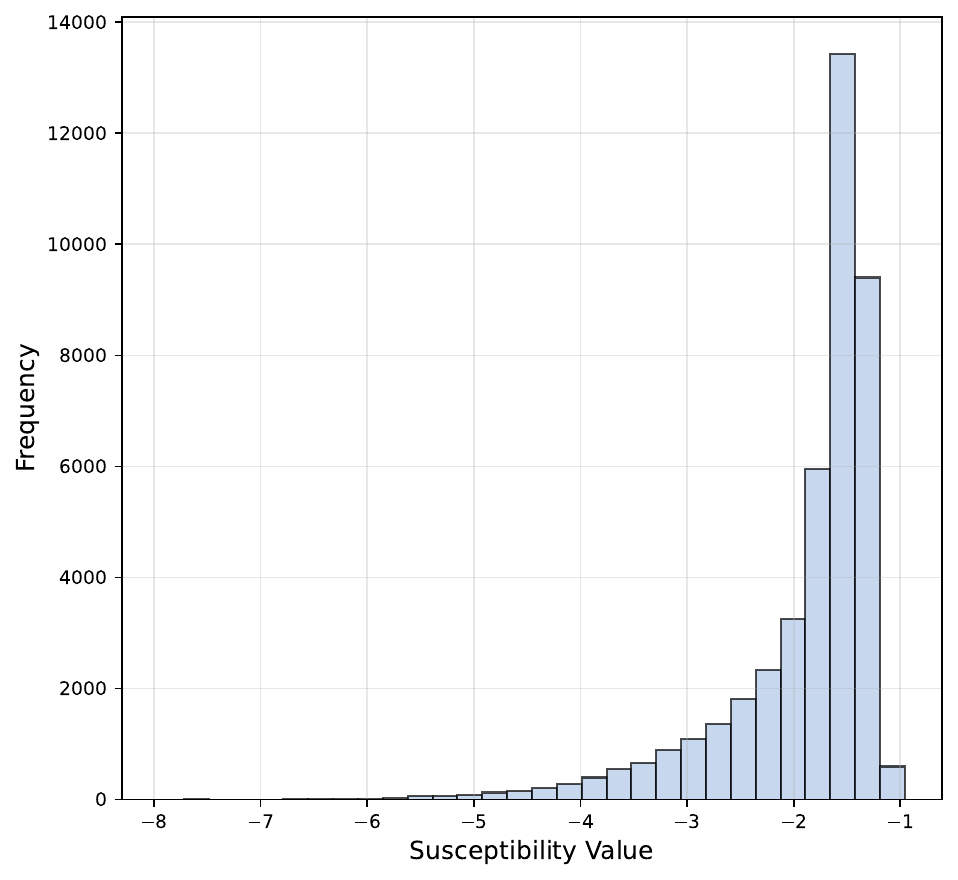}
    \end{minipage}
    \caption{\textbf{Histogram of susceptibilities.} Shown is a histogram of per-token susceptibility values for the space token \tokenbox{~} across 163k tokens sampled from \pilesub{github-all}. \emph{Left}: \protect\attentionhead{0}{0}, \emph{Right:} \protect\attentionhead{0}{1}.}
    \label{fig:l0h0_vs_l0h1_histogram}
\end{figure}

\subsection{Bimodal tokens}\label{section:bimodal_tokens}

The per-token susceptibility $\chi_{(x,y)}$ is a function of a context $x$ and an individual token $y$ being predicted in that context. In \cref{section:per_token_explain} we saw how differences in susceptibilities of heads can potentially be attributed to per-token susceptibilities, but there the analysis was of a token $y = \tokenbox{~}$ independent of its context $x$. Since not all occurrences of $y$ are semantically the same, we should expect that the distribution of susceptibilities across $x$ for a fixed $y$ should sometimes reflect this.

Indeed, in this section we report on an interesting phenomena where this distribution can be bimodal: there are some tokens $y$ for which there are two \emph{kinds} of contexts $x$ with the susceptibilities $\chi_{(x,y)}$ in each context clustering around well-separated values.

\paragraph{The head \attentionhead{0}{0} and token \tokenbox{to}.} On both \pilesub{github-all} and \pilesub{arxiv} we observe that the token $\tokenbox{to}$ is bimodal for the attention head \attentionhead{0}{0}, as shown in \cref{fig:bimodal_tokens}. On \pilesub{arxiv} there is one mode with positive susceptibility and another with negative susceptibility. A typical positive instance is
\[
\tokenbox{app}\tokenbox{ears}\tokenbox{~}\overset{1.56}{\tokenboxline{to}}\tokenbox{~be}\tokenbox{~relative}\tokenbox{ly}\tokenbox{~ins}\tokenbox{ens}\tokenbox{itive}
\]
where the overset number is the susceptibility of the outlined token. A typical instance of the negative mode is
\[
\tokenbox{~\$}\tokenbox{N}\tokenbox{\textbackslash}\overset{-2.23}{\tokenboxline{to}}\tokenbox{\textbackslash}\tokenbox{in}\tokenbox{ft}\tokenbox{y}\tokenbox{\$}\tokenbox{~}\tokenbox{al}\tokenbox{ong}\,.
\]
That is, the examples in the positive mode are normal occurrences of \tokenbox{to} as a word, whereas the negative mode consists of occurrences that are part of LaTeX commands. For \pilesub{github-all} the token is also bimodal, with one positive mode and a negative mode. Typical instances of the positive mode are again occurrences of \tokenbox{to} in normal English sentences and a typical negative instance is
\[
\tokenbox{if}\tokenbox{~(}\tokenbox{by}\tokenbox{tes}\tokenbox{\_}\overset{-2.17}{\tokenboxline{to}}\tokenbox{\_}\tokenbox{s}\tokenbox{k}\tokenbox{ip}\tokenbox{~=}\tokenbox{=}\tokenbox{~-}\tokenbox{1}\tokenbox{)}\,.
\]

\begin{figure}[tbp]
    \centering
    \begin{minipage}{0.48\textwidth}
        \centering
        \includegraphics[width=\textwidth]{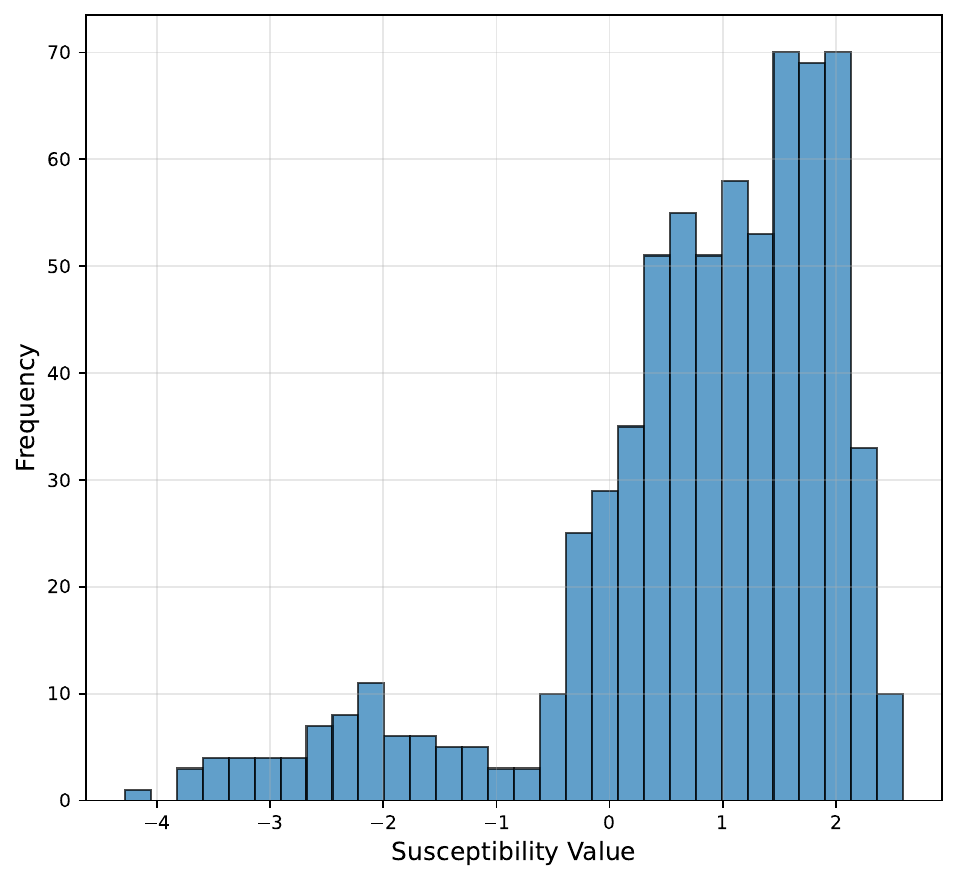}
    \end{minipage}
    \hfill
    \begin{minipage}{0.48\textwidth}
        \centering
        \includegraphics[width=\textwidth]{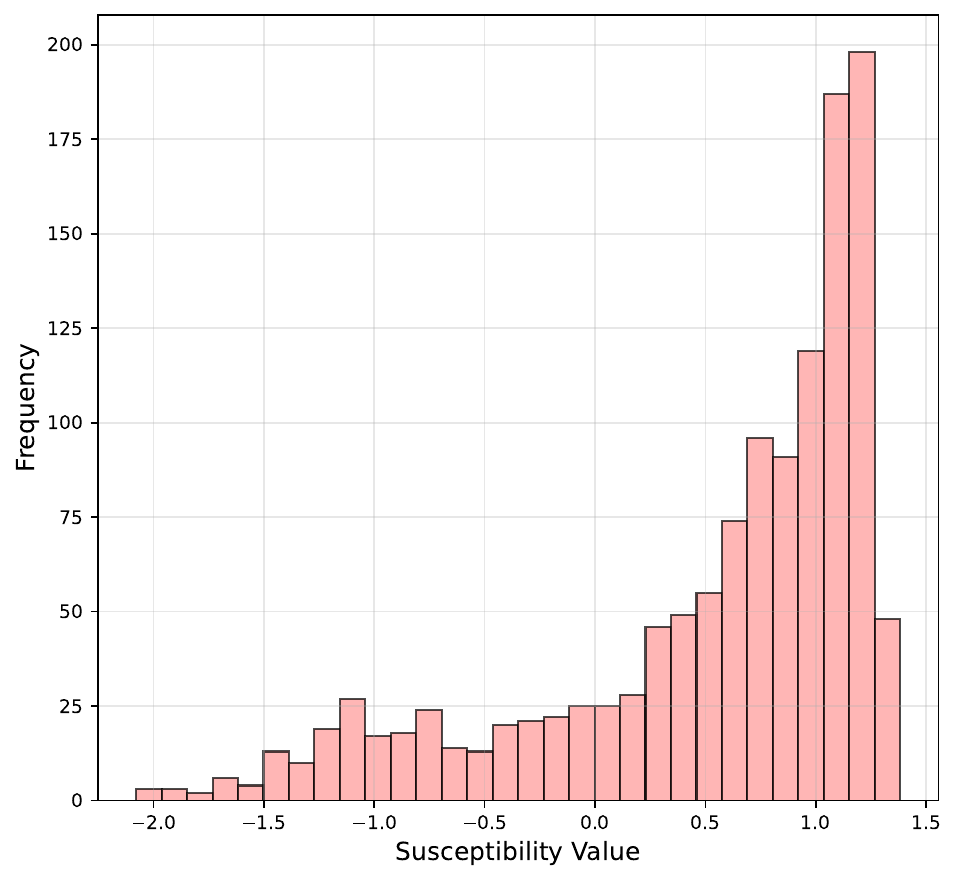}
    \end{minipage}
    \caption{\textbf{Bimodal tokens.} Histograms showing frequency of certain susceptibility values for particular tokens and heads in particular datasets. \emph{Left:} Attention head \protect\attentionhead{0}{0} and token \tokenbox{to} in \pilesub{arxiv}. \emph{Right:} Attention head \protect\attentionhead{1}{7} and token \tokenbox{/} in \pilesub{hackernews}.}
    \label{fig:bimodal_tokens}
\end{figure}

\paragraph{The head \attentionhead{1}{7} and token \tokenbox{/}.} Similarly we see in \cref{fig:bimodal_tokens} a positive and negative mode in the distribution of susceptibility values for \attentionhead{1}{7} and the token \tokenbox{/} in \pilesub{hackernews}. A typical instance of the positive mode is a forward slash in a URL:
\[
\tokenbox{n}\tokenbox{pr}\tokenbox{.}\tokenbox{org}\overset{4.10}{\tokenboxline{/}}\tokenbox{se}\tokenbox{ctions}\overset{2.35}{\tokenboxline{/}}
\]
while typical negative instances are ``either/or'' constructions
\[
\tokenbox{sa}\tokenbox{ving}\overset{-2.76}{\tokenboxline{/}}\tokenbox{com}\tokenbox{m}\tokenbox{itting}\,, \quad \tokenbox{~has}\tokenbox{h}\tokenbox{ic}\tokenbox{or}\tokenbox{p}\overset{-3.61}{\tokenboxline{/}}\tokenbox{go}\tokenbox{-}\tokenbox{m}\tokenbox{mult}\tokenbox{ier}\tokenbox{ror}\,.
\]
Interestingly the distribution of susceptibilities is also bimodal for the same head and token on \pilesub{enron\_emails}, with the positive mode still being about slashes in URLs, but the negative mode is closer to zero and occurs much more frequently (see \cref{fig:bimodal_l1h7_enron}). Typical negative instances are
\[
\tokenbox{~K}\tokenbox{ay}\tokenbox{~M}\tokenbox{ann}\overset{-2.73}{\tokenboxline{/}}\tokenbox{C}\tokenbox{or}\tokenbox{p}\overset{-1.12}{\tokenboxline{/}}\tokenbox{En}\tokenbox{ron}\,, \quad \tokenbox{11}\overset{-0.33}{\tokenboxline{/}}\tokenbox{09}\overset{1.60}{\tokenboxline{/}}\tokenbox{2}\tokenbox{000}\,.
\]

\begin{figure}[tpb]
    \centering
    \includegraphics[width=0.6\textwidth]{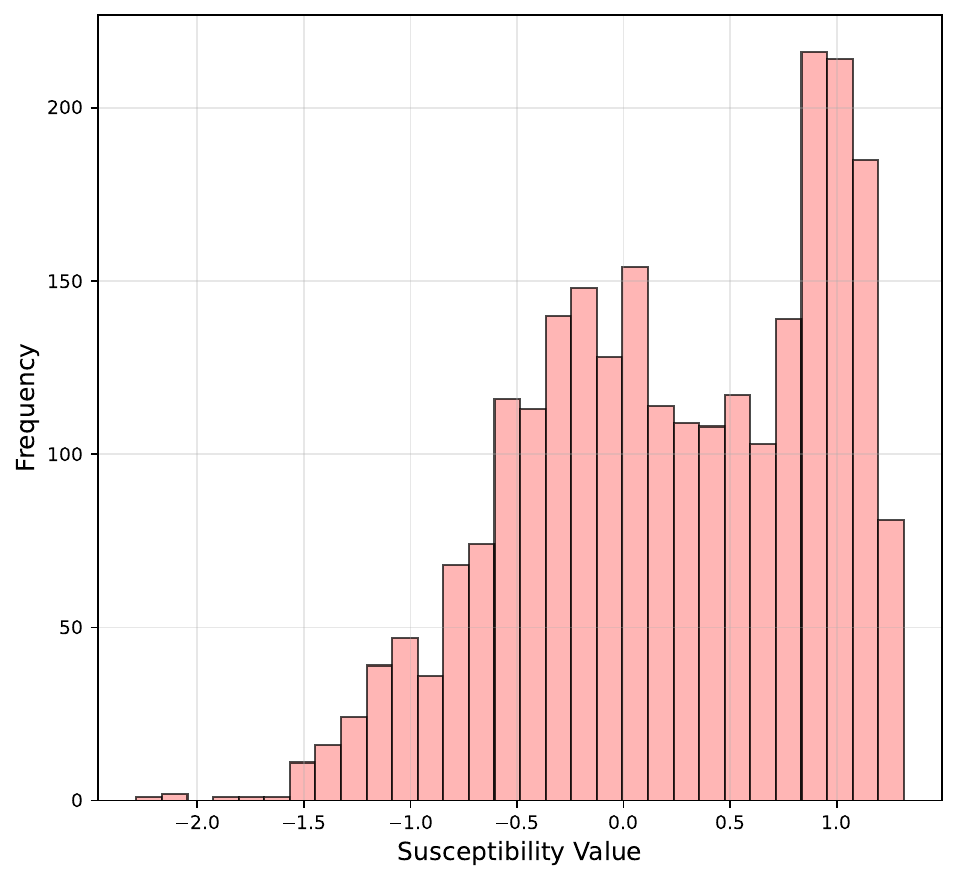}
    \caption{\textbf{Bimodal token.} Histograms showing frequency of certain susceptibility values for \protect\attentionhead{1}{7} and token \tokenbox{/} in \pilesub{enron\_emails}.}
    \label{fig:bimodal_l1h7_enron}
\end{figure}

\subsection{Per-token susceptibility PCA}\label{section:pertoken_pca_appendix}


Here we collect some additional analysis of the PCs in \cref{section:trajectory_pca}. 

\begin{figure}[p]
    \centering
    \includegraphics[width=\textwidth]{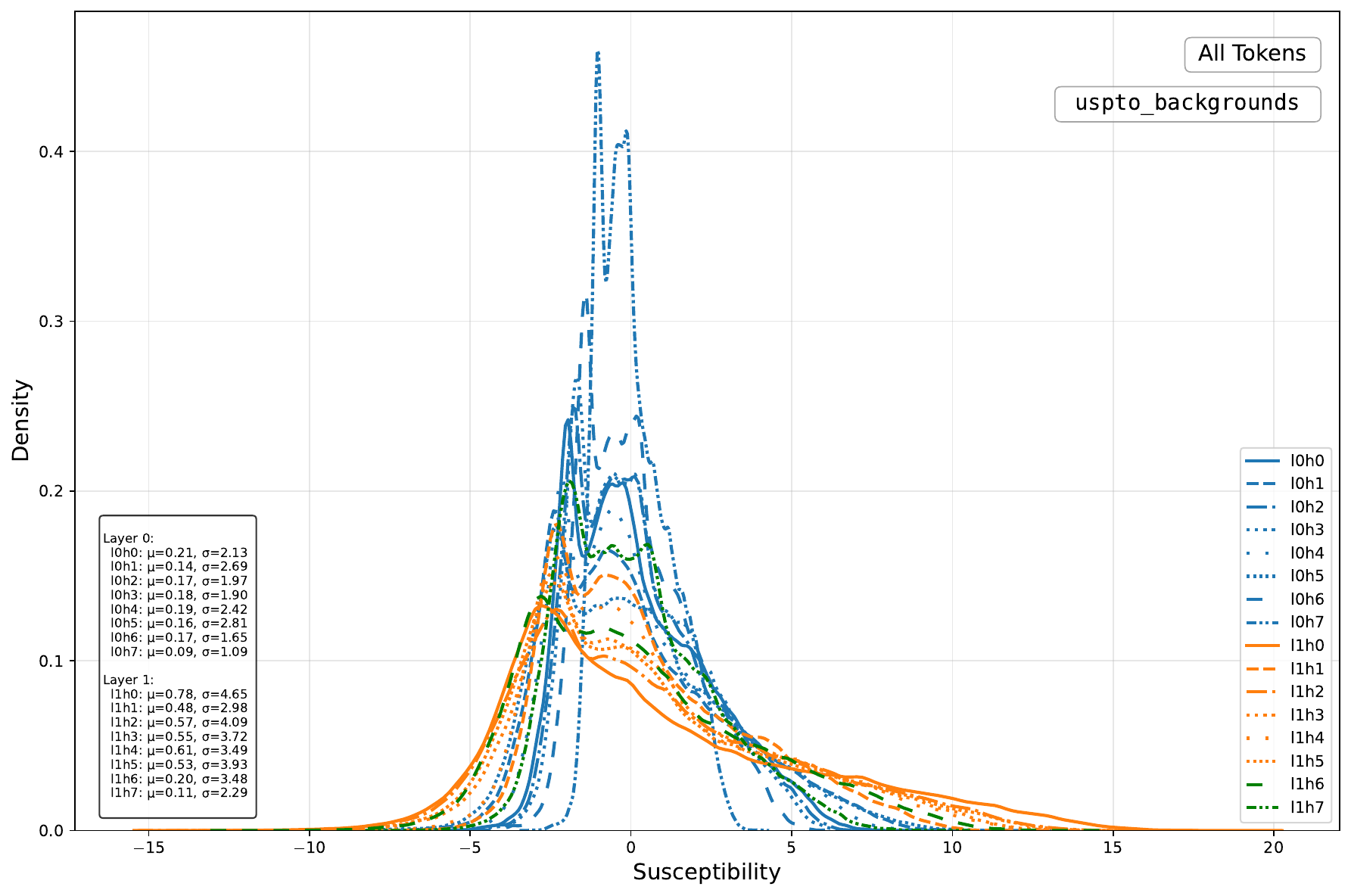}
    \caption{Distribution of per-token susceptibilities for all heads on \pilesub{uspto\_backgrounds}, measured on all tokens. Curves are produced by KDE smoothing of data from 163k tokens.}
    \label{fig:uspto_backgrounds_pertoken_all}
\end{figure}

\begin{figure}[p]
    \centering
    \includegraphics[width=\textwidth]{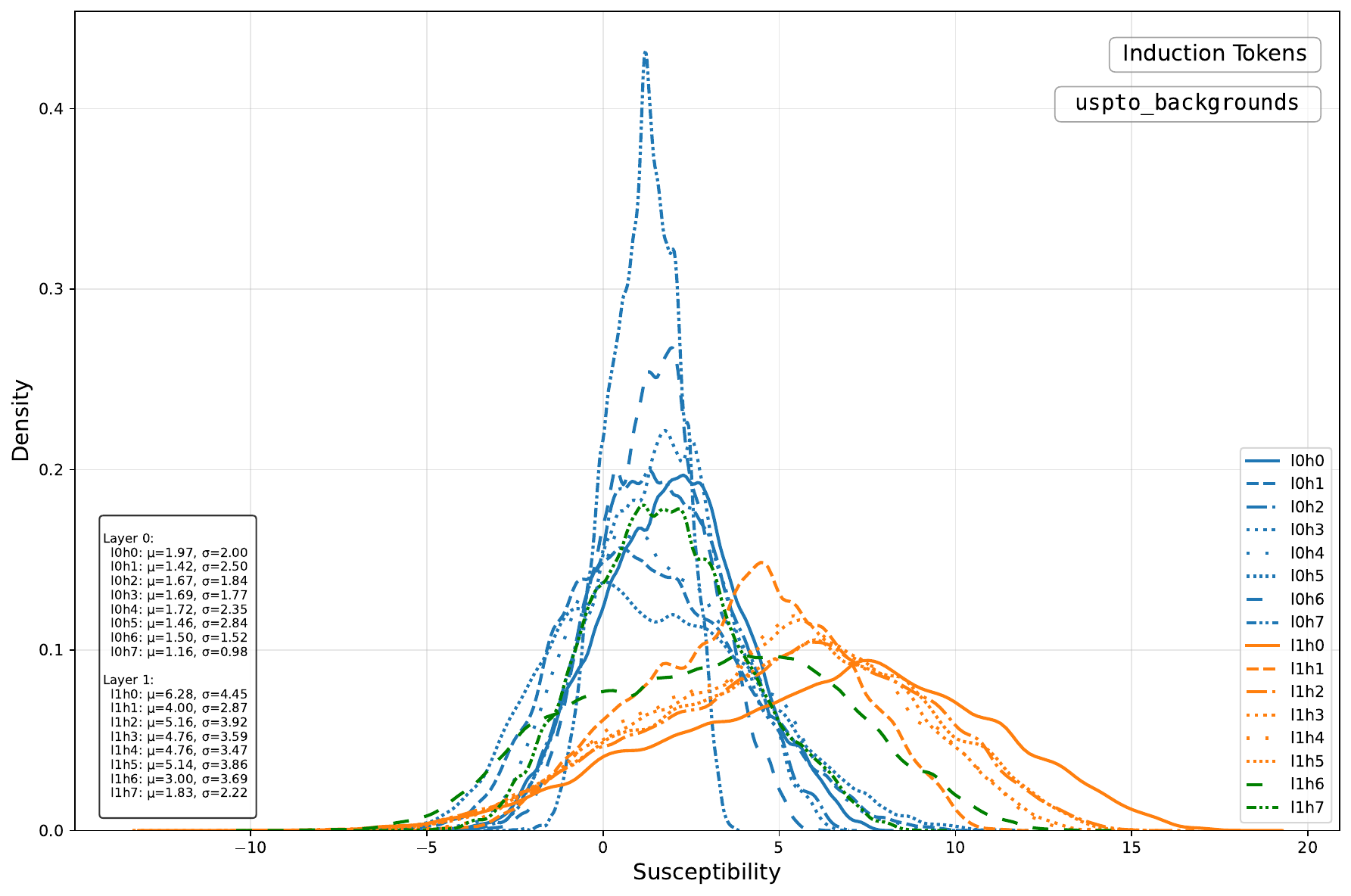}
    \caption{Distribution of per-token susceptibilities for all heads on \pilesub{uspto\_backgrounds}, measured on just induction tokens. Curves are produced by KDE smoothing.}
    \label{fig:uspto_backgrounds_pertoken_induction}
\end{figure}

In \cref{fig:uspto_backgrounds_pertoken_all} and \cref{fig:uspto_backgrounds_pertoken_induction} we compare the complete distribution of susceptibilities for all heads on \pilesub{uspto\_backgrounds} for all tokens vs just the induction pattern tokens, and we see here how the distribution for the layer $1$ multigram heads (and to a significant but lesser extent \attentionhead{1}{6}) shifts to the right.

\clearpage

\section{Top susceptibilities}\label{section:app_top_suscep}

In this section we give additional details for each dataset (see \cref{section:data}). The purpose of measuring susceptibilities is to use these ``responses'' to external perturbations in the data distribution to infer internal structure. Thus the first test for the methodology is whether or not the set of perturbations that we are using (e.g. Pile subsets) elicits a sufficient variety of responses. To see this it suffices to compare the plots of susceptibilities for each head across training on some datasets, for instance \pilesub{hackernews} (\cref{fig:susceptibility-pile-hackernews}) and \pilesub{github-all} (\cref{fig:susceptibility-github-all}).

Note that by definition the susceptibility with respect a variation of the Pile $q$ in the direction of dataset $q'$ is zero if $q'$ is distributed identically to $q$. Thus for subsets that are similarly distributed to the overall Pile we should expect susceptibilities to be small; see \cref{section:top_sus_cc}.

Attention head $h$ in layer $l$ is denoted \attentionhead{l}{h}. Tokens are denoted \tokenbox{t} and the token with high (positive or negative) susceptibility is denoted with a border like \tokenboxline{t}. In this section $x$ stands for a numeric token, e.g. \tokenbox{4}, \tokenbox{~13}.

In each section we plot the susceptibility of each head for a particular dataset, and provide the three largest positive and negative susceptibility tokens taken from the one hundred tokens with largest susceptibility magnitude found in a sample of contexts. Note that these outlier tokens are not necessarily a good indicator of the ``function'' of a head.

The per-token susceptibility $\chi_{(x,y)}$ is a function of the predicted token $y$ and context $x$. We typically only give $y$ with at most some small segment of $x$, e.g.
\[
\overset{\cdots x_{k-1}x_k}{\overbrace{\tokenbox{~wa}\tokenbox{vel}}}\overset{y}{\tokenboxline{ength}}
\]
where only the last two tokens of the context $x$ is given. In some cases we also show the largest magnitude negative per-token susceptibilities, with the same conventions. Recall from \cref{section:express_suppress} that we interpret $\chi_{(x,y)} > 0$ for a particular head as a tendency for that head to act to \emph{suppress} the prediction of $y$ in context $x$, so to a large extent this appendix is a ``catalogue of suppression''.

In the following \tokenbox{://} always appears as part of \tokenbox{http}\tokenbox{://} or \tokenbox{https}\tokenbox{://} unless specified otherwise, and \tokenbox{\#\#\#\#} appears as part of a ``line'' underneath a title or separating sections of a document. The token \tokenbox{="} appears as part of HTML, e.g. \tokenbox{~class}\tokenbox{="} or similar markup.

\subsection{\pilesub{arxiv}}

\begin{figure}[htb]
    \centering
    \includegraphics[width=\textwidth]{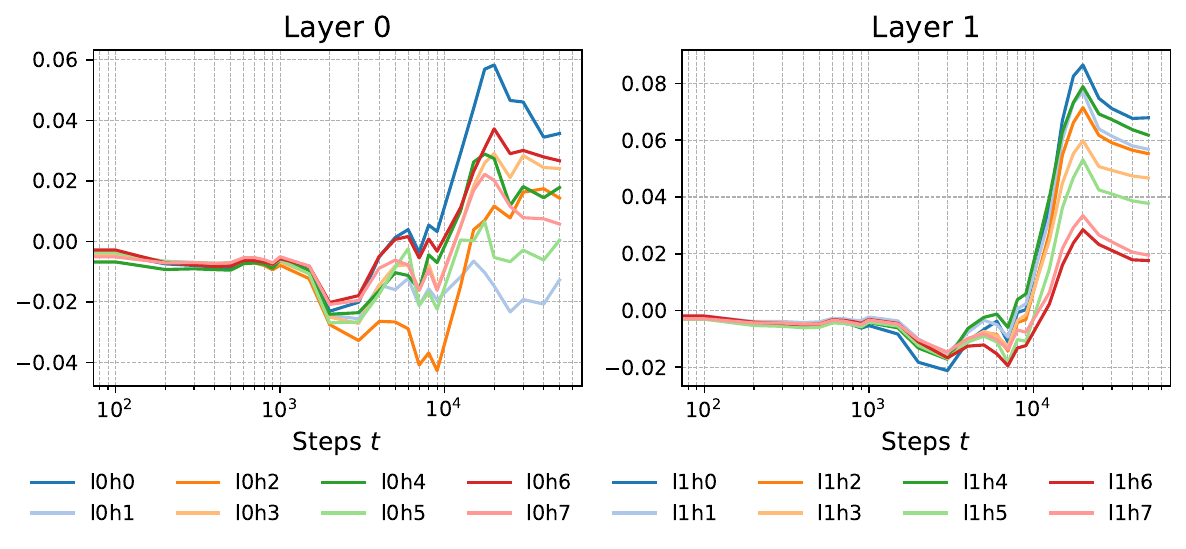}
    \caption{\pilesub{arxiv} susceptibilities for each head over training.}
    \label{fig:susceptibility-pile-arxiv}
\end{figure}

The susceptibilities for each head are shown in Figure \ref{fig:susceptibility-pile-arxiv}.

Examples of positive per-token susceptibilities:

\begin{itemize}
\item \attentionhead{0}{2}: \tokenbox{~wa}\tokenbox{vel}\tokenboxline{ength}, \tokenbox{es}\tokenboxline{pecially}, \tokenbox{~differe}\tokenboxline{nces}
    \item \attentionhead{1}{4}: \tokenboxline{="}, \tokenbox{~wa}\tokenbox{vel}\tokenboxline{ength}, \tokenbox{\textbackslash}\tokenbox{ome}\tokenboxline{ga}
    \item \attentionhead{1}{7}: \tokenboxline{="}, \tokenbox{es}\tokenboxline{pecially}, \tokenbox{~wa}\tokenbox{vel}\tokenboxline{ength}
\end{itemize}

Examples of negative per-token susceptibilities:

\begin{itemize}
    \item \attentionhead{0}{2}: \tokenbox{\textbackslash}\tokenboxline{te}\tokenbox{xt}, \tokenbox{~\$}\tokenboxline{e}\tokenbox{\^}, \tokenbox{~r}\tokenboxline{.}\tokenbox{h}\tokenbox{.}\tokenbox{s}
    \item \attentionhead{1}{4}: \tokenbox{~L}\tokenboxline{\^}\tokenbox{\{}, \tokenboxline{~[}\tokenbox{**}, \tokenbox{\textbackslash}\tokenboxline{om}\tokenbox{in}\tokenbox{us}
    \item \attentionhead{1}{7}: \tokenboxline{~[}\tokenbox{**}, \tokenboxline{<|endoftext|>}, \tokenbox{\textbackslash}\tokenbox{[}\tokenboxline{ree}
\end{itemize}

\subsection{\texttt{dm\_mathematics}}

\begin{figure}[htb]
    \centering
    \includegraphics[width=\textwidth]{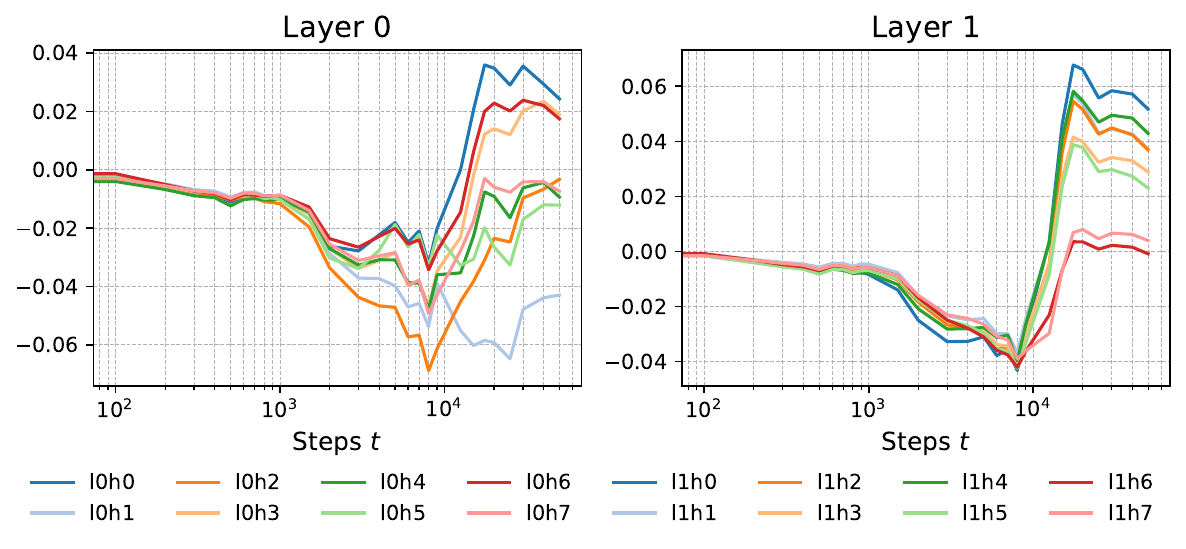}
    \caption{\pilesub{dm\_mathematics} susceptibilities for each head over training.}
    \label{fig:susceptibility-pile-dm_mathematics}
\end{figure}

The susceptibilities for each head are shown in Figure \ref{fig:susceptibility-pile-dm_mathematics}.

Examples of positive per-token susceptibilities:

\begin{itemize}
    \item \attentionhead{0}{0}: \tokenbox{D}\tokenbox{eter}\tokenboxline{mine}, \tokenboxline{\}}, \tokenboxline{))}
    \item \attentionhead{0}{1}: \tokenbox{der}\tokenbox{iv}\tokenboxline{ative}, \tokenbox{~repl}\tokenbox{ac}\tokenboxline{ement}
    \item \attentionhead{0}{4}: \tokenbox{der}\tokenbox{iv}\tokenboxline{ative}, \tokenbox{D}\tokenbox{eter}\tokenboxline{mine}, \tokenbox{~repl}\tokenbox{ac}\tokenboxline{ement} 
\end{itemize}

Examples of negative per-token susceptibilities:

\begin{itemize}
    \item \attentionhead{0}{0}: \tokenboxline{~}, \tokenboxline{~is}, \tokenboxline{~be}
    \item \attentionhead{0}{1}: \tokenboxline{~}, \tokenboxline{~is}, \tokenboxline{~be}
    \item \attentionhead{0}{4}: \tokenboxline{~}, \tokenboxline{~is}, \tokenboxline{~be}
\end{itemize}



\subsection{\texttt{enron\_emails}}

\begin{figure}[htb]
    \centering
    \includegraphics[width=\textwidth]{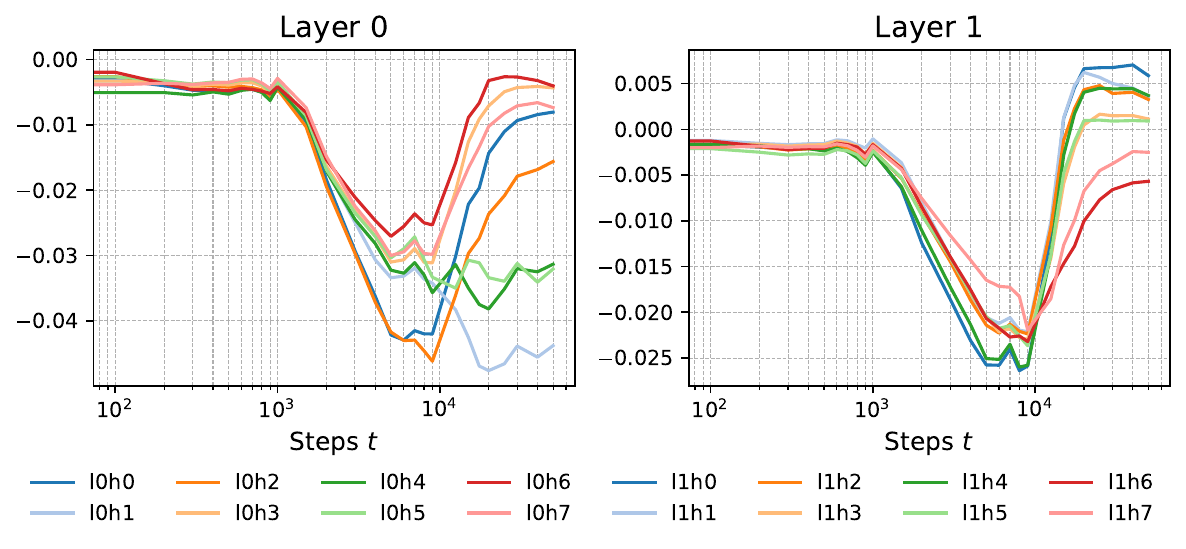}
    \caption{\pilesub{enron\_emails} susceptibilities for each head over training.}
    \label{fig:susceptibility-pile-enron_emails}
\end{figure}

The susceptibilities for each head are shown in Figure \ref{fig:susceptibility-pile-enron_emails}. 

Examples of positive per-token susceptibilities:

\begin{itemize}
\item \attentionhead{0}{3}: \tokenboxline{="}, \tokenboxline{://}, \tokenboxline{\#\#\#\#}
\item \attentionhead{1}{1}: \tokenboxline{="}, \tokenboxline{\#\#\#\#}, \tokenboxline{://}
\item \attentionhead{1}{7}: \tokenboxline{="}, \tokenboxline{://}, \tokenbox{~el}\tokenbox{im}\tokenboxline{inate}
\end{itemize}

Examples of negative per-token susceptibilities:

\begin{itemize}
    \item \attentionhead{0}{3}: \tokenboxline{<|endoftext|>}, \tokenboxline{cc}, \tokenbox{.}\tokenboxline{as}\tokenbox{p}
    \item \attentionhead{1}{1}: \tokenboxline{<|endoftext|>}, \tokenboxline{~For}, \tokenbox{.}\tokenboxline{as}\tokenbox{p}
    \item \attentionhead{1}{7}: \tokenboxline{<|endoftext|>}, \tokenboxline{~}, \tokenboxline{~R}\tokenbox{ose}
\end{itemize}



\subsection{\pilesub{nih\_exporter}}

\begin{figure}[htb]
    \centering
    \includegraphics[width=\textwidth]{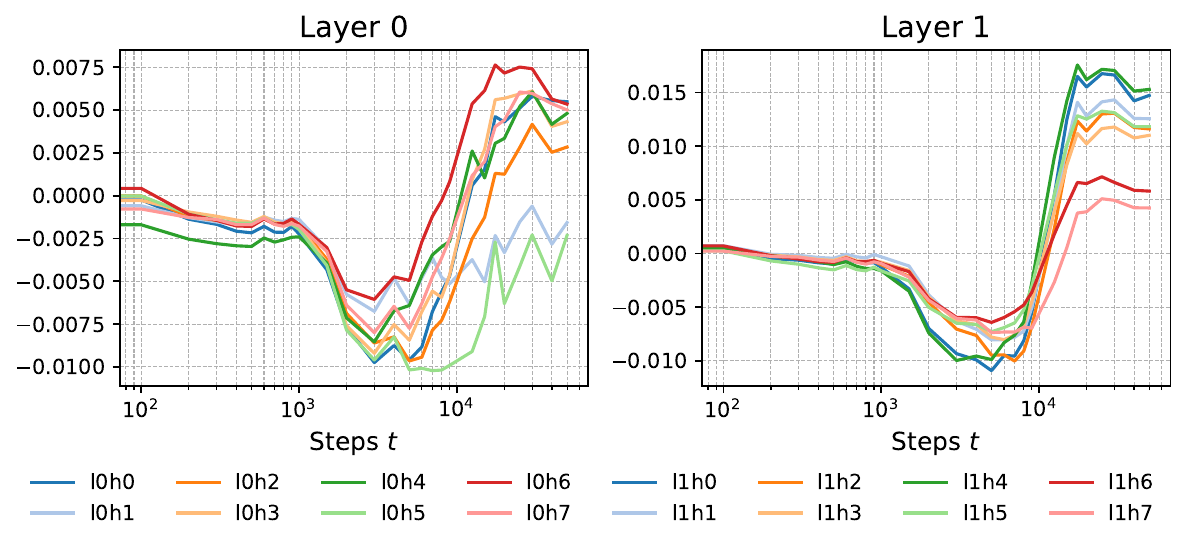}
    \caption{\pilesub{nih\_exporter} susceptibilities for each head over training.}
    \label{fig:susceptibility-pile-nih_exporter}
\end{figure}

The susceptibilities for each head are shown in Figure \ref{fig:susceptibility-pile-nih_exporter}. Below $x$ stands for a token representing a number.

Examples of positive per-token susceptibilities:

\begin{itemize}
\item \attentionhead{0}{6}: \tokenbox{~differe}\tokenboxline{nces}, \tokenbox{~ev}\tokenbox{al}\tokenboxline{uate}, \tokenbox{~el}\tokenbox{im}\tokenboxline{inate}
    \item \attentionhead{1}{3}: \tokenbox{th}\tokenbox{y}\tokenboxline{roid}, \tokenbox{(}x\tokenboxline{\%)}, \tokenbox{~prot}\tokenbox{ot}\tokenboxline{ype}
    \item \attentionhead{1}{5}: \tokenbox{th}\tokenbox{y}\tokenboxline{roid}, \tokenbox{~el}\tokenbox{im}\tokenboxline{inate}, \tokenbox{~full}\tokenbox{-}\tokenbox{l}\tokenboxline{ength}
\end{itemize}

Examples of negative per-token susceptibilities:

\begin{itemize}
    \item \attentionhead{0}{6}: \tokenbox{~qual}\tokenbox{i}\tokenboxline{ty}, \tokenboxline{~;}, \tokenboxline{~obvious}
    \item \attentionhead{1}{3}: \tokenbox{~qual}\tokenbox{i}\tokenboxline{ty}, \tokenboxline{~;}, \tokenboxline{~(}
    \item \attentionhead{1}{5}: \tokenbox{~qual}\tokenbox{i}\tokenboxline{ty}, \tokenboxline{~;}, \tokenbox{~Ca}\tokenboxline{~2}\tokenbox{~ion}
\end{itemize}

\subsection{\pilesub{pubmed\_abstracts}}

\begin{figure}[htb]
    \centering
    \includegraphics[width=\textwidth]{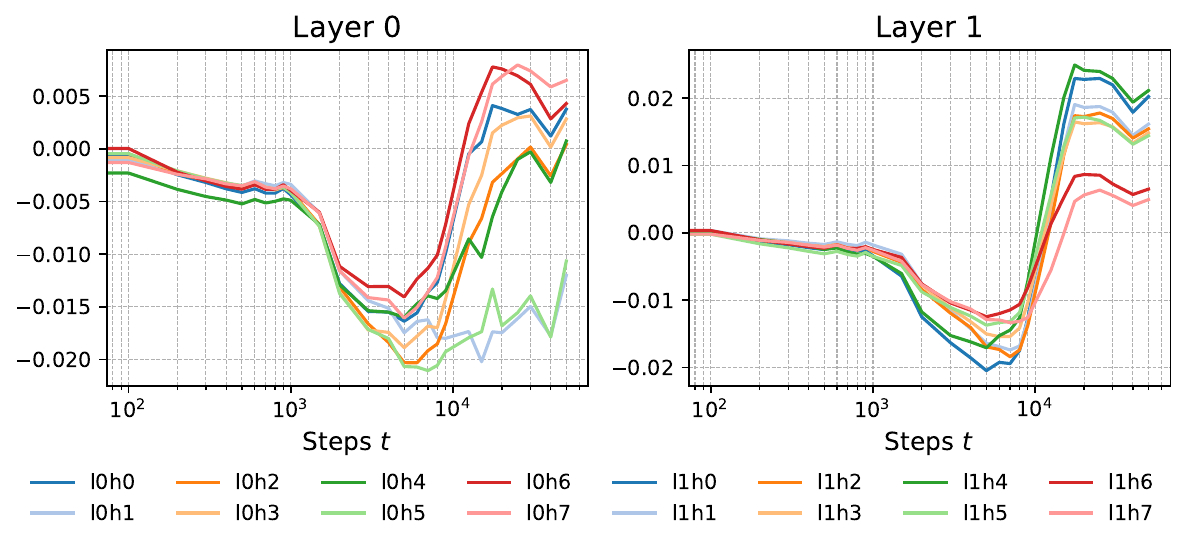}
    \caption{\pilesub{pubmed\_abstracts} susceptibilities for each head over training.}
    \label{fig:susceptibility-pile-pubmed_abstracts}
\end{figure}

The susceptibilities for each head are shown in Figure \ref{fig:susceptibility-pile-pubmed_abstracts}. Below $x$ stands for a token representing a number.

Examples of positive per-token susceptibilities:

\begin{itemize}
\item \attentionhead{0}{0}: \tokenbox{~C}\tokenbox{a}\tokenboxline{++}, \tokenbox{(}x\tokenboxline{\%)}, \tokenbox{~differe}\tokenboxline{nces}
\item \attentionhead{0}{5}: \tokenbox{(}x\tokenboxline{\%)}, \tokenbox{~demon}\tokenbox{st}\tokenboxline{ration}, \tokenbox{~differe}\tokenboxline{nces}
\item \attentionhead{1}{5}: \tokenbox{~C}\tokenbox{a}\tokenboxline{++}, \tokenbox{~Th}\tokenbox{y}\tokenboxline{roid}, \tokenbox{(}x\tokenboxline{\%)}
\end{itemize}

Examples of negative per-token susceptibilities:

\begin{itemize}
    \item \attentionhead{0}{0}: \tokenboxline{~med}\tokenbox{ium}, \tokenboxline{~}, \tokenbox{~pub}\tokenboxline{oper}\tokenbox{ine}\tokenbox{al}\tokenbox{is}
    \item \attentionhead{0}{5}: \tokenbox{ili}\tokenbox{oc}\tokenbox{oc}\tokenbox{cy}\tokenboxline{ge}\tokenbox{ous}, \tokenbox{ol}\tokenbox{id}\tokenbox{ined}\tokenboxline{ion}\tokenbox{es}, \tokenboxline{~}
    \item \attentionhead{1}{5}: \tokenboxline{~}, \tokenbox{~sc}\tokenbox{ro}\tokenboxline{ful}\tokenbox{ace}\tokenbox{um}, \tokenboxline{~ev}\tokenbox{al}\tokenbox{u}\tokenbox{ations}
\end{itemize}

\subsection{\pilesub{pubmed\_central}}\label{section:results_pubmed}

\begin{figure}[htp]
    \centering
    \includegraphics[width=\textwidth]{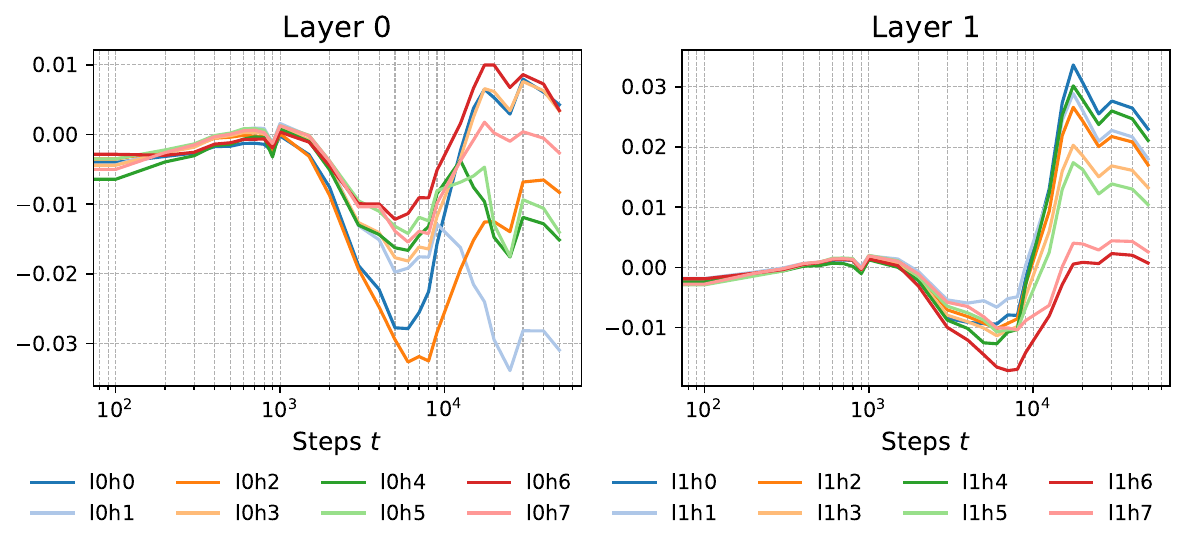}
    \caption{\pilesub{pubmed\_central} susceptibilities for each head over training.}
    \label{fig:susceptibility-pile-pubmed_central}
\end{figure}

The susceptibilities for each head are shown in Figure \ref{fig:susceptibility-pile-pubmed_central}.

Examples of positive per-token susceptibilities:

\begin{itemize}
\item \attentionhead{0}{1}: \tokenboxline{://}, \tokenbox{~differe}\tokenboxline{nces}, \tokenboxline{====}
\item \attentionhead{1}{4}: \tokenbox{~full}\tokenbox{-}\tokenbox{l}\tokenboxline{ength}, \tokenboxline{="}, \tokenboxline{://}
\item \attentionhead{1}{7}: \tokenboxline{://}, \tokenboxline{="}, \tokenbox{~full}\tokenbox{-}\tokenbox{l}\tokenboxline{ength}
\end{itemize}

Examples of negative per-token susceptibilities:

\begin{itemize}
    \item \attentionhead{0}{1}: \tokenbox{~pro}\tokenboxline{ins}\tokenbox{ul}\tokenbox{in}, \tokenboxline{~}, \tokenbox{~end}\tokenbox{ere}\tokenbox{r}
    \tokenboxline{nd}\tokenbox{r}\tokenbox{one}
    \item \attentionhead{1}{4}: \tokenboxline{~k}\tokenbox{in}\tokenbox{ase}, \tokenboxline{ect}\tokenbox{od}\tokenbox{er}\tokenbox{m}\tokenbox{al}, \tokenbox{~l}\tokenbox{un}\tokenboxline{ul}\tokenbox{ate}
    \item \attentionhead{1}{7}: \tokenbox{~pro}\tokenboxline{ins}\tokenbox{ul}\tokenbox{in}, \tokenboxline{~k}\tokenbox{in}\tokenbox{ase}, \tokenboxline{~}
\end{itemize}

\subsection{\pilesub{uspto\_backgrounds}}

\begin{figure}[htp]
    \centering
    \includegraphics[width=\textwidth]{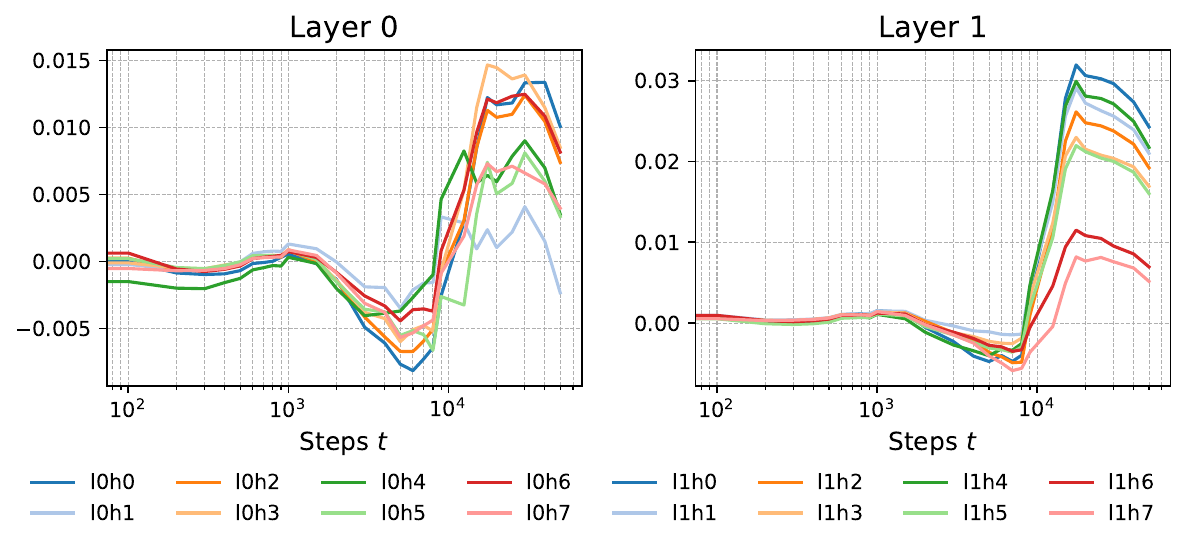}
    \caption{\pilesub{uspto\_backgrounds} susceptibilities for each head over training.}
    \label{fig:susceptibility-pile-uspto_backgrounds}
\end{figure}

The susceptibilities for each head are shown in Figure \ref{fig:susceptibility-pile-uspto_backgrounds}.

Examples of positive per-token susceptibilities:

\begin{itemize}
\item \attentionhead{0}{2}: \tokenbox{wa}\tokenbox{vel}\tokenboxline{ength}, \tokenbox{es}\tokenboxline{pecially}, \tokenbox{~differe}\tokenboxline{nces}
\item \attentionhead{1}{2}: \tokenbox{wa}\tokenbox{vel}\tokenboxline{ength}, \tokenbox{~S}\tokenbox{od}\tokenbox{er}\tokenboxline{berg}, \tokenbox{~mag}\tokenbox{n}\tokenboxline{itude}
\item \attentionhead{1}{4}: \tokenbox{wa}\tokenbox{vel}\tokenboxline{ength}, \tokenbox{~mag}\tokenbox{n}\tokenboxline{itude}, \tokenbox{.}\tokenbox{ht}\tokenboxline{ml}
\end{itemize}

Examples of negative per-token susceptibilities:

\begin{itemize}
    \item \attentionhead{0}{2}: \tokenboxline{~}, \tokenboxline{~of}, \tokenboxline{~color}
    \item \attentionhead{1}{2}: \tokenboxline{~}, \tokenboxline{~color}, \tokenboxline{~of}
    \item \attentionhead{1}{4}: \tokenbox{~met}\tokenboxline{er}, \tokenboxline{~color}, \tokenboxline{~}
\end{itemize}

\subsection{\pilesub{wikipedia\_en}}

\begin{figure}[htb]
    \centering
    \includegraphics[width=\textwidth]{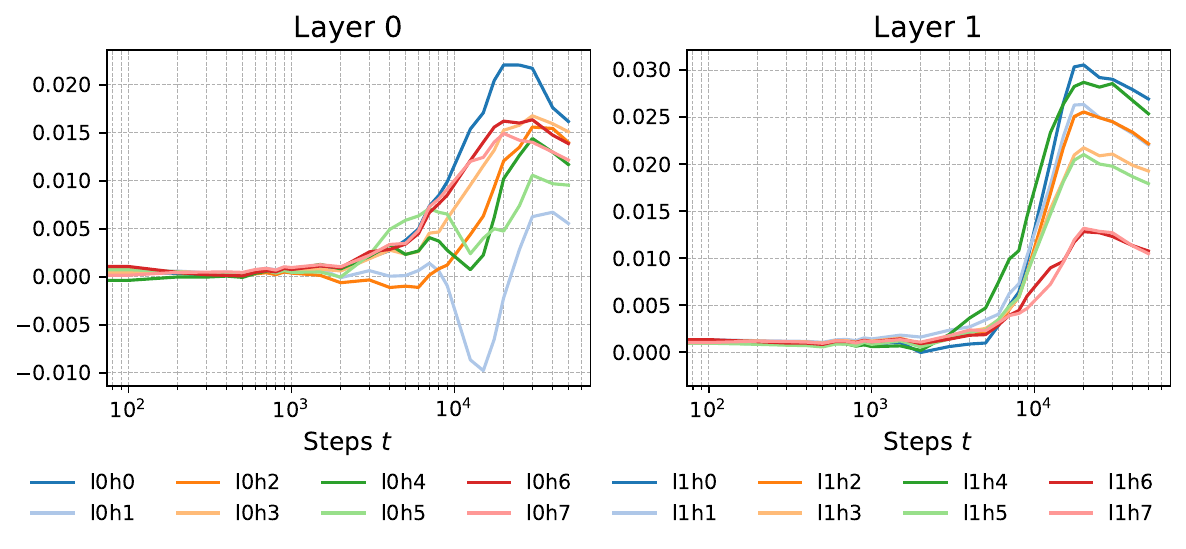}
    \caption{\pilesub{wikipedia\_en} susceptibilities for each head over training.}
    \label{fig:susceptibility-pile-wikipedia_en}
\end{figure}

The susceptibilities for each head are shown in Figure \ref{fig:susceptibility-pile-wikipedia_en}.

Examples of positive per-token susceptibilities:

\begin{itemize}
\item \attentionhead{0}{0}: \tokenboxline{://}, \tokenbox{R}\tokenbox{ef}\tokenboxline{erences}, \tokenbox{differe}\tokenboxline{nces}
\item \attentionhead{1}{3}: \tokenbox{R}\tokenbox{ef}\tokenboxline{erences}, \tokenbox{al}\tokenboxline{bum}, \tokenbox{~Y}\tokenbox{is}\tokenboxline{rael}
\item \attentionhead{1}{4}: \tokenbox{~E}\tokenbox{k}\tokenboxline{berg}, \tokenboxline{://}, \tokenbox{al}\tokenboxline{bum}
\end{itemize}

Examples of negative susceptibilities:

\begin{itemize}
\item \attentionhead{0}{0}: \tokenboxline{~in}, \tokenboxline{<|endoftext|>}, \tokenboxline{~on}
\item \attentionhead{1}{3}: \tokenboxline{<|endoftext|>}, \tokenboxline{~record}, \tokenboxline{~on}
\item \attentionhead{1}{4}: \tokenboxline{<|endoftext|>}, \tokenboxline{~of}, \tokenboxline{~record}
\end{itemize}

\subsection{\pilesub{cc}}\label{section:top_sus_cc}

\begin{figure}[htb]
    \centering
    \includegraphics[width=\textwidth]{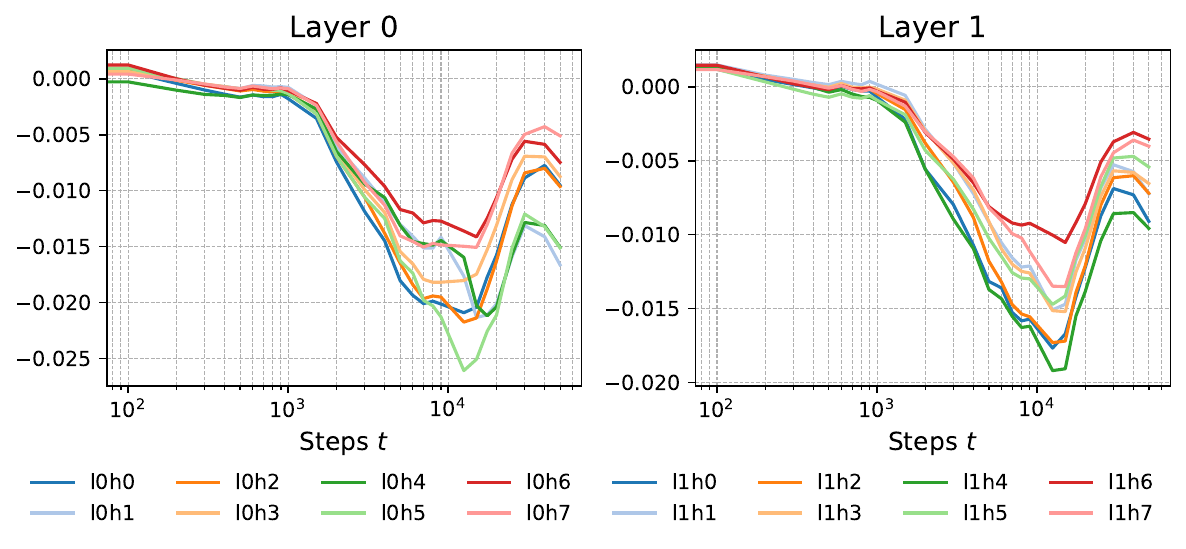}
    \caption{\pilesub{pile-cc} susceptibilities for each head over training.}
    \label{fig:susceptibility-pile-pile-cc}
\end{figure}

The susceptibilities for each head are shown in Figure \ref{fig:susceptibility-pile-pile-cc}. Instances of \tokenboxline{":"} typically occur as part of dictionaries, e.g. \tokenbox{\{"}\tokenbox{name}\tokenboxline{":"} but the preceding tokens vary.

Examples of positive per-token susceptibilities:

\begin{itemize}
\item \attentionhead{0}{1}: \tokenboxline{th}\tokenbox{is}, \tokenboxline{~more}, \tokenboxline{~from}
\item \attentionhead{0}{7}: \tokenboxline{23}, \tokenboxline{and}, \tokenboxline{~}
\item \attentionhead{1}{6}: \tokenbox{~g}\tokenbox{all}\tokenboxline{ery}, \tokenbox{~O}\tokenbox{ly}\tokenbox{mp}\tokenboxline{ic}, \tokenbox{~under}\tokenboxline{ground}
\end{itemize}

Examples of negative per-token susceptibilities:

\begin{itemize}
    \item \attentionhead{0}{1}: \tokenbox{f}\tokenboxline{il}\tokenbox{m}\tokenbox{f}\tokenbox{est}\tokenbox{ival}\tokenbox{.}\tokenbox{com}, \tokenboxline{a}, \tokenboxline{~}
    \item \attentionhead{0}{7}: \tokenbox{f}\tokenboxline{il}\tokenbox{m}\tokenbox{f}\tokenbox{est}\tokenbox{ival}\tokenbox{.}\tokenbox{com}, \tokenboxline{a}, \tokenbox{~st}\tokenboxline{one}
    \item \attentionhead{1}{6}: \tokenboxline{a}, \tokenboxline{~games}, \tokenbox{~st}\tokenboxline{one}
\end{itemize}

We note that the susceptibilities for heads on \pilesub{cc} are small relative to other datasets. Note that by definition the susceptibility should be zero for a variation $q \rightarrow q'$ where $q'$ is distributed identically to $q$, so this could be because \pilesub{cc} is a significant fraction of the Pile (18\% according to \cite{gao2020pile}). In \cref{tab:l0h0_mult_data} we give a variant of \cref{tab:l0h0_vs_l0h1} where we compute the total positive and negative per-token susceptibility of \attentionhead{0}{0} on the datasets \pilesub{cc}, \pilesub{github-all}.

\begin{table}[htbp]
    \centering
    \begin{tabular}{|c|c|c|c|c|}
        \hline
        \textbf{Dataset} & \textbf{Total pos.} & \textbf{Total neg.} & \textbf{Sum} & \textbf{Scaled sum}\\
        \hline
        \pilesub{github-all} & 171,561 & -145,342 & 17,816 & 0.016\\
        \hline
        \pilesub{cc} & 128,026 & -130,095 & -966 & -0.0013\\
        \hline
    \end{tabular}
    \vspace{0.5cm}
    \caption{Total negative and positive per-token susceptibility for \protect\attentionhead{0}{0} on two datasets, and the sum divided by the total number of tokens and multipled by $\delta h = 0.1$.}
    \label{tab:l0h0_mult_data}
\end{table}

We note that the total positive and negative per-token susceptibilities are of a similar order of magnitude for both datasets, but their difference is an order of magnitude smaller ($966$ vs $17816$) and this explains the order of magnitude difference in the final susceptibility. Recall that macroscopic bodies have near zero electric charge because of the exact cancellation of very \emph{large} amounts of positive and negative charge in physical matter at equilibrium; it seems a reasonable intuition that near zero susceptibilities are likewise caused by near-exact cancellations rather than both positive and negative per-token susceptibilities being small.

Also note that, given the similarity between the tokens with top positive susceptibility shown above, it is no surprise that \cref{fig:susceptibility-pile-pile-cc} shows little differentiation of the heads. This particular variation in the data distribution away from the full Pile is not enough to ``break the symmetry'' between the heads, by exposing them to distributions of patterns for which they have distinct preferences.

\subsection{\pilesub{hackernews}}

\begin{figure}[htb]
    \centering
    \includegraphics[width=\textwidth]{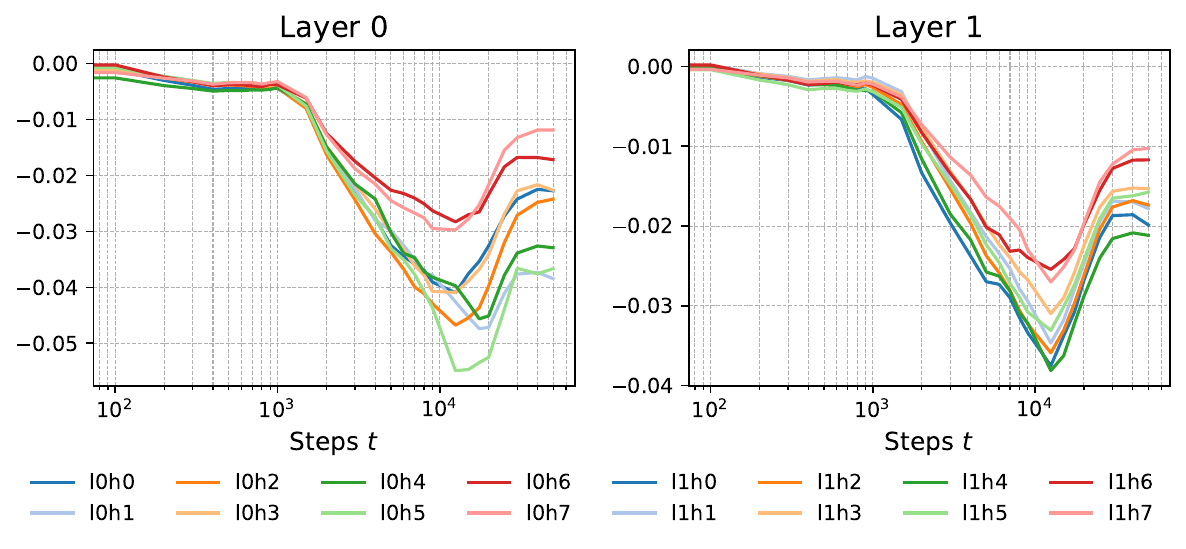}
    \caption{\pilesub{hackernews} susceptibilities for each head over training.}
    \label{fig:susceptibility-pile-hackernews}
\end{figure}

The susceptibilities for each head are shown in Figure \ref{fig:susceptibility-pile-hackernews}.

Examples of positive per-token susceptibilities:

\begin{itemize}
    \item \attentionhead{0}{0}: \tokenbox{I}\tokenboxline{'ve}, \tokenboxline{~I}, \tokenboxline{~just}
    \item \attentionhead{1}{1}: \tokenboxline{~}, \tokenbox{al}\tokenboxline{one}, \tokenbox{I}\tokenboxline{'ve}
    \item \attentionhead{1}{5}: \tokenboxline{~}, \tokenboxline{~I}, \tokenbox{I}\tokenboxline{'ve}
\end{itemize}

Examples of negative per-token susceptibilities:

\begin{itemize}
    \item \attentionhead{0}{0}: \tokenboxline{~recommend}, \tokenboxline{and}, \tokenboxline{~mon}\tokenbox{et}\tokenbox{ary}
    \item \attentionhead{1}{1}: \tokenboxline{and}, \tokenboxline{the}, \tokenboxline{~co}\tokenbox{ff}\tokenbox{ee}
    \item \attentionhead{1}{5}: \tokenboxline{and}, \tokenboxline{~recommend}, \tokenboxline{~mon}\tokenbox{et}\tokenbox{ary}
\end{itemize}

\subsection{\pilesub{github-all}}

\begin{figure}[htb]
    \centering
    \includegraphics[width=\textwidth]{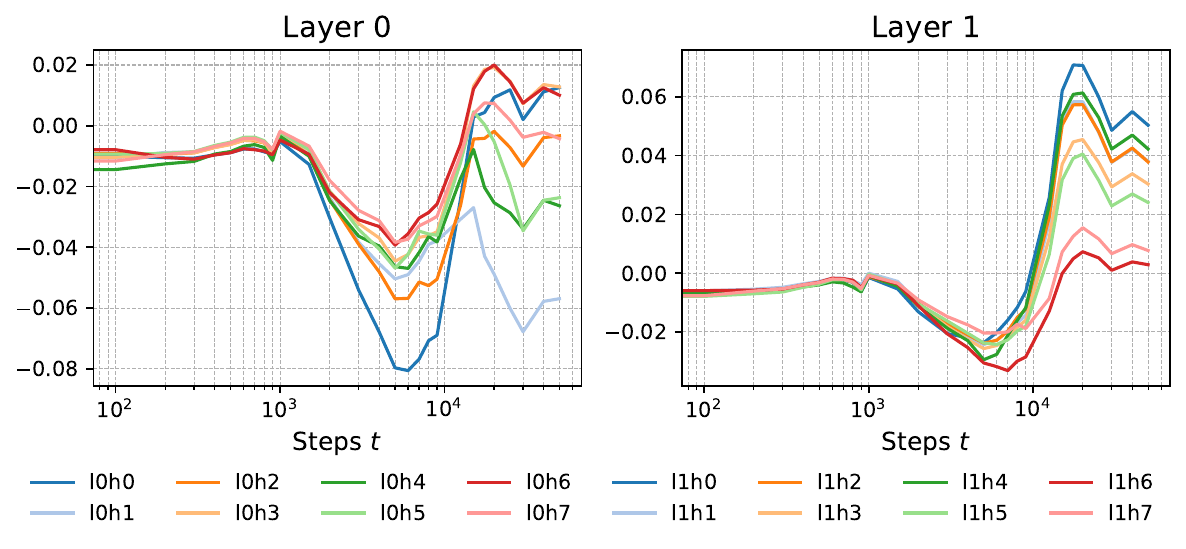}
    \caption{\pilesub{github-all} susceptibilities for each head over training.}
    \label{fig:susceptibility-github-all}
\end{figure}




The susceptibilities for each head are shown in \cref{fig:susceptibility-github-all}.

Examples of positive per-token susceptibilities:

\begin{itemize}
    \item \attentionhead{0}{0}: \tokenboxline{="}, \tokenbox{Get}\tokenbox{St}\tokenbox{ring}\tokenbox{L}\tokenboxline{ength}, \tokenboxline{()}
    \item \attentionhead{0}{1}: \tokenboxline{://}, \tokenboxline{="}, \tokenboxline{~count}
    \item \attentionhead{1}{6}: \tokenboxline{="}, \tokenboxline{://}, \tokenbox{.}\tokenbox{l}\tokenboxline{ength}
\end{itemize}

Examples of negative per-token susceptibilities:

\begin{itemize}
    \item \attentionhead{0}{0}: \tokenboxline{~(}, \tokenboxline{<|endoftext|>}, \tokenbox{~int}\tokenboxline{~count}
    \item \attentionhead{0}{1}: \tokenbox{def}\tokenboxline{~connect}\tokenbox{\_}, \tokenbox{~int}\tokenboxline{~count}, \tokenbox{con}\tokenbox{st}\tokenboxline{~base}
    \item \attentionhead{1}{6}: \tokenbox{con}\tokenbox{st}\tokenboxline{~base}, \tokenbox{con}\tokenbox{st}\tokenboxline{~st}\tokenbox{d}, \tokenbox{~int}\tokenboxline{~count}
\end{itemize}

\subsection{\pilesub{freelaw}}\label{section:results_freelaw}

\begin{figure}[htb]
    \centering
    \includegraphics[width=\textwidth]{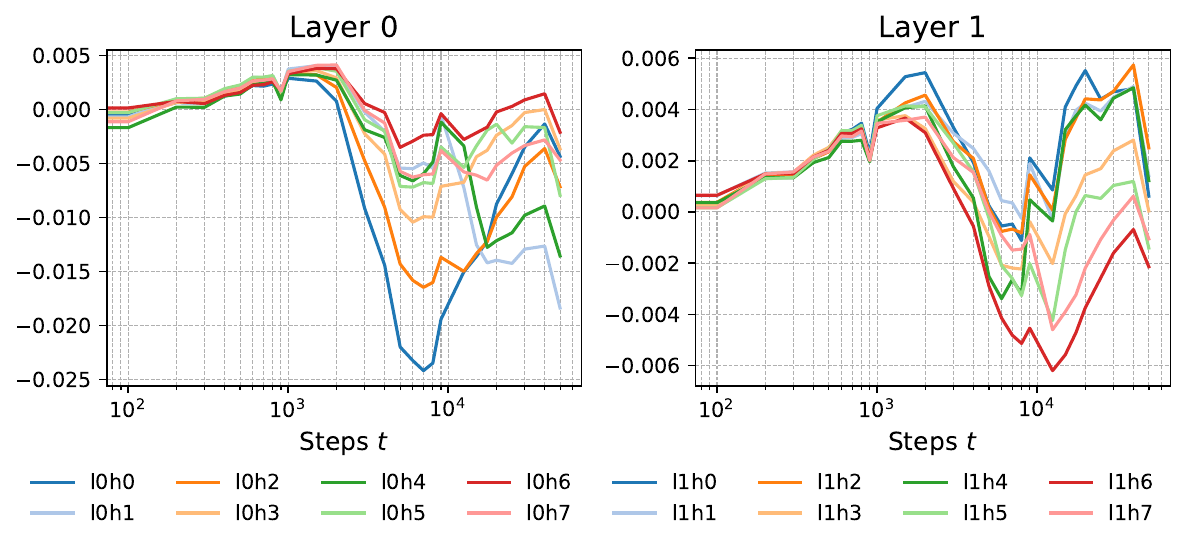}
    \caption{\pilesub{freelaw} susceptibilities for each head over training.}
    \label{fig:susceptibility-pile-freelaw}
\end{figure}

The susceptibilities for each head are shown in Figure \ref{fig:susceptibility-pile-freelaw}. 

Examples of positive per-token susceptibilities:

\begin{itemize}
    \item \attentionhead{0}{6}: \tokenboxline{://}, \tokenbox{An}\tokenbox{al}\tokenboxline{ysis}, \tokenbox{differe}\tokenboxline{nces}
    \item \attentionhead{1}{4}: \tokenboxline{://}, \tokenbox{~O}\tokenbox{d}\tokenbox{ys}\tokenboxline{sey}, \tokenbox{d}\tokenbox{oc}\tokenboxline{uments}
    \item \attentionhead{1}{5}: \tokenboxline{://}, \tokenbox{~O}\tokenbox{d}\tokenbox{ys}\tokenboxline{sey}, \tokenbox{~K}\tokenbox{ap}\tokenbox{n}\tokenboxline{icks}
\end{itemize}

Examples of negative per-token susceptibilities:

\begin{itemize}
    \item \attentionhead{0}{6}: \tokenboxline{~on}, \tokenbox{~rem}\tokenboxline{-}\tokenbox{\textbackslash n}\tokenbox{edy}, \tokenboxline{~defense}
    \item \attentionhead{1}{4}: \tokenboxline{~C}\tokenbox{.}\tokenbox{H}, \tokenbox{~A}\tokenbox{ri}\tokenboxline{-}\tokenbox{\textbackslash n}\tokenbox{z}\tokenbox{ona}, \tokenbox{~S}\tokenbox{up}\tokenboxline{.}\tokenbox{Ct}
    \item \attentionhead{1}{5}: \tokenbox{~Rep}\tokenbox{resent}\tokenboxline{a}\tokenbox{-}\tokenbox{t}\tokenbox{ives}, \tokenboxline{~C}\tokenbox{.}\tokenbox{H}, \tokenbox{~S}\tokenbox{up}\tokenboxline{.}\tokenbox{Ct}
\end{itemize}

We note here the role of the \tokenbox{-} token as splitting a word across multiple lines, and \tokenbox{.} in abbreviations. In the negative susceptibilities there are several other examples of the tokens being used in other split words and abbreviations respectively.

\subsection{\pilesub{stackexchange}}\label{section:results_stackexchange}

\begin{figure}[htb]
    \centering
    \includegraphics[width=\textwidth]{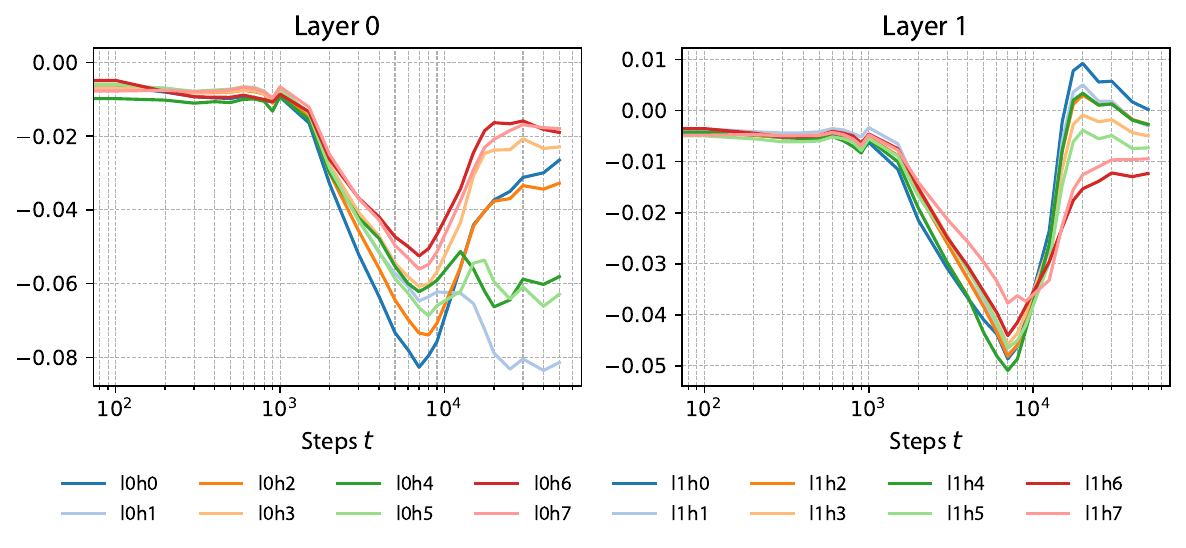}
    \caption{\pilesub{stackexchange} susceptibilities for each head over training.}
    \label{fig:susceptibility-pile-stackexchange}
\end{figure}

The susceptibilities for each head are shown in Figure \ref{fig:susceptibility-pile-stackexchange}.

Examples of positive per-token susceptibilities:

\begin{itemize}
    \item \attentionhead{0}{2}: \tokenboxline{://}, \tokenboxline{="}, \tokenbox{D}\tokenbox{ou}\tokenbox{ble}\tokenboxline{>}
    \item \attentionhead{0}{4}: \tokenboxline{://}, \tokenboxline{="}, \tokenbox{~differe}\tokenboxline{nces}
    \item \attentionhead{1}{3}: \tokenboxline{="}, \tokenboxline{":"}, \tokenboxline{()}
\end{itemize}

Examples of negative per-token susceptibilities:

\begin{itemize}
    \item \attentionhead{0}{2}: \tokenboxline{~with}, \tokenbox{~H}\tokenbox{T}\tokenbox{M}\tokenbox{LE}\tokenboxline{lement}, \tokenboxline{~}
    \item \attentionhead{0}{4}: \tokenbox{~H}\tokenbox{T}\tokenbox{M}\tokenbox{LE}\tokenboxline{lement}, \tokenboxline{~with}, \tokenbox{~b}\tokenboxline{~in}
    \item \attentionhead{1}{3}: \tokenboxline{~}, \tokenbox{~H}\tokenbox{T}\tokenbox{M}\tokenbox{LE}\tokenboxline{lement}, \tokenboxline{~where}
\end{itemize}

\clearpage

\section{Structural inference via trajectory PCA}\label{section:pca_appendix}

In the main text (\cref{section:trajectory_pca}) we do structural inference on our small language model using PCA on a data matrix of per-token susceptibilities at the end of training. To provide an independent check on the basic results we provide in this appendix details of an alternative analysis using PCA of a data matrix of susceptibilities (not per-token) of heads \emph{over} training. First we explain the methodology (\cref{section:joint_trajectory_pca}) and then present the results (\cref{section:trajectory_pca_results}).

\subsection{Joint trajectory PCA}\label{section:joint_trajectory_pca}

We adapt a multi-trajectory variant of trajectory PCA \citep{amadei1993essential, briggman2005optical} to study how the behaviors of attention heads in our small language model evolve with respect to different data distributions; see also \citet[\S 4.2]{carroll2025dynamicstransientstructureincontext}.

For each attention head $j \in \mathcal{H}$ and each dataset $d \in \mathcal{D}$, we measure the susceptibility $\chi^d_j(t)$ at each training checkpoint $t \in \mathcal{T}$. The checkpoints used are the same ones given in \cref{section:method_susceptibilities}. This gives us, for each dataset $d$, a trajectory through a space with dimensions corresponding to attention heads $\gamma_d(t) = \big(\chi^d_1(t), \chi^d_2(t), \ldots, \chi^d_{|\mathcal{H}|}(t)\big) \in \mathbb{R}^{|\mathcal{H}|}$. We combine these trajectories into a data matrix $X \in \mathbb{R}^{|\mathcal{D}||\mathcal{T}| \times |\mathcal{H}|}$ where each row represents the susceptibility measurements across all attention heads for a specific dataset at a specific checkpoint. For each $d \in \mathcal{D}$, we aggregate the row vectors from each checkpoint into a matrix
    $X_d \in \mathbb{R}^{|\mathcal{T}| \times |\mathcal{H}|}$
and then stack each $X_d$ vertically into
    $X \in \mathbb{R}^{|\mathcal{D}| |\mathcal{T}| \times |\mathcal{H}|}$:
\begin{equation*}
    X_d
    =
    \begin{bmatrix}
        \gamma_d(t_1) \\
        \gamma_d(t_2) \\
        \vdots \\
        \gamma_d(t_{|\mathcal{T}|})
    \end{bmatrix}
    \text{\ for\ }
    d \in \mathcal{D},
    \quad
    X
    =
    \begin{bmatrix}
        X_{d_1} \\
        \vdots \\
        X_{d_{|\mathcal{D}|}}
    \end{bmatrix}.
\end{equation*}
We standardize each column of $X$ to have zero mean and unit variance, ensuring that attention heads with larger susceptibility magnitudes do not dominate the analysis. Let $X_{\text{std}}$ denote this standardized matrix. We then perform singular value decomposition (SVD)
\begin{equation}\label{eq:XCP_XVU}
X_{\text{std}} = U\Sigma V^T
\end{equation}
where $U \in \mathbb{R}^{|\mathcal{D}||\mathcal{T}| \times c}$ contains the left singular vectors, $\Sigma \in \mathbb{R}^{c \times c}$ is a diagonal matrix of singular values, and $V \in \mathbb{R}^{c \times |\mathcal{H}|}$ contains the right singular vectors where $c$ is the chosen number of principal components. The principal components are given by the columns of $U\Sigma$, and the loadings that indicate how attention heads contribute to each principal component are given by the rows of $V^T$. 

For each dataset $d$, we project its trajectory $\gamma_d(t)$ onto the reduced space defined by the first $k$ principal components $\pi_d(t) = (\gamma_d(t) - \mu) \cdot V_k$ where $\mu$ is the mean vector used during standardization and $V_k$ consists of the first $k$ columns of $V$. This gives us a low-dimensional representation $\pi_d(t) \in \mathbb{R}^k$ of the trajectory for dataset $d$ at checkpoint $t$.

\subsection{Results}\label{section:trajectory_pca_results}

\begin{figure}[tp]
    \centering
    \includegraphics[width=0.85\textwidth]{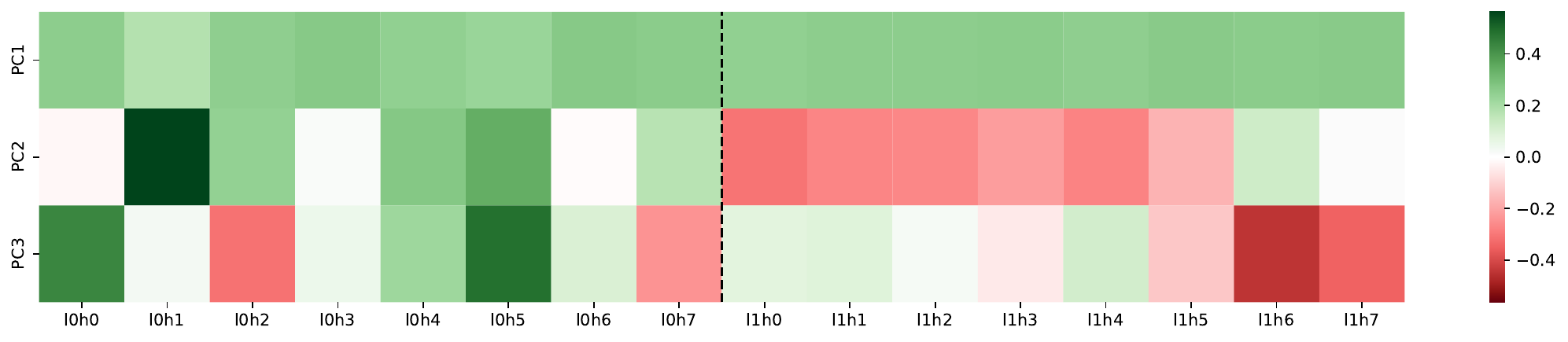}
    \caption{\textbf{Loadings of PCs on heads}. The loadings of each principal component on each head is shown, with the first principal component being a uniform mode, the second principal component loading (in layer $1$) on the multigram heads, and the third principal component loading on the induction heads and \protect\attentionhead{1}{5}.}
    \label{fig:pca_loadings_heatmap}
\end{figure}

We perform joint trajectory PCA as a concrete realisation of the structural inference proposed in \cref{section:structural_inference}. We found that three principal components explained 98\% of the variance (with the top three PCs explaining respectively $85.78\%,10.76\%,1.63\%$). In more detail, the loadings of the PCs as presented in \cref{fig:pca_loadings_heatmap} are:
\begin{itemize}
    \item \textbf{PC1} is a uniform mode, loading on all heads.
    \item \textbf{PC2} loads strongly positively on \attentionhead{0}{1} and negatively on \attentionhead{1}{0} - \attentionhead{1}{5}, that is, the layer $1$ multigram heads.
    \item \textbf{PC3} loads strongly positively on \attentionhead{0}{0}, \attentionhead{0}{5} and negatively on \attentionhead{0}{2},\attentionhead{0}{7} in layer $0$ and strongly negatively on \attentionhead{1}{6}, \attentionhead{1}{7} in layer $1$.
\end{itemize}
We see therefore that PC2 separates the layer $1$ multigram heads and induction heads, as noted in \cref{section:trajectory_pca} for the per-token susceptibility PC2. Since the susceptibility for a head is the expectation over the probe data distribution of the per-token susceptibilities, it is not a surprise to see that there is broad agreement between the structure revealed by the trajectory PCA and the analysis in \cref{section:trajectory_pca}. Nonetheless the switch from per-token to overall susceptibility, and the inclusion of multiple checkpoints, provides a non-trivial check on the robustness of that analysis.

\section{Comparison to direct logit effects}\label{section:comparison_direct_logit}

Previous work \citep{gurnee2024universalneuronsgpt2language} \citep{lad2025remarkablerobustnessllmsstages} has defined prediction and suppression neurons with reference to $W_U W_O^h$, where $W_O^h$ is the output matrix for head $h$, and $W_U$ the unembedding matrix for the network, following \cite{elhage2021mathematical}. Entry $(i,j)$ is interpreted as the direct contribution of neuron $j$ in head $h$ to token $i$ in our vocabulary $\Sigma$. In particular, one says a neuron at index $j$ in head $h$ is predictive when the effect distribution $(W_U W_O^h)_{:, j}$ has a high kurtosis and positive skew, and suppressive when the effect distribution has a high kurtosis and negative skew. 

Given our own characterization of negative susceptibilities as indicating expression, one hypothesis is that a negative susceptibility for head $h$ on a token $t$ means that neurons in head $h$ typically have a positive effect on the logits of $t$, while a positive susceptibility indicates neurons typically have a negative effect.

Let $X = \{(x_1, y_1), \dots, (x_n, y_n)\} \in D_N$ be $n$ samples for which we have computed $\chi_{(x_i, y_i)}$ from our dataset $D_N$, and $U = \{u_1, \dots, u_m\}\subseteq \Sigma$ be the set of unique $y_i$'s in this subset. Then for each $u \in U$ define $X_{y=u} = \{(x_j, y_j)\in X | y_j = u\} = \{(x_{j_1}, y_{j_1}), \dots, (x_{j_k}, y_{j_k})\}$, and $\overline \chi_{y=u}^h = \frac{1}{k}\sum_{i=1}^k \chi_{(x_{j_i}, y_{j_i})}^h$, where $\chi_{(x,y)}^h$ is the per-sample susceptibility of head $h$ to sample $(x,y)$, so that $\overline \chi_{y=u}^h$ is the average per-sample susceptibility on samples with the completion $u$.

For each attention head $h$ and every token $u\in U$ we plot the quantity $\overline \chi^h_{y=u}$ against the average value of $(W_U W_O^h)_{u, :}$. The results for each head are shown in Figure \ref{fig:output_effect_vs_susceptibility} along with the corresponding $r^2$ and slope of each graph.

We note that both the $r^2$ values and the slopes are very small, with slopes having inconsistent signs, rejecting the hypothesized connection.

\begin{figure}
    \centering
    \includegraphics[width=\textwidth]{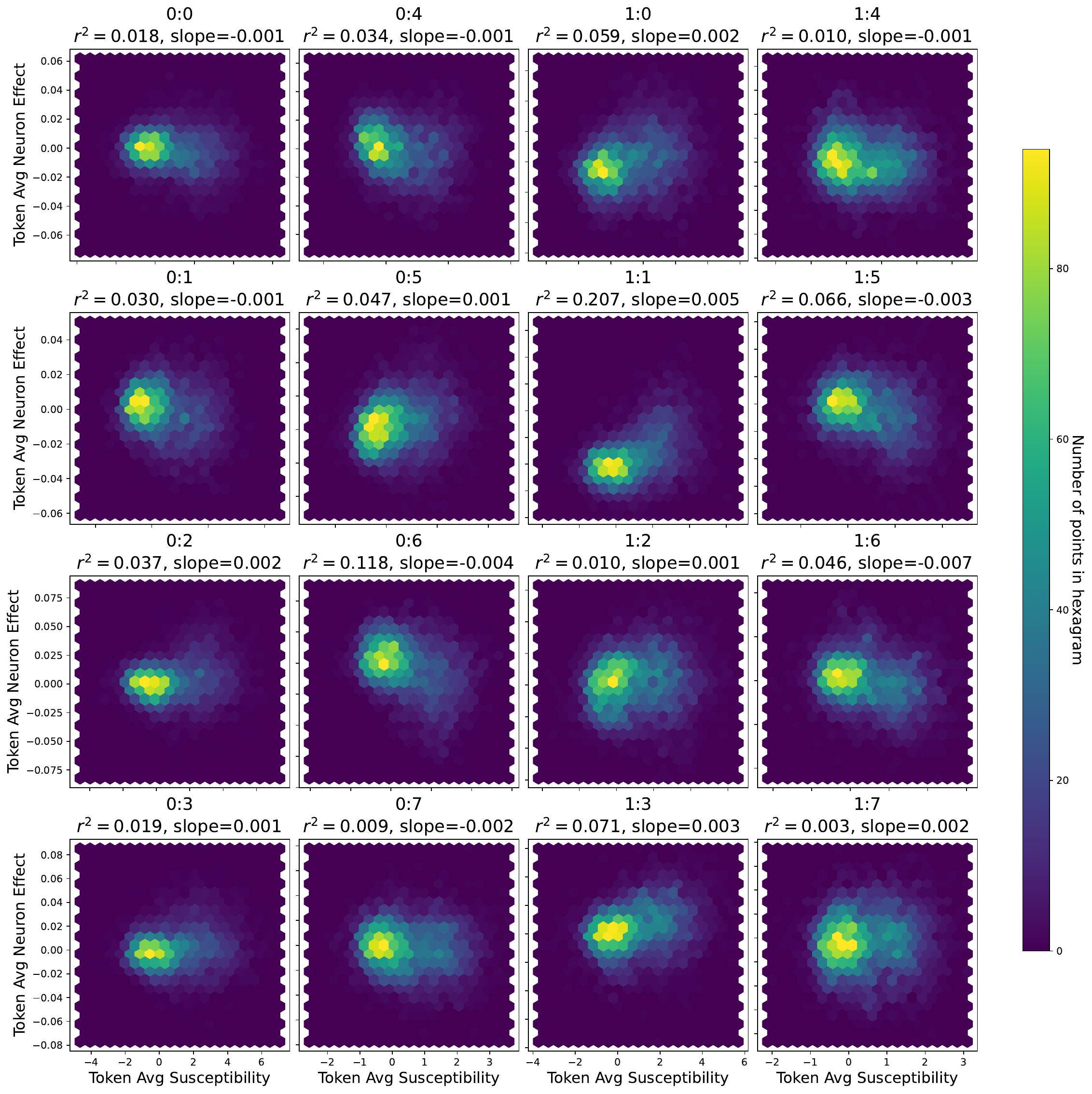}
    \caption{\textbf{Average token susceptibility plotted against average neuron effect.} Hex bin plots for each head $l:h$ plotting the points $(\overline \chi_{y=u_1}^{l:h}, \overline{(W_UW_O^{l:h})_{u_1, :}}), \dots, (\overline \chi_{y=u_m}^{l:h}, \overline{(W_U W_O^{l:h})_{u_m, :}})$ for each $u_i \in U$ and with $\overline{(W_UW_O^{l:h})_{u, :}}$ indicating the average value of $(W_UW_O^{l:h})_{u, :}$. The head, $r^2$ value, and slope are indicated above each plot, and hexagrams are coloured according to the number of such points which lie inside them.}
    \label{fig:output_effect_vs_susceptibility}
\end{figure}

\section{Additional seeds}\label{section:additional_seeds}

Three additional models were trained with the same architecture and training data, but different random seeds; we refer to these as seeds $2, 3, 4$ (with the original being seed $1$). 

\begin{figure}[htbp]
    \centering
    \includegraphics[width=\textwidth]{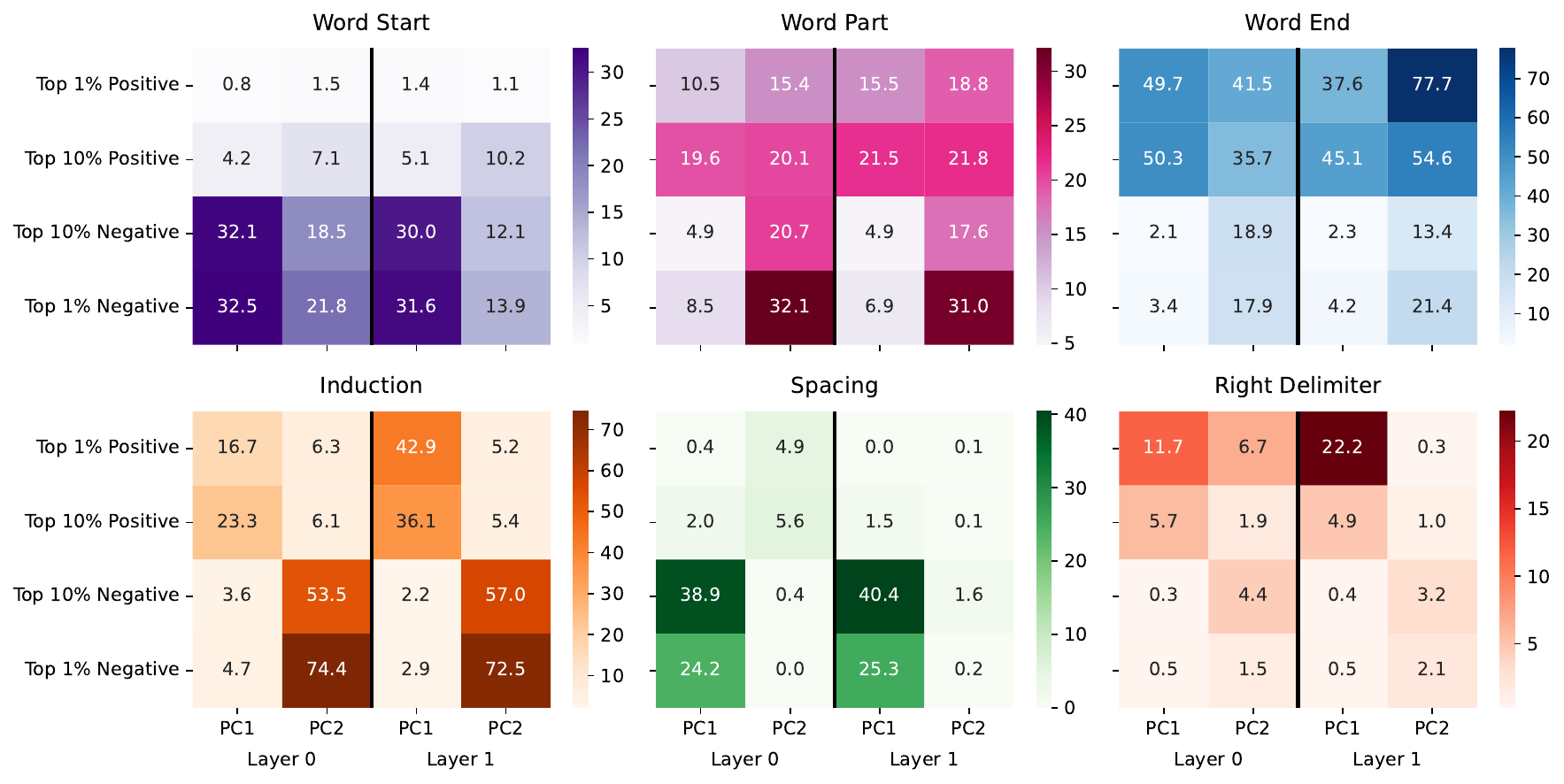}
    \includegraphics[width=0.6\textwidth]{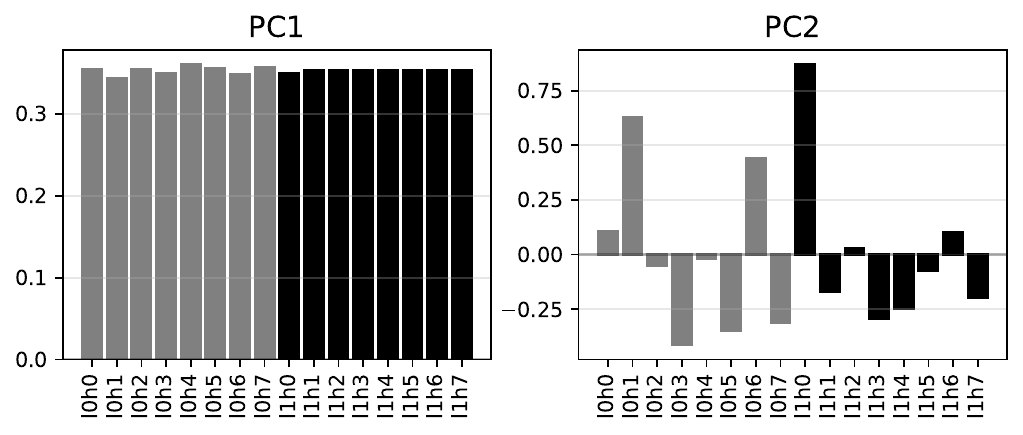}
    \caption{\textbf{Per-token susceptibility PCAs for Seed $2$.} Among the tokens with the coefficients of the largest magnitude in each principal component for the per-token susceptibility PCA, the percentage following each of the six patterns (\emph{Top}). The loadings of the principal components on heads (\emph{Bottom}). In \citet[Appendix G]{wang2024differentiationspecializationattentionheads} it was found that in this seed the previous-token head is \protect\attentionhead{0}{1}, the current-token head is \protect\attentionhead{0}{6} and the induction head is \protect\attentionhead{1}{0}.}
    \label{fig:pattern_dist_heatmap_seed2}
\end{figure}

\begin{figure}[htbp]
    \centering
    \includegraphics[width=\textwidth]{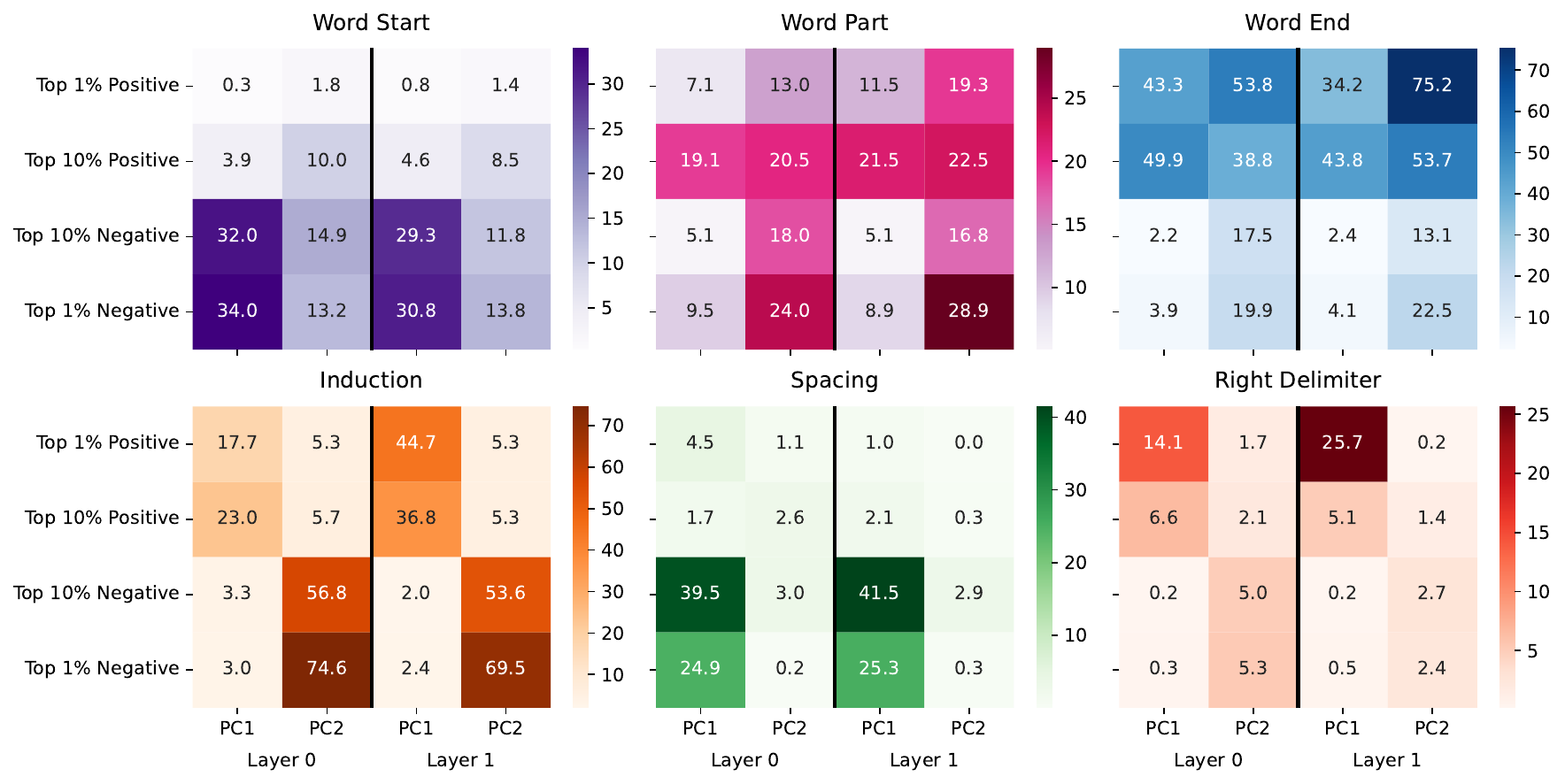}
    \includegraphics[width=0.6\textwidth]{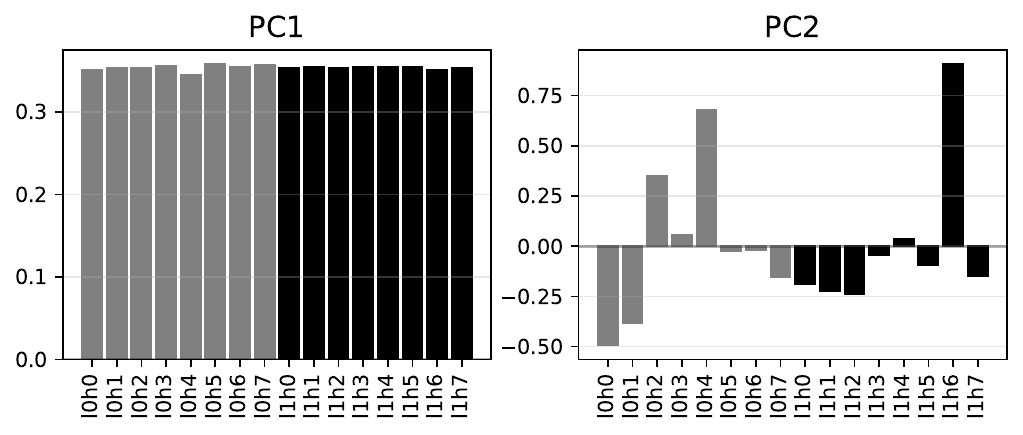}
    \caption{\textbf{Per-token susceptibility PCAs for Seed $3$.} Among the tokens with the coefficients of the largest magnitude in each principal component for the per-token susceptibility PCA, the percentage following each of the six patterns (\emph{Top}). The loadings of the principal components on heads (\emph{Bottom}). In \citet[Appendix G]{wang2024differentiationspecializationattentionheads} it was found that in this seed the previous-token head is \protect\attentionhead{0}{4}, the current-token head is \protect\attentionhead{0}{2} and the induction head is \protect\attentionhead{1}{6}.}
    \label{fig:pattern_dist_heatmap_seed3}
\end{figure}

\begin{figure}[htbp]
    \centering
    \includegraphics[width=\textwidth]{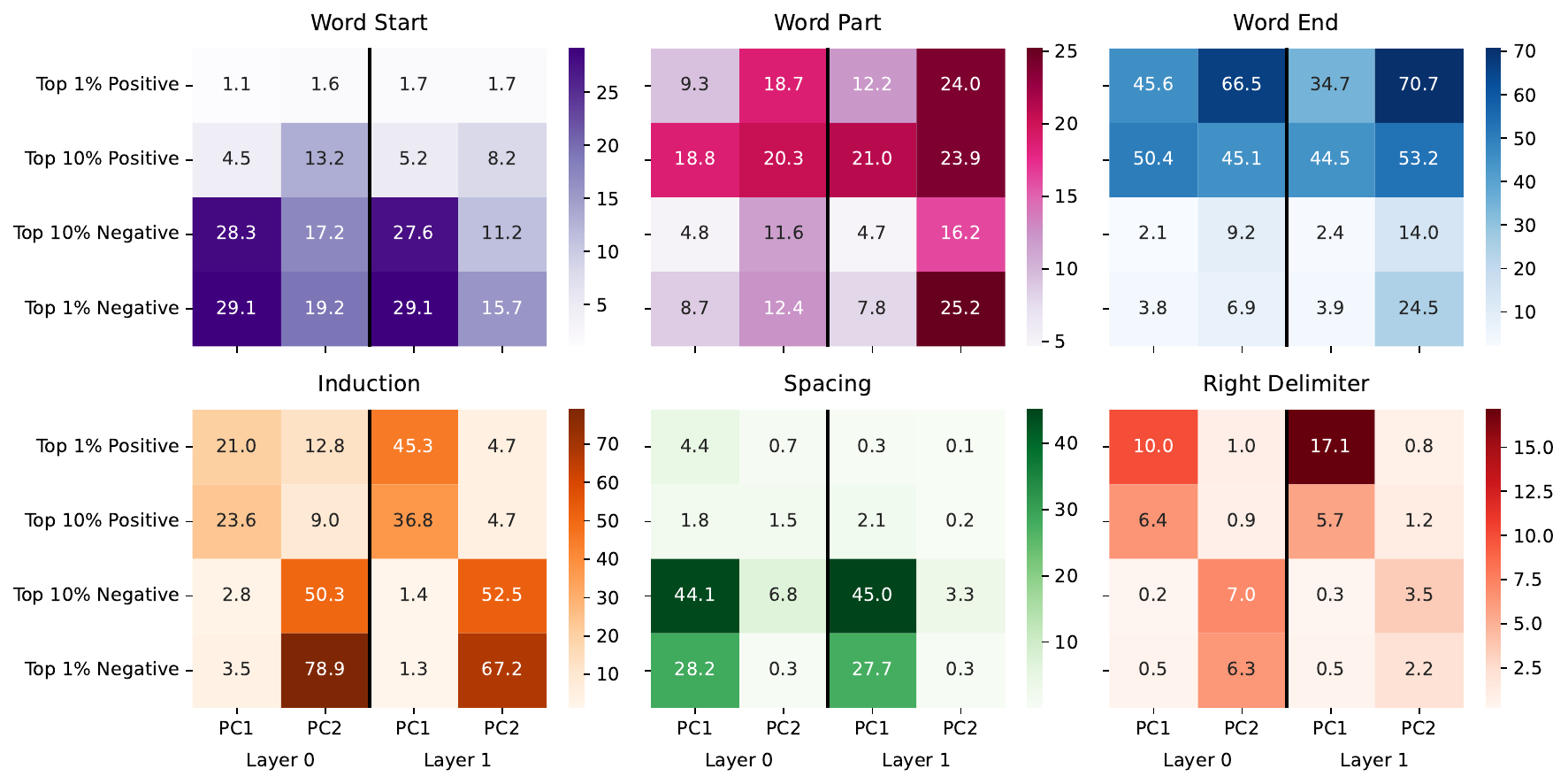}
    \includegraphics[width=0.6\textwidth]{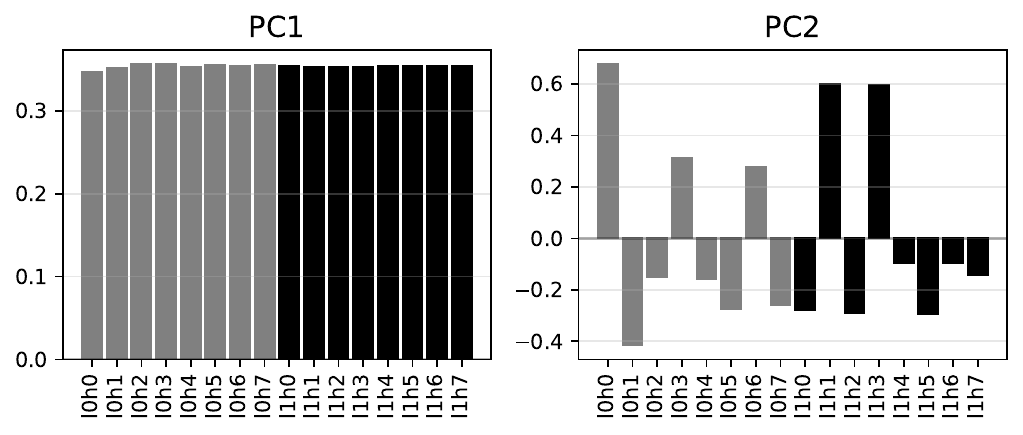}
    \caption{\textbf{Per-token susceptibility PCAs for Seed $4$.} Among the tokens with the coefficients of the largest magnitude in each principal component for the per-token susceptibility PCA, the percentage following each of the six patterns (\emph{Top}). The loadings of the principal components on heads (\emph{Bottom}). In \citet[Appendix G]{wang2024differentiationspecializationattentionheads} it was found that in this seed the previous-token heads are \protect\attentionhead{0}{0} and \protect\attentionhead{0}{3}, the current-token head is \protect\attentionhead{0}{6} and the induction heads are \protect\attentionhead{1}{1} and \protect\attentionhead{1}{3}.}
    \label{fig:pattern_dist_heatmap_seed4}
\end{figure}

\section{Variance of principal components}\label{section:variance_pcs}

In \cref{fig:pattern_dist_heatmap} and \cref{fig:pertoken_pca_pc_loadings} of the main text we present the data and component loadings of the top principal components of the per-token susceptibility PCA. Recall that this is done separately for the two layers. A prerequisite for assigning meaning to these principal components is that they are suitably stable with respect to the tokens chosen.

In this appendix we present the top five principal components of the per-token susceptibility PCA for both layers, including data for all nine patterns identified in \cref{section:defn_induction_pattern} as well as information about sample variance. As in the rest of the paper (with the exception of \cref{section:additional_seeds}) all data is for seed $1$. The stochasticity here is the $20000$ sampled tokens from each dataset. We report the mean and standard deviation of the percentages and head loadings in \cref{fig:pattern_dist_heatmap_l0_with_std}, \cref{fig:pertoken_pca_pc_loadings_l0_error} for layer $0$ and \cref{fig:pattern_dist_heatmap_l1_with_std}, \cref{fig:pertoken_pca_pc_loadings_l1_error} for layer $1$.

The conclusion is that the sample variance is sufficiently small to justify the interpretations given in the main text.

\begin{figure}[htpb]
    \centering
    \includegraphics[width=\textwidth]{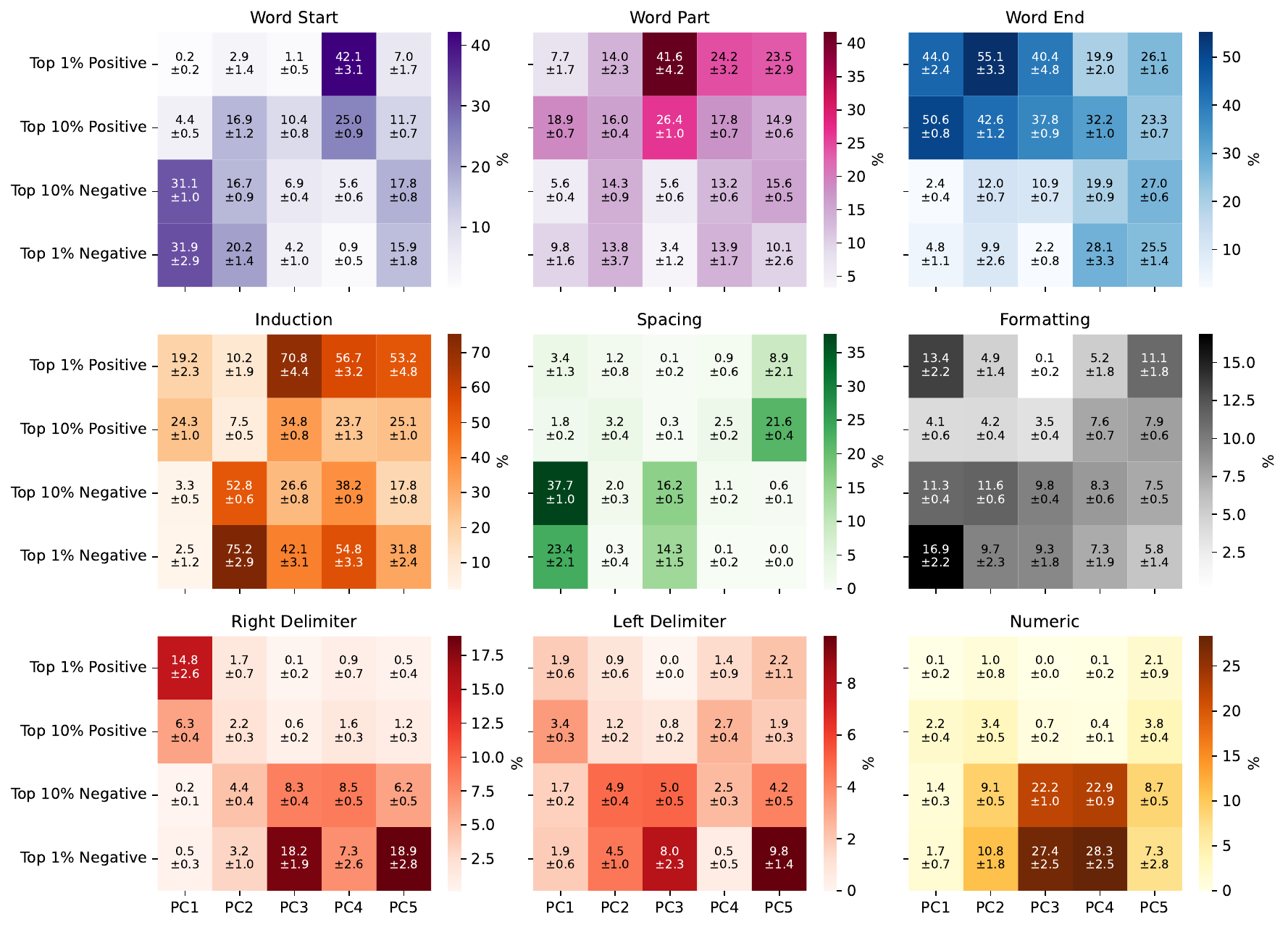}
    \caption{\textbf{Per-token susceptibility PCA for layer $0$ heads} showing mean and standard deviation of loadings of principal components on data patterns across $10$ independent draws of $20000$ tokens from each dataset.}
    \label{fig:pattern_dist_heatmap_l0_with_std}
\end{figure}

\begin{figure}[htpb]
    \centering
     \includegraphics[width=\textwidth]{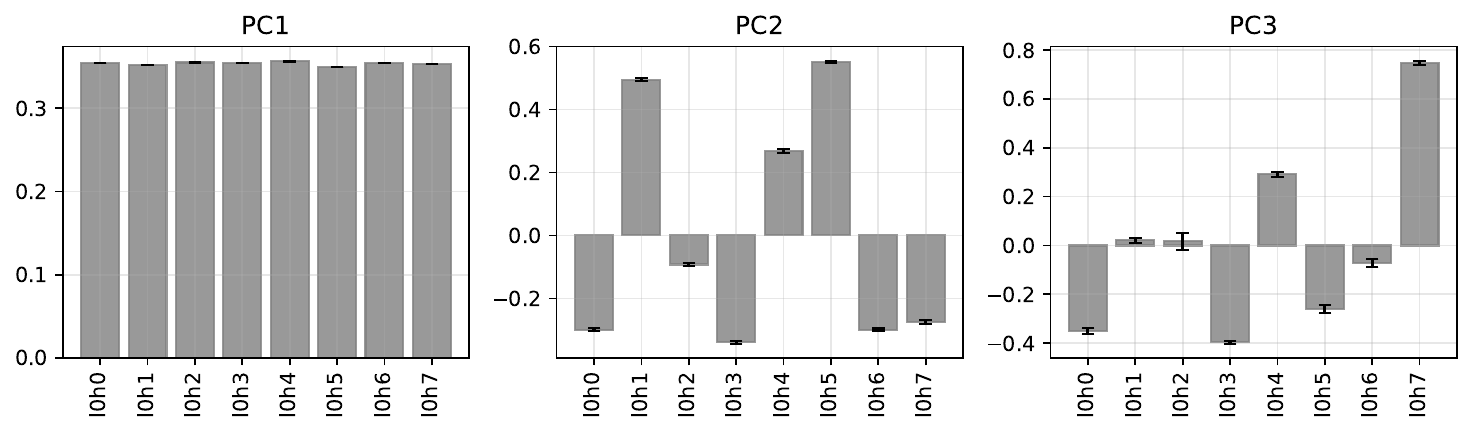}
    \caption{\textbf{Loadings of the top three principal components on layer $0$ attention heads}, showing mean and standard deviation across $10$ independent draws of $20000$ tokens from each dataset.}
    \label{fig:pertoken_pca_pc_loadings_l0_error}
\end{figure}

\begin{figure}[htpb]
    \centering
    \includegraphics[width=\textwidth]{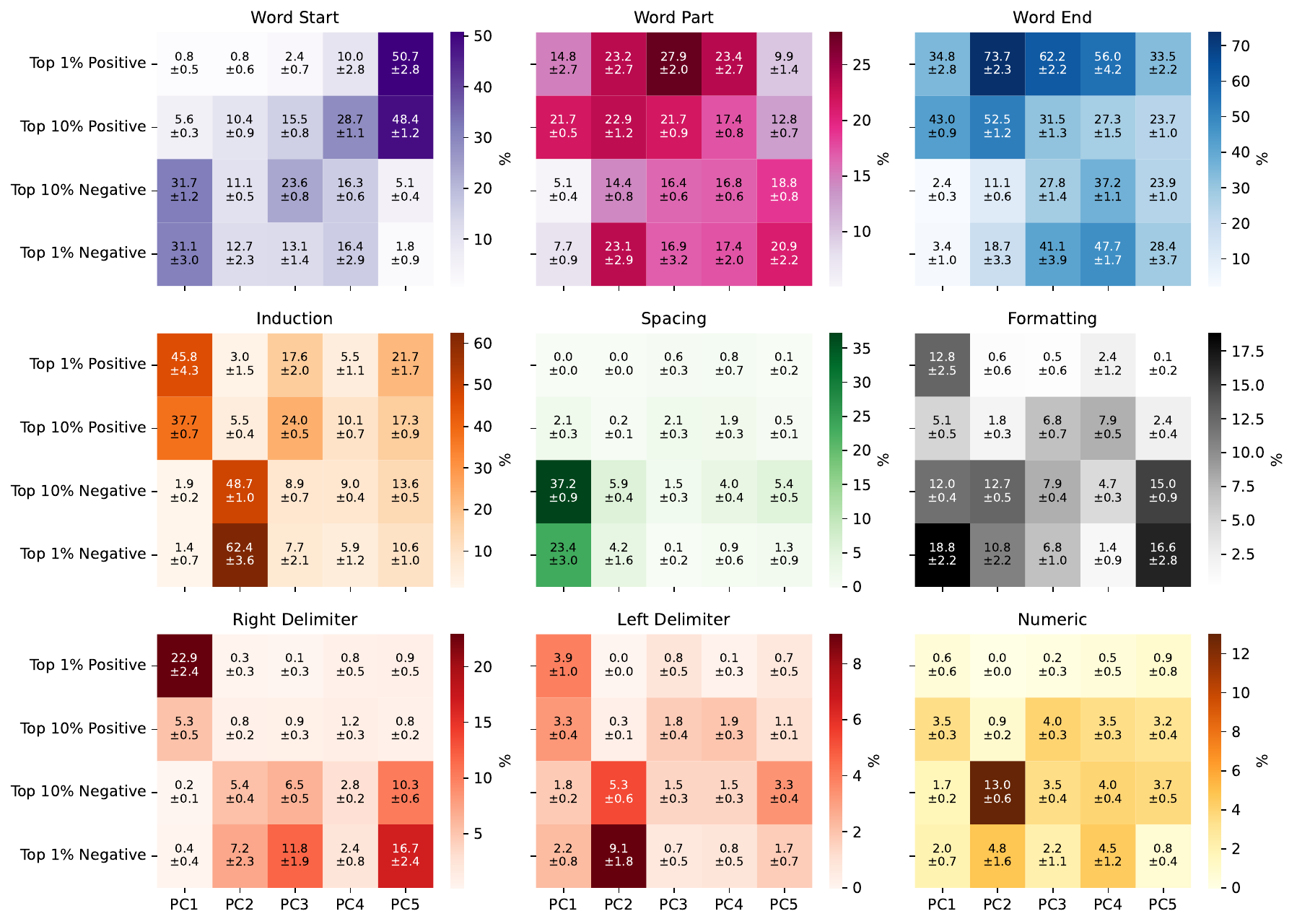}
    \caption{\textbf{Per-token susceptibility PCA for layer $1$ heads} showing mean and standard deviation of loadings of principal components on data patterns across $10$ independent draws of $20000$ tokens from each dataset.}
    \label{fig:pattern_dist_heatmap_l1_with_std}
\end{figure}

\begin{figure}[htpb]
    \centering
     \includegraphics[width=\textwidth]{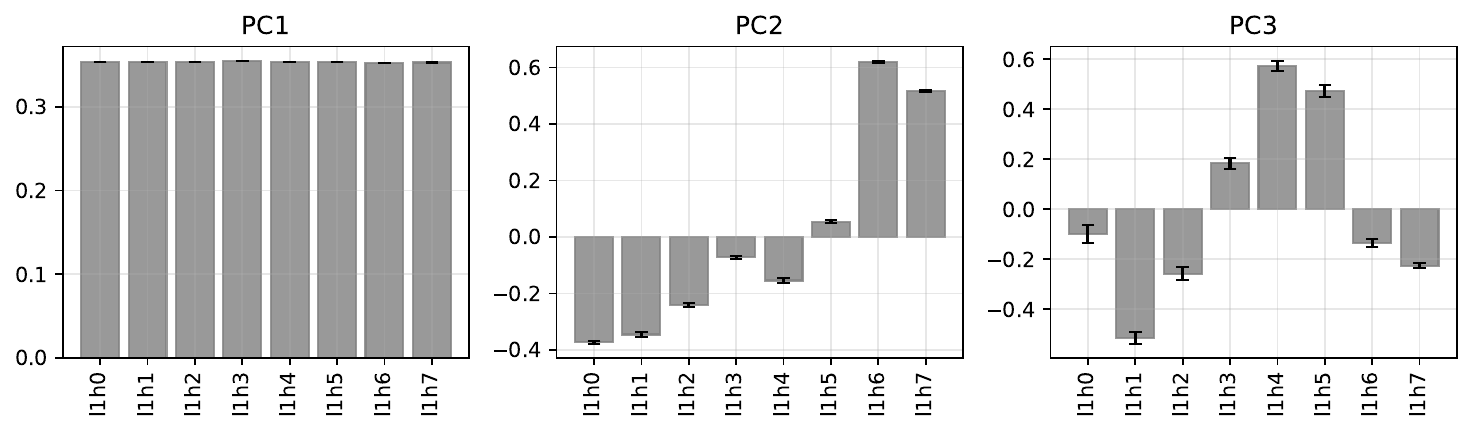}
    \caption{\textbf{Loadings of the top three principal components on layer $1$ attention heads}, showing mean and standard deviation across $10$ independent draws of $20000$ tokens from each dataset.}
    \label{fig:pertoken_pca_pc_loadings_l1_error}
\end{figure}

\section{Statement on use of LLMs}\label{appendix:llms}

LLMs were used in the course of this research to support literature review for related work as well as to generate some of the code used to implement experiments and to analyze and plot the data. All LLM-generated output was reviewed by a human author.

\ifanon\newpage
\section*{NeurIPS Paper Checklist}

\begin{enumerate}

\item {\bf Claims}
    \item[] Question: Do the main claims made in the abstract and introduction accurately reflect the paper's contributions and scope?
    \item[] Answer: \answerYes{} 
    \item[] Justification: All claims made in the abstract and introduction are explicitly addressed in the main text.
    \item[] Guidelines:
    \begin{itemize}
        \item The answer NA means that the abstract and introduction do not include the claims made in the paper.
        \item The abstract and/or introduction should clearly state the claims made, including the contributions made in the paper and important assumptions and limitations. A No or NA answer to this question will not be perceived well by the reviewers. 
        \item The claims made should match theoretical and experimental results, and reflect how much the results can be expected to generalize to other settings. 
        \item It is fine to include aspirational goals as motivation as long as it is clear that these goals are not attained by the paper. 
    \end{itemize}

\item {\bf Limitations}
    \item[] Question: Does the paper discuss the limitations of the work performed by the authors?
    \item[] Answer: \answerYes{} 
    \item[] Justification: see Section $6$ on limitations and future work.
    \item[] Guidelines:
    \begin{itemize}
        \item The answer NA means that the paper has no limitation while the answer No means that the paper has limitations, but those are not discussed in the paper. 
        \item The authors are encouraged to create a separate "Limitations" section in their paper.
        \item The paper should point out any strong assumptions and how robust the results are to violations of these assumptions (e.g., independence assumptions, noiseless settings, model well-specification, asymptotic approximations only holding locally). The authors should reflect on how these assumptions might be violated in practice and what the implications would be.
        \item The authors should reflect on the scope of the claims made, e.g., if the approach was only tested on a few datasets or with a few runs. In general, empirical results often depend on implicit assumptions, which should be articulated.
        \item The authors should reflect on the factors that influence the performance of the approach. For example, a facial recognition algorithm may perform poorly when image resolution is low or images are taken in low lighting. Or a speech-to-text system might not be used reliably to provide closed captions for online lectures because it fails to handle technical jargon.
        \item The authors should discuss the computational efficiency of the proposed algorithms and how they scale with dataset size.
        \item If applicable, the authors should discuss possible limitations of their approach to address problems of privacy and fairness.
        \item While the authors might fear that complete honesty about limitations might be used by reviewers as grounds for rejection, a worse outcome might be that reviewers discover limitations that aren't acknowledged in the paper. The authors should use their best judgment and recognize that individual actions in favor of transparency play an important role in developing norms that preserve the integrity of the community. Reviewers will be specifically instructed to not penalize honesty concerning limitations.
    \end{itemize}

\item {\bf Theory assumptions and proofs}
    \item[] Question: For each theoretical result, does the paper provide the full set of assumptions and a complete (and correct) proof?
    \item[] Answer: \answerYes{} 
    \item[] Justification: proofs are contained in Appendix A. 
    \item[] Guidelines:
    \begin{itemize}
        \item The answer NA means that the paper does not include theoretical results. 
        \item All the theorems, formulas, and proofs in the paper should be numbered and cross-referenced.
        \item All assumptions should be clearly stated or referenced in the statement of any theorems.
        \item The proofs can either appear in the main paper or the supplemental material, but if they appear in the supplemental material, the authors are encouraged to provide a short proof sketch to provide intuition. 
        \item Inversely, any informal proof provided in the core of the paper should be complemented by formal proofs provided in appendix or supplemental material.
        \item Theorems and Lemmas that the proof relies upon should be properly referenced. 
    \end{itemize}

    \item {\bf Experimental result reproducibility}
    \item[] Question: Does the paper fully disclose all the information needed to reproduce the main experimental results of the paper to the extent that it affects the main claims and/or conclusions of the paper (regardless of whether the code and data are provided or not)?
    \item[] Answer: \answerYes{} 
    \item[] Justification: detailed appendices contain all experimental details, except where they are contained in other papers which are appropriately referenced.
    \item[] Guidelines:
    \begin{itemize}
        \item The answer NA means that the paper does not include experiments.
        \item If the paper includes experiments, a No answer to this question will not be perceived well by the reviewers: Making the paper reproducible is important, regardless of whether the code and data are provided or not.
        \item If the contribution is a dataset and/or model, the authors should describe the steps taken to make their results reproducible or verifiable. 
        \item Depending on the contribution, reproducibility can be accomplished in various ways. For example, if the contribution is a novel architecture, describing the architecture fully might suffice, or if the contribution is a specific model and empirical evaluation, it may be necessary to either make it possible for others to replicate the model with the same dataset, or provide access to the model. In general. releasing code and data is often one good way to accomplish this, but reproducibility can also be provided via detailed instructions for how to replicate the results, access to a hosted model (e.g., in the case of a large language model), releasing of a model checkpoint, or other means that are appropriate to the research performed.
        \item While NeurIPS does not require releasing code, the conference does require all submissions to provide some reasonable avenue for reproducibility, which may depend on the nature of the contribution. For example
        \begin{enumerate}
            \item If the contribution is primarily a new algorithm, the paper should make it clear how to reproduce that algorithm.
            \item If the contribution is primarily a new model architecture, the paper should describe the architecture clearly and fully.
            \item If the contribution is a new model (e.g., a large language model), then there should either be a way to access this model for reproducing the results or a way to reproduce the model (e.g., with an open-source dataset or instructions for how to construct the dataset).
            \item We recognize that reproducibility may be tricky in some cases, in which case authors are welcome to describe the particular way they provide for reproducibility. In the case of closed-source models, it may be that access to the model is limited in some way (e.g., to registered users), but it should be possible for other researchers to have some path to reproducing or verifying the results.
        \end{enumerate}
    \end{itemize}

\item {\bf Open access to data and code}
    \item[] Question: Does the paper provide open access to the data and code, with sufficient instructions to faithfully reproduce the main experimental results, as described in supplemental material?
    \item[] Answer: \answerNo{} 
    \item[] Justification: Sufficient detail is included in the paper to reproduce the main results.
    \item[] Guidelines:
    \begin{itemize}
        \item The answer NA means that paper does not include experiments requiring code.
        \item Please see the NeurIPS code and data submission guidelines (\url{https://nips.cc/public/guides/CodeSubmissionPolicy}) for more details.
        \item While we encourage the release of code and data, we understand that this might not be possible, so “No” is an acceptable answer. Papers cannot be rejected simply for not including code, unless this is central to the contribution (e.g., for a new open-source benchmark).
        \item The instructions should contain the exact command and environment needed to run to reproduce the results. See the NeurIPS code and data submission guidelines (\url{https://nips.cc/public/guides/CodeSubmissionPolicy}) for more details.
        \item The authors should provide instructions on data access and preparation, including how to access the raw data, preprocessed data, intermediate data, and generated data, etc.
        \item The authors should provide scripts to reproduce all experimental results for the new proposed method and baselines. If only a subset of experiments are reproducible, they should state which ones are omitted from the script and why.
        \item At submission time, to preserve anonymity, the authors should release anonymized versions (if applicable).
        \item Providing as much information as possible in supplemental material (appended to the paper) is recommended, but including URLs to data and code is permitted.
    \end{itemize}

\item {\bf Experimental setting/details}
    \item[] Question: Does the paper specify all the training and test details (e.g., data splits, hyperparameters, how they were chosen, type of optimizer, etc.) necessary to understand the results?
    \item[] Answer: \answerYes{} 
    \item[] Justification: These details are included in the appendices.
    \item[] Guidelines:
    \begin{itemize}
        \item The answer NA means that the paper does not include experiments.
        \item The experimental setting should be presented in the core of the paper to a level of detail that is necessary to appreciate the results and make sense of them.
        \item The full details can be provided either with the code, in appendix, or as supplemental material.
    \end{itemize}

\item {\bf Experiment statistical significance}
    \item[] Question: Does the paper report error bars suitably and correctly defined or other appropriate information about the statistical significance of the experiments?
    \item[] Answer: \answerNo{} 
    \item[] Justification: variability in the results stems from the randomness in initialization and training of the transformer, mini-batches and injected noise used for SGLD sampling, and in the per-token susceptibility PCA analysis in the selection of samples to form the rows of the data matrix. We do report standard error in \cref{fig:pc3_dataset_coefficients}.
    \item[] Guidelines:
    \begin{itemize}
        \item The answer NA means that the paper does not include experiments.
        \item The authors should answer "Yes" if the results are accompanied by error bars, confidence intervals, or statistical significance tests, at least for the experiments that support the main claims of the paper.
        \item The factors of variability that the error bars are capturing should be clearly stated (for example, train/test split, initialization, random drawing of some parameter, or overall run with given experimental conditions).
        \item The method for calculating the error bars should be explained (closed form formula, call to a library function, bootstrap, etc.)
        \item The assumptions made should be given (e.g., Normally distributed errors).
        \item It should be clear whether the error bar is the standard deviation or the standard error of the mean.
        \item It is OK to report 1-sigma error bars, but one should state it. The authors should preferably report a 2-sigma error bar than state that they have a 96\% CI, if the hypothesis of Normality of errors is not verified.
        \item For asymmetric distributions, the authors should be careful not to show in tables or figures symmetric error bars that would yield results that are out of range (e.g. negative error rates).
        \item If error bars are reported in tables or plots, The authors should explain in the text how they were calculated and reference the corresponding figures or tables in the text.
    \end{itemize}

\item {\bf Experiments compute resources}
    \item[] Question: For each experiment, does the paper provide sufficient information on the computer resources (type of compute workers, memory, time of execution) needed to reproduce the experiments?
    \item[] Answer: \answerYes{} 
    \item[] Justification: Computational requirements for the experiments is reported in \cref{section:method_susceptibilities} and \cref{section:method_per_token_susceptibilities}.
    \item[] Guidelines:
    \begin{itemize}
        \item The answer NA means that the paper does not include experiments.
        \item The paper should indicate the type of compute workers CPU or GPU, internal cluster, or cloud provider, including relevant memory and storage.
        \item The paper should provide the amount of compute required for each of the individual experimental runs as well as estimate the total compute. 
        \item The paper should disclose whether the full research project required more compute than the experiments reported in the paper (e.g., preliminary or failed experiments that didn't make it into the paper). 
    \end{itemize}
    
\item {\bf Code of ethics}
    \item[] Question: Does the research conducted in the paper conform, in every respect, with the NeurIPS Code of Ethics \url{https://neurips.cc/public/EthicsGuidelines}?
    \item[] Answer: \answerYes{} 
    \item[] Justification: All authors have read the ethics guidelines.
    \item[] Guidelines:
    \begin{itemize}
        \item The answer NA means that the authors have not reviewed the NeurIPS Code of Ethics.
        \item If the authors answer No, they should explain the special circumstances that require a deviation from the Code of Ethics.
        \item The authors should make sure to preserve anonymity (e.g., if there is a special consideration due to laws or regulations in their jurisdiction).
    \end{itemize}

\item {\bf Broader impacts}
    \item[] Question: Does the paper discuss both potential positive societal impacts and negative societal impacts of the work performed?
    \item[] Answer: \answerNA{} 
    \item[] Justification: This paper presents foundational work whose goal is to advance the understanding of deep learning. There are no indirect impacts that we feel should be specifically highlighted here.
    \item[] Guidelines:
    \begin{itemize}
        \item The answer NA means that there is no societal impact of the work performed.
        \item If the authors answer NA or No, they should explain why their work has no societal impact or why the paper does not address societal impact.
        \item Examples of negative societal impacts include potential malicious or unintended uses (e.g., disinformation, generating fake profiles, surveillance), fairness considerations (e.g., deployment of technologies that could make decisions that unfairly impact specific groups), privacy considerations, and security considerations.
        \item The conference expects that many papers will be foundational research and not tied to particular applications, let alone deployments. However, if there is a direct path to any negative applications, the authors should point it out. For example, it is legitimate to point out that an improvement in the quality of generative models could be used to generate deepfakes for disinformation. On the other hand, it is not needed to point out that a generic algorithm for optimizing neural networks could enable people to train models that generate Deepfakes faster.
        \item The authors should consider possible harms that could arise when the technology is being used as intended and functioning correctly, harms that could arise when the technology is being used as intended but gives incorrect results, and harms following from (intentional or unintentional) misuse of the technology.
        \item If there are negative societal impacts, the authors could also discuss possible mitigation strategies (e.g., gated release of models, providing defenses in addition to attacks, mechanisms for monitoring misuse, mechanisms to monitor how a system learns from feedback over time, improving the efficiency and accessibility of ML).
    \end{itemize}
    
\item {\bf Safeguards}
    \item[] Question: Does the paper describe safeguards that have been put in place for responsible release of data or models that have a high risk for misuse (e.g., pretrained language models, image generators, or scraped datasets)?
    \item[] Answer: \answerNA{} 
    \item[] Justification: No data or models to be released.
    \item[] Guidelines:
    \begin{itemize}
        \item The answer NA means that the paper poses no such risks.
        \item Released models that have a high risk for misuse or dual-use should be released with necessary safeguards to allow for controlled use of the model, for example by requiring that users adhere to usage guidelines or restrictions to access the model or implementing safety filters. 
        \item Datasets that have been scraped from the Internet could pose safety risks. The authors should describe how they avoided releasing unsafe images.
        \item We recognize that providing effective safeguards is challenging, and many papers do not require this, but we encourage authors to take this into account and make a best faith effort.
    \end{itemize}

\item {\bf Licenses for existing assets}
    \item[] Question: Are the creators or original owners of assets (e.g., code, data, models), used in the paper, properly credited and are the license and terms of use explicitly mentioned and properly respected?
    \item[] Answer: \answerYes{} 
    \item[] Justification: All code and data that has been previously created has been properly credited.
    \item[] Guidelines:
    \begin{itemize}
        \item The answer NA means that the paper does not use existing assets.
        \item The authors should cite the original paper that produced the code package or dataset.
        \item The authors should state which version of the asset is used and, if possible, include a URL.
        \item The name of the license (e.g., CC-BY 4.0) should be included for each asset.
        \item For scraped data from a particular source (e.g., website), the copyright and terms of service of that source should be provided.
        \item If assets are released, the license, copyright information, and terms of use in the package should be provided. For popular datasets, \url{paperswithcode.com/datasets} has curated licenses for some datasets. Their licensing guide can help determine the license of a dataset.
        \item For existing datasets that are re-packaged, both the original license and the license of the derived asset (if it has changed) should be provided.
        \item If this information is not available online, the authors are encouraged to reach out to the asset's creators.
    \end{itemize}

\item {\bf New assets}
    \item[] Question: Are new assets introduced in the paper well documented and is the documentation provided alongside the assets?
    \item[] Answer: \answerNA{} 
    \item[] Justification: No assets to be released.
    \item[] Guidelines:
    \begin{itemize}
        \item The answer NA means that the paper does not release new assets.
        \item Researchers should communicate the details of the dataset/code/model as part of their submissions via structured templates. This includes details about training, license, limitations, etc. 
        \item The paper should discuss whether and how consent was obtained from people whose asset is used.
        \item At submission time, remember to anonymize your assets (if applicable). You can either create an anonymized URL or include an anonymized zip file.
    \end{itemize}

\item {\bf Crowdsourcing and research with human subjects}
    \item[] Question: For crowdsourcing experiments and research with human subjects, does the paper include the full text of instructions given to participants and screenshots, if applicable, as well as details about compensation (if any)? 
    \item[] Answer: \answerNA{} 
    \item[] Justification: This paper does not involve crowdsourcing or human subjects.
    \item[] Guidelines:
    \begin{itemize}
        \item The answer NA means that the paper does not involve crowdsourcing nor research with human subjects.
        \item Including this information in the supplemental material is fine, but if the main contribution of the paper involves human subjects, then as much detail as possible should be included in the main paper. 
        \item According to the NeurIPS Code of Ethics, workers involved in data collection, curation, or other labor should be paid at least the minimum wage in the country of the data collector. 
    \end{itemize}

\item {\bf Institutional review board (IRB) approvals or equivalent for research with human subjects}
    \item[] Question: Does the paper describe potential risks incurred by study participants, whether such risks were disclosed to the subjects, and whether Institutional Review Board (IRB) approvals (or an equivalent approval/review based on the requirements of your country or institution) were obtained?
    \item[] Answer: \answerNA{} 
    \item[] Justification: See above.
    \item[] Guidelines:
    \begin{itemize}
        \item The answer NA means that the paper does not involve crowdsourcing nor research with human subjects.
        \item Depending on the country in which research is conducted, IRB approval (or equivalent) may be required for any human subjects research. If you obtained IRB approval, you should clearly state this in the paper. 
        \item We recognize that the procedures for this may vary significantly between institutions and locations, and we expect authors to adhere to the NeurIPS Code of Ethics and the guidelines for their institution. 
        \item For initial submissions, do not include any information that would break anonymity (if applicable), such as the institution conducting the review.
    \end{itemize}

\item {\bf Declaration of LLM usage}
    \item[] Question: Does the paper describe the usage of LLMs if it is an important, original, or non-standard component of the core methods in this research? Note that if the LLM is used only for writing, editing, or formatting purposes and does not impact the core methodology, scientific rigorousness, or originality of the research, declaration is not required.
    \item[] Answer: \answerNA{} 
    \item[] Justification: The core research does not involve LLMs.
    \item[] Guidelines:
    \begin{itemize}
        \item The answer NA means that the core method development in this research does not involve LLMs as any important, original, or non-standard components.
        \item Please refer to our LLM policy (\url{https://neurips.cc/Conferences/2025/LLM}) for what should or should not be described.
    \end{itemize}

\end{enumerate}\fi

\end{document}